\documentclass[10pt,journal,compsoc]{IEEEtran}

\usepackage{booktabs}
\usepackage{amsthm}
\theoremstyle{plain}
\usepackage[compress]{cite}
\usepackage{graphics}
\usepackage{epstopdf}
\usepackage{epsfig}
\usepackage{lineno,hyperref}
\usepackage{array}
\usepackage{amsmath,amssymb}
\usepackage{mathrsfs}
\usepackage{subfigure} 
\usepackage{multirow}
\usepackage{xcolor}
\usepackage{bm}
\usepackage{eqparbox}
\usepackage{algorithm}  
\usepackage{algorithmic}
\usepackage{graphicx}
\usepackage{amsfonts}
\usepackage{pifont}
\usepackage{enumerate}
\usepackage{indentfirst}
\usepackage{url}
\usepackage{ragged2e}

\newtheorem{myTheo}{Theorem}
\newtheorem{myCorollary}{Corollary}
\newtheorem{myLemma}{Lemma}
\newtheorem{myAssumption}{Assumption}
\newtheorem{myDefinition}{Definition}
\newtheorem{myProposition}{Proposition}

\usepackage{bm}
\usepackage{bbm}
\usepackage{pifont}
\newcommand{\xmark}{\ding{55}}
\usepackage{algorithm}  
\usepackage{algorithmic}

\allowdisplaybreaks[4]

\begin{document}

\title{Matrix Completion via Non-Convex Relaxation and Adaptive Correlation Learning}

\author{Xuelong Li, \IEEEmembership{~Fellow,~IEEE}, Hongyuan Zhang, and Rui Zhang, \IEEEmembership{~Member,~IEEE}



\thanks{
    The authors are with the School of Artificial Intelligence, OPtics and ElectroNics (iOPEN), Northwestern Polytechnical University, Xi'an 710072, P.R. China. 
    They are also with the Key Laboratory of Intelligent Interaction and Applications (Northwestern Polytechnical University), Ministry of Industry and Information Technology, Xi'an 710072, P. R. China.
}

\thanks{
    This work is supported by The National Natural Science Foundation of China
(No. 61871470).
}


\thanks{Corresponding author: R. Zhang and X. Li}

\thanks{E-mail: li@nwpu.edu.cn; hyzhang98@gmail.com; ruizhang8633@gmail.com}

}

\markboth{IEEE TRANSACTIONS ON pattern analysis and machine intelligence}{Li \MakeLowercase{\textit{et al.}}: Matrix Completion via Non-Convex Relaxation and Adaptive Correlation Learning}

\IEEEtitleabstractindextext{

\begin{abstract}
    \justifying
    The existing matrix completion methods focus on optimizing the relaxation of rank function 
    such as nuclear norm, Schatten-$p$ norm, \textit{etc}. 
    They usually need many iterations to converge. 
    Moreover, only the low-rank property of matrices is utilized in most 
    existing models and several methods that incorporate other knowledge 
    are quite time-consuming in practice. 
    To address these issues, we propose a novel non-convex surrogate that 
    can be optimized by closed-form solutions, such that it empirically 
    converges within dozens of iterations. 
    Besides, the optimization is parameter-free and the convergence is proved. 
    Compared with the relaxation of rank, 
    the surrogate is motivated by optimizing an upper-bound of rank. 
    We theoretically validate that it is equivalent to the existing 
    matrix completion models. 
    Besides the low-rank assumption, we intend to exploit the column-wise 
    correlation for matrix completion, 
    and thus an adaptive correlation learning, which is scaling-invariant, 
    is developed.
    More importantly, after incorporating the correlation learning, 
    the model can be still solved by closed-form solutions such that 
    it still converges fast. 
    Experiments show the effectiveness of the non-convex surrogate and 
    adaptive correlation learning. 
\end{abstract}

\begin{IEEEkeywords}
    Artificial Intelligence, Pattern Recognition, Matrix Completion, Non-Convex Surrogate, Adaptive Correlation Learning, 
    Parameter-Free Optimization.
\end{IEEEkeywords}

}

\maketitle

\section{Introduction}
Matrix is a fundamental element in machine learning and the low-rank property 
of matrix has been applied in many practical applications \cite{image_restore,MC-literature-1,application_radar}. 
Low-rank matrix completion (LRMC) \cite{nuclear,nuclear-2}, 
aiming to recover a low-rank matrix according to the 
observed entries,
plays an important role in many fields, such as 
image recovery \cite{application_inpainting,ImageRecovery}, 
recommendation systems \cite{application_recommendation}, 
robust principal component analysis \cite{RPCA,TensorRPCA,RPCA-Application}, 
multi-task learning \cite{MTL-1,MTL-2,FMC-MTL}, \textit{etc}.
The main motivation behind low-rank is from the observation that a part of 
principal components of a matrix usually contain most of the information, 
especially in optical imagery (shown in Figure \ref{figure_low_rank}). 
In other words, the distribution of singular values of an image matrix is often 
heavy-tailed.

The original LRMC model \cite{nuclear} intends to optimize the nuclear norm 
of matrix, the convex envelope of rank. To improve the performance, 
plenty of works focus on optimizing the nuclear norm and its variants such 
as truncated version \cite{TNNR}, weighted version \cite{WNNM}, 
\textit{etc}. 
As the nuclear norm is the $\ell_1$-norm of singular values, 
the nuclear norm relaxed problem can be generalized to the Schatten-$p$ norm. 
With $0 < p < 1$, the Schatten-$p$ norm approximates rank 
better than the nuclear norm. 
It should be emphasized that Schatten-$p$ is not convex when $0 < p < 1$.
In particular, solutions of LRMC with Schatten-1/2 and Schatten-2/3 are 
derived in \cite{Sp-norm-solution-1,Sp-norm-solution-2,Sp-norm-solution-3}.
Additionally, several different surrogates are 
also developed to obtain a better approximation of 
rank \cite{log,extra_surrogate}. 
Besides, diverse factorization models are derived from these surrogates. 
For instance, the factored nuclear norm \cite{F-nuclear-application-1,F-nulcear-application-2} 
transforms it into two Frobenius norm terms while RegL1 \cite{RegL1} 
tries to improve the robustness of the noisy model. 
Factored group-sparse regularization \cite{FGSR} proves that the Schatten-p 
norm is equivalent to the sum of two group-sparse norms. 
Bilinear model \cite{bilinear} shows that the Schatten-$p$ norm can be 
converted into nuclear norms of two factored matrices. 
To improve the results of matrix completion, 
the models proposed in \cite{mc_graph,mc_graph_1,mc_graph_2} incorporate the similarity as 
the prior information. 
However, the similarity is given as the prior information such that 
the model fails to work on general cases. 
Besides, several works \cite{LRFD,S3LR} focus on how to integrate 
various kinds of information. The main barrier of these hybrid models 
is inefficient optimization. 
They usually consume significant amounts of time to train, 
which results in unavailability in practice.

To optimize the proposed models, 
several optimization techniques are applied such as the 
semidefinite programming (\textit{SDP}) \cite{nuclear}, 
augmented Lagrange multiplier method (\textit{ALM}) \cite{RegL1}, 
alternative direction method of multipliers 
(\textit{ADMM}) \cite{ADMM}, 
re-weighted method \cite{re-weighted-1,re-weighted-2}, \textit{etc}. 
SDP is time-consuming especially when $m$ and $n$ are not tiny values 
(\textit{e.g.}, $m = n = 100$) \cite{survey}. 
ALM \cite{ALM} and ADMM \cite{ADMM} are the most popular methods 
in matrix completion but it needs lots of iterations to converge. 
To accelerate the optimization, auxiliary variables are frequently 
introduced \cite{RegL1,FGSR} and the linearized ADMM \cite{LADMM} 
is widely applied since the direct subproblem of ADMM may have no 
closed-form solution. 
Non-factored models usually depend on singular value decomposition 
(\textit{SVD}) which causes computational complexity $O(m n^2)$ 
per iteration. 
Factored models usually need $O(m n d)$ time per iteration where 
$d$ is the column number of factored matrices. 
For the noisy extension, the proximal gradient method is widely used 
to solve the non-smooth term, \textit{i.e.}, 
the nuclear norm (denoted by $\|\cdot\|_*$). 
However, $d$ is hard to set and experiments show that smaller 
$d$ may lead to slower convergence in some cases. 
In other words, ADMM requires more iterations to converge, 
which implies expensive costs.

In sum, the existing LRMC models rely on parametric algorithms 
(\textit{e.g.,} gradient-based methods, ADMM-based methods, \textit{etc.}),
which require lots of iterations to converge. 
Additionally, most of them only focus on the low-rank property.
Different from the existing models, 
we propose a  model with a novel \textbf{N}on-\textbf{C}onvex surrogate and \textbf{A}daptive cor\textbf{R}elation \textbf{L}earning (\textit{NCARL}) for LRMC problem, 
to achieve faster convergence and exploit the hidden information of the matrix.
Besides the low-rank assumption, NCARL incorporates an adaptive correlation learning mechanism to exploit correlation and
mines the potential information column-wisely.
The main contributions are listed as follows:
\begin{itemize}
    \item We aim to optimize an upper-bound of ${\rm rank}(\cdot)$ via the full-rank factorization, 
            which provides a novel non-convex surrogate. 
            Compared with other factored methods, our model does not need the initial rank $d$. 
            Surprisingly, it can be solved by closed-form solutions without linearization and auxiliary variables,
            such that its optimization is totally parameter-free.
    \item Besides the low-rank assumption, the potential information of 
            columns is exploited by our model via learning 
            column-wise correlation adaptively. 
            Owing to the smooth surrogate, 
            the model still has closed-form solutions after incorporating 
            the adaptive correlation learning, 
            which implies the two parts are compatible.
    \item The proposed algorithm usually converges within 20 iterations such 
            that it is competitive in the terms of efficiency compared with factored models, 
            even though our model does not need the initial rank.
            Although factored models seem to need less time per iteration, 
            they require a large number of iterations to converge.
            Experiments support the computational efficiency of our model.
\end{itemize}

\subsection{Notations} 
In this paper, $\bm m^i$ and $\bm m_j$ denote the $i$-th row and $j$-th column of $M$, respectively. 
$M^\dag$ is the Moore-Penrose pseudo-inverse. 
$\mathcal{R}(M)$ represents the space spanned by columns of $M$.
${\rm diag}(\bm m)$ represents the diagonal matrix with diagonal entries $\bm m$. 
$\mathbb{S}_{+}^n$ and $\mathbb{S}_{++}^n$ denote the set of positive semi-definite and positive definite $n \times n$ matrices, respectively. 
$\odot$ represents the Hadamard product. Without additional statements, given a matrix $M \in \mathbb R^{m \times n}$, we assume $m \geq n$. Given a square matrix $Q \in \mathbb R^{n \times n}$ and non-zero binary vectors $\bm p, \bm q = \{0, 1\}^n$ where $\|\bm p\|_0 = k_1$ and $\|\bm q\|_0 = k_2$, $[Q]_{\bm p, \bm q} \in \mathbb R^{k_1 \times k_2}$ represents the sub-matrix where the $i$-th row and $j$-th column are deleted from $Q$ if $p_i = 0$ and $q_j = 0$. 
Specially, for a vector $\bm v$, $[\bm v]_{\bm p}$ represents the sub-vector that removes the $i$-th entry from $\bm v$ if $\bm v_i = 0$. $\textbf{1}$ denotes the vector whose entries are all 1 and $\bar {\bm p} = \textbf{1} - \bm p$. 
$\ell_{2,0}$-norm and $\ell_{2,1}$-norm of $M$ are respectively defined as 
$\|M\|_{2,0} = \sum_{i=1}^m \mathbbm{1}[\|\bm m^i\|_2 \neq 0]$
and $\|M\|_{2,1} = \sum_{i=1}^m \|\bm m^i\|_2$ where $\bm m^i$ is the 
$i$-th row of $M$. 
All proofs are summarized in appendix.

\begin{figure}[t]
    \centering
    \subfigure[One $320 \times 240$ image]{
        \includegraphics[width=0.47\linewidth]{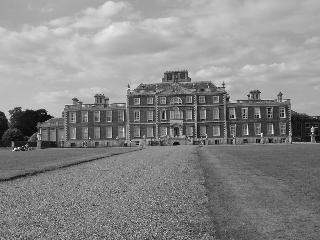}
    }
    \subfigure[Distribution of singular values]{
        \includegraphics[width=0.47\linewidth]{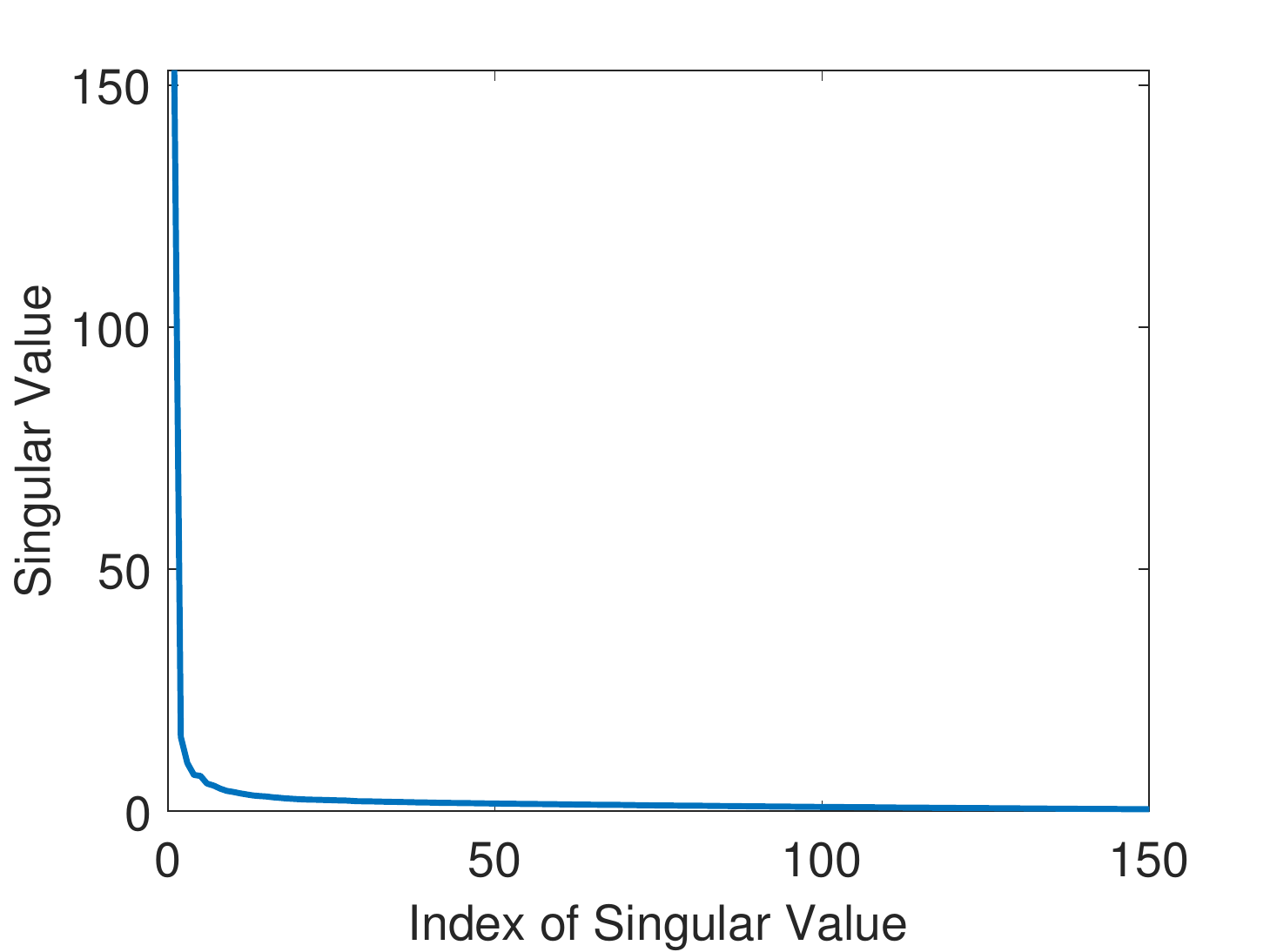}
    }
    \caption{An illustration of singular values of a natural image.}
    \label{figure_low_rank}
\end{figure}

\section{Related Work}
First, we provide the formal definition of LRMC. 
Suppose that we have observed some entries denoted by a matrix 
$M \in \mathbb{R}^{m \times n}$, where indexes are represented by $\Omega$. 
Particularly, $M_{ij} = 0$ if $(i, j) \notin \Omega$.
The known LRMC attempts to recover a low-rank matrix $X$ 
from the observations. 
LRMC can be formulated as 
\begin{equation}
    \min \limits_{X} {\rm rank}(X), ~~~~ s.t.~ X_{ij} = M_{ij} ~~ \forall (i, j) \in \Omega .
\end{equation}
Assume that $m \geq n$ holds, then we could rewrite the constraint as a concise matrix form via introducing a filter matrix P as 
\begin{equation}
    P_{ij} = 
    \left \{
    \begin{array}{l l}
        1, & (i, j) \in \Omega ; \\
        0, & (i, j) \notin \Omega .
    \end{array}
    \right .
\end{equation}
Accordingly, the LRMC problem can be rewritten as  
\begin{equation} \label{obj_rank}
    \min \limits_{X} {\rm rank}(X), ~~~~ s.t. ~~ X \odot P = M ,
\end{equation}
where $\odot$ represents the Hadamard product.
Instead of solving the NP-hard problem caused by ${\rm rank}(\cdot)$, 
the existing models \cite{nuclear,Schatten-p,FGSR} aim to optimize a relaxed function $\varphi(\cdot)$ 
of ${\rm rank}(\cdot)$ such that the objective is converted into
\begin{equation}
    \min \limits_{X} \varphi (X), ~~~~ s.t. ~ X \odot P = M .
\end{equation}
For instance, the classical LRMC model \cite{nuclear} uses the nuclear norm, $\|\cdot\|_*$, 
as $\varphi(\cdot)$. 
Max norm \cite{RecoveryBounds-2} is also investigated for LRMC.
As $\|X\|_* = \sum_{i=1} \sigma_i$ where $\sigma_i$ 
represents the singular value, an important variant is to employ 
the Schatten-$p$ norm \cite{Schatten-p,FGSR,Sp-norm-solution-1,Sp-norm-solution-2,Sp-norm-solution-3}, 
$\|X\|_{S_p} = (\sum_{i=1} \sigma_i^p)^{\frac{1}{p}}$.
Based on these surrogates, some models design more complicated surrogates, 
such as truncated nuclear norm \cite{TNNR}, weighted nuclear norm \cite{WNNM},
\textit{etc}.
Besides, several models \cite{S3LR,LRFD,mc_graph} also integrate other 
kinds of information. S$^3$LR introduces the popular subspace exploitation 
into the mechanism, while the prior graph information is utilized in the 
model proposed by \cite{mc_graph}. 
To accelerate the optimization and avoid searching the rank of matrix in 
all possible values, factored models \cite{F-nuclear-application-1,F-nulcear-application-2,FGSR} 
have been widely investigated.
The core idea of factored models is to assume that recovered matrix 
can be factored as two small matrices. For example, the factored nuclear norm
model \cite{F-nuclear-application-1} is defined as 
\begin{equation}
    \min \limits_{X = AB, X \odot P = M} \frac{1}{2}(\|X\|_*) \Leftrightarrow
    \min \limits_{AB \odot P = M} \frac{1}{2}(\|A\|_F^2 + \|B\|_F^2) .
\end{equation}
FGSR \cite{FGSR} aims to factorize Schatten-1/2 and Schatten-2/3 as 
two convex surrogates instead.

A well-known extension of LRMC is the noisy version.
If the contaminated case regarding polluted observations is considered, 
\textit{i.e.}, $M_{ij} = (X_{ij})_* + \varepsilon_{ij}$ where $(X_{ij})_*$ and $\varepsilon_{ij}$ denote the true value and noise respectively, the recovery task indicates the simultaneous minimization of residuals and rank,
\begin{equation}
    \min \limits_{X} \|X \odot P - M\|_F^2 + \gamma \cdot {\rm \varphi}(X) ,
\end{equation}
where $\gamma$ is the hyper-parameter to leverage residuals and rank. 
Some works \cite{RegL1,TNNR} focus on improving the noisy model as well.
Specifically, RegL1 \cite{RegL1} utilizes $\ell_1$-norm to replace the 
Frobenius norm and ensure $\{\varepsilon_{ij}\}_{i,j}$ sparse.
To retrieve a simple discussion,  we only focus on the noiseless case at 
first and the model can be easily extended into the noisy case.

\section{Problem Reformulation} \label{section_mc}

Unlike most LRMC models, we do not relax ${\rm rank}(\cdot)$ as the nuclear norm. 
Instead,  
for an arbitrary matrix $X \in \mathbb R^{m \times n}$, we can apply full-rank factorization and have
\begin{equation}
    X = W^T U^T ,
\end{equation}
where $U \in \mathbb R^{n \times n}$ is an orthogonal matrix. Note that the full-rank factorization is not unique. On the one hand, ${\rm rank}(X) = {\rm rank}(W) \leq \|W\|_{2, 0}$ since ${\rm rank}(W) \leq \|W\|_{2,0}$ holds for any $W$. In other words, $\ell_{2,0}$-norm is an upper-bound of ${\rm rank}(\cdot)$. Therefore, we can optimize the following upper-bound as the objective function
\begin{equation}
    \label{obj_20}
    \begin{split}
     \min \limits_{W, U} & ~ \|W\|_{2,0}, ~~~~ s.t. ~~ (W^T U^T) \odot P = M, U^T U = I .
    \end{split}
\end{equation}
On the other hand, $\{\bm u_i\}_{i=1}^n$ denotes an orthonormal basis 
while $\bm w_j$ can be regarded as the coordinate of $(\bm x^j)^T$ under 
$\{\bm u_i\}_{i=1}^n$. 
The low-rank property of $X$ indicates that only a few basis vectors are activated. 
In sum, $\|W\|_{2,0}$ is a rational replacement of ${\rm rank}(X)$.
The following theorem rigorously shows the connection between the following 
problem and the original one. 
\begin{myTheo} \label{theo_equivalence_20_rank}
    Problem (\ref{obj_20}) is equivalent to problem (\ref{obj_rank}). 
    In other words, $X_* = W_*^T U_*^T$ where $X_*$, $W_*$, and $U_*$ are 
    the optimal solutions of two problems, respectively.
\end{myTheo}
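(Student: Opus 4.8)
The plan is to show that problems (\ref{obj_rank}) and (\ref{obj_20}) have the same optimal objective value and that their minimizers correspond through the identity $X = W^T U^T$. I would establish this by proving two matching inequalities between the optimal values: that every feasible point of (\ref{obj_20}) induces a feasible point of (\ref{obj_rank}) with no larger rank, and conversely that an optimal $X_*$ of (\ref{obj_rank}) can be refactored into a feasible pair of (\ref{obj_20}) whose $\ell_{2,0}$-norm equals ${\rm rank}(X_*)$. Combining the two inequalities will pin the optima to the same value and force the claimed correspondence.

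For the easy direction, I would take any feasible $(W,U)$ of (\ref{obj_20}). Since $U^T U = I$, the matrix $U$ is full-rank, so ${\rm rank}(W^T U^T) = {\rm rank}(W)$. Setting $X = W^T U^T$, the constraint $(W^T U^T)\odot P = M$ is precisely $X \odot P = M$, hence $X$ is feasible for (\ref{obj_rank}). Together with the inequality ${\rm rank}(W) \le \|W\|_{2,0}$ already quoted in the text, this gives ${\rm rank}(X) = {\rm rank}(W) \le \|W\|_{2,0}$, and taking the infimum over feasible pairs shows that the optimum of (\ref{obj_rank}) is at most the optimum of (\ref{obj_20}).

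For the converse direction, let $X_*$ be optimal for (\ref{obj_rank}) with $r = {\rm rank}(X_*)$. The construction I would use is to pick an orthonormal basis $\{\bm u_1,\dots,\bm u_r\}$ of the column space $\mathcal{R}(X_*^T) \subseteq \mathbb R^n$, extend it to a full orthonormal basis $\{\bm u_1,\dots,\bm u_n\}$ of $\mathbb R^n$, set $U = [\bm u_1,\dots,\bm u_n]$ so that $U^T U = I$, and define $W = U^T X_*^T$. Then $W^T U^T = X_* U U^T = X_*$, so the constraint of (\ref{obj_20}) is satisfied. The key observation is that for every $i > r$ the vector $\bm u_i$ is orthogonal to $\mathcal{R}(X_*^T)$, hence to each column of $X_*^T$, so the $i$-th row $\bm w^i = \bm u_i^T X_*^T$ vanishes; therefore $W$ has at most $r$ nonzero rows and $\|W\|_{2,0} \le r = {\rm rank}(X_*)$. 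This exhibits a feasible pair of (\ref{obj_20}) whose objective does not exceed the optimum of (\ref{obj_rank}).

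Putting the two inequalities together forces equality of the optimal values, and the chain $r = {\rm rank}(X_*) = \|W_*\|_{2,0} \ge {\rm rank}(W_*) \ge {\rm rank}(W_*^T U_*^T) = r$ shows ${\rm rank}(W_*) = \|W_*\|_{2,0}$ at any optimum, so no row-sparsity is wasted and the optimal sets indeed correspond via $X_* = W_*^T U_*^T$. I expect the main obstacle to be the converse direction: one must verify that the orthonormal-extension construction simultaneously respects the constraint $U^T U = I$ and annihilates exactly the rows indexed beyond $r$, which hinges on the fact that the columns of $X_*^T$ lie in $\mathcal{R}(X_*^T)$ and are thus killed by the complementary basis vectors. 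The forward direction is essentially immediate from the rank inequality and the invariance of rank under multiplication by an orthogonal matrix.
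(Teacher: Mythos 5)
Your proof is correct and follows essentially the same route as the paper: both directions are handled by constructing a feasible point of one problem from an optimizer of the other, giving the two inequalities $\mathcal{J}_2 \le \mathcal{J}_1$ (via ${\rm rank}(W^T U^T) = {\rm rank}(W) \le \|W\|_{2,0}$) and $\mathcal{J}_1 \le \mathcal{J}_2$ (via factoring $X_*$ against an orthonormal basis adapted to $\mathcal{R}(X_*^T)$). The only difference is one of detail: the paper compresses your orthonormal-extension construction into the single phrase ``apply the full-rank factorization on $X_*$,'' whereas you spell out why the rows of $W = U^T X_*^T$ indexed beyond $r$ vanish, which is a worthwhile clarification rather than a departure.
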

Accordingly, we can focus on how to optimize problem (\ref{obj_20}).
Likewise, since the optimization of $\ell_{2,0}$-norm is NP-hard and $\ell_{2,1}$-norm is convex envelope of it, problem (\ref{obj_20}) can be relaxed into 
\begin{equation} \label{obj_21}
    \begin{split}
        & \min \limits_{W, U} \| W \|_{2,1}, ~~~~ s.t. ~~ (W^T U^T) \odot P = M, U^T U = I; \\
        \Rightarrow & \min \limits_{W, U} \| W \|_{2,1}^2, ~~~~ s.t. ~~ (W^T U^T) \odot P = M, U^T U = I .
    \end{split}
\end{equation}
The following theorem demonstrates that the above objective function can be converted into a smooth function which has continuous first-order derivative.  
\begin{myTheo} \label{theo_equivalence}
    Define $\Psi$ and $\Psi'$ as follows 
    \begin{equation}
        \left \{ 
        \begin{array}{l}
            \Psi = \{(X, D) | X \odot P = M, {\rm tr}(D^\dag) = 1, D \in \mathbb{S}_{+}^n\}, \\
            \Psi' = \{(W, U) | (W^T U^T) \odot P = M, U^T U = I\}.
        \end{array}
        \right .
    \end{equation}
    Then problem (\ref{obj_21}) is equivalent to 
    \begin{equation} \label{obj_no_graph}
        \begin{split}
            \min \limits_{X, D} ~ &  {\rm tr}(X D X^T), 
            ~~~~ s.t. ~ (X, D) \in \Psi .
        \end{split}
    \end{equation}
    Meanwhile, the relation of optimal solutions of the two problems 
    can be established as  
    \begin{equation}
        X_* = W_*^{T} U^T_*, D_* = U_* \Lambda U_*^T, \Lambda = {\rm diag}(\frac{\|\bm w^i_*\|_2}{\|W_*\|_{2,1}})^\dag, 
    \end{equation}
    where $U_* \Lambda U_*^T$ is the eigenvalue decomposition of $D_*$,
    \begin{equation}
        \left \{
        \begin{split}
            & (X_*, D_*) = 
            \arg \min \limits_{(X, D) \in \Psi} {\rm tr}(X D X^T), \\
            & (W_*, U_*) = 
            \arg \min \limits_{(W, U) \in \Psi'} \|W\|_{2,1}^2 .
        \end{split}
        \right .
    \end{equation}

\end{myTheo}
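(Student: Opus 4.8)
The plan is to recast the non-smooth squared $\ell_{2,1}$ objective as a jointly-minimized smooth quadratic form by means of a standard variational identity, and then to exhibit an explicit correspondence between the feasible sets $\Psi'$ and $\Psi$ under which the two objectives coincide termwise. The single analytic ingredient is the Cauchy--Schwarz identity: for any matrix $W$ with rows $\bm w^i$,
\begin{equation}
    \|W\|_{2,1}^2 = \Big(\sum_i \|\bm w^i\|_2\Big)^2 = \min_{d_i \ge 0,\, \sum_i d_i = 1} \sum_i \frac{\|\bm w^i\|_2^2}{d_i},
\end{equation}
where the minimum is attained at $d_i = \|\bm w^i\|_2/\|W\|_{2,1}$ and terms with $\|\bm w^i\|_2 = 0$ are read with the convention $0/0 = 0$. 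Everything else is a change of variables that moves the auxiliary weights $\bm d$ into the spectrum of $D$.

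Concretely, I would set up the map $(W,U) \mapsto (X,D)$ with $X = W^T U^T$ and $D = U\,{\rm diag}(\bm d)^\dag\,U^T$, together with the inverse map obtained by eigendecomposing $D = U\Lambda U^T$ and putting $W = U^T X^T$. First I would check that the constraints transfer: $X \odot P = M$ is literally the equality defining $\Psi'$; $U^T U = I$ makes $D$ positive semi-definite with ${\rm tr}(D^\dag) = {\rm tr}(\Lambda^\dag) = \sum_i d_i = 1$, so $(X,D)\in\Psi$; and conversely an eigenbasis of any member of $\Psi$ supplies an orthogonal $U$. Next I would collapse the objective: since $U^T U = I$,
\begin{equation}
    {\rm tr}(X D X^T) = {\rm tr}(W^T \Lambda W) = \sum_i \lambda_i \|\bm w^i\|_2^2 = \sum_i \frac{\|\bm w^i\|_2^2}{d_i},
\end{equation}
because only the diagonal of $WW^T$ survives the trace against the diagonal $\Lambda$. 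Combined with the variational identity, the forward direction produces a feasible $D$ whose objective equals $\|W\|_{2,1}^2$, while the reverse direction yields the inequality ${\rm tr}(XDX^T) \ge \|W\|_{2,1}^2$ for every $(X,D)\in\Psi$.

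Taking the infimum on each side then gives the two-sided inequality between the optimal values, hence equality, and the equality case of Cauchy--Schwarz forces the optimal weights to be $d_i = \|\bm w^i_*\|_2/\|W_*\|_{2,1}$, i.e. $\Lambda = {\rm diag}(\|\bm w^i_*\|_2/\|W_*\|_{2,1})^\dag$ with $D_* = U_*\Lambda U_*^T$, exactly the stated relation. The step I expect to be the main obstacle is the bookkeeping around the pseudo-inverse and a possibly rank-deficient $D$. The collapse ${\rm tr}(XDX^T) = \sum_i \|\bm w^i\|_2^2/d_i$ is valid only when the zero eigenvalues of $D$ sit on directions with $\|\bm w^i\|_2 = 0$; otherwise the left side has a vanishing term while the right side is $+\infty$. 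I must therefore argue that a minimizer never places a null eigenvalue of $D$ on a coordinate carrying energy, equivalently that $\ker D$ aligns with the zero rows of $W$. This is precisely what the low-rank structure provides and what the equality condition enforces, and it is also why the construction uses ${\rm diag}(\cdot)^\dag$ rather than a plain inverse. A secondary point to verify is that the off-diagonal entries of $WW^T$ in the $U$-basis are genuinely irrelevant, which holds because $\Lambda$ is diagonal and only ${\rm tr}(\Lambda WW^T)$ enters.
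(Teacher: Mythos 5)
Your proposal follows the paper's own proof essentially step for step: your ``Cauchy--Schwarz identity'' is exactly the paper's auxiliary Lemma (proved there via the KKT conditions of the simplex-constrained minimization of $\sum_i c_i^2/x_i$), and your two changes of variables $(W,U)\mapsto(X,D)$ and $(X,D)\mapsto(W,U)$, with the optimal $D = U\,{\rm diag}(\|\bm w^i\|_2/\|W\|_{2,1})^\dag\,U^T$, are the paper's constructions verbatim. The pseudo-inverse/rank-deficiency obstacle you flag at the end (a null eigenvalue of $D$ sitting on a row of $W$ carrying energy, where the termwise collapse breaks down) is real, but the paper's proof leaves it equally unresolved --- it invokes the lemma, which requires strictly positive weights, without comment --- so your attempt matches the published argument in both route and level of rigor.
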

\begin{myProposition} \label{proposition}
    Problem (\ref{obj_no_graph}) is non-convex. 
    In particular, the subproblem regarding $X$ is convex.
\end{myProposition}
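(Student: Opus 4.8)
The plan is to treat the two claims in Proposition~\ref{proposition} separately: the convexity of the $X$-block with $D$ held fixed, and the non-convexity of the joint problem over $(X,D)\in\Psi$. For the $X$-subproblem I would fix $D\in\mathbb{S}_{+}^n$ and rewrite the objective row-wise: with $\bm x^i$ the $i$-th row of $X$, ${\rm tr}(XDX^T)=\sum_{i=1}^m \bm x^i D (\bm x^i)^T$, a sum of quadratic forms all carrying the same matrix $D$. Because $D\succeq 0$, each summand is a convex quadratic in $\bm x^i$, and since distinct rows occur in distinct summands, the Hessian with respect to the stacked entries of $X$ is block-diagonal with every block equal to $2D\succeq 0$; hence the objective is convex in $X$. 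The only $X$-constraint in $\Psi$ is $X\odot P=M$, which pins the observed entries and leaves the rest free, i.e. an affine set. A convex objective minimized over an affine set is a convex program, giving the ``in particular'' claim.

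For the joint problem I would argue non-convexity from the feasible region, which is the cleanest route. The constraint ${\rm tr}(D^\dag)=1$ is a level set of the map $D\mapsto{\rm tr}(D^\dag)$, which is convex on $\mathbb{S}_{++}^n$; a level set (as opposed to a sub-level set) of a strictly convex function is generically non-convex, and I would make this concrete with a diagonal pair, e.g. $D_1={\rm diag}(2,2)$ and $D_2={\rm diag}(3,\tfrac32)$, both satisfying ${\rm tr}(D^\dag)=1$, whose midpoint violates it. Thus $\Psi$ is non-convex and so is Problem~(\ref{obj_no_graph}). As an alternative that exposes why alternating minimization is natural here, I would also record that the objective itself is not jointly convex: on a scalar slice $f(x,d)=x^2 d$ the Hessian $\bigl[\begin{smallmatrix}2d & 2x\\ 2x & 0\end{smallmatrix}\bigr]$ has determinant $-4x^2<0$ for $x\neq 0$, so the bilinear coupling between $X$ and $D$ destroys joint convexity even though the objective is bi-convex in each block.

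The main obstacle is not technical depth but making the counterexamples honest. For the objective route I must embed the indefinite-Hessian slice inside $\Psi$, which forces $n\geq 2$ (so that $D$ can move along the surface ${\rm tr}(D^\dag)=1$) together with at least one unobserved entry of $X$ (so that $X$ is not frozen by $X\odot P=M$); the feasible-region route sidesteps this difficulty entirely. I must also be careful that $D$ is only required positive \emph{semi}-definite, so in the $X$-subproblem I rely on $D\succeq 0$ for convexity without claiming strict convexity, and in the level-set argument I restrict attention to the interior $\mathbb{S}_{++}^n$, where ${\rm tr}(D^\dag)={\rm tr}(D^{-1})$ is genuinely convex. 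Assembling these observations yields the proposition.
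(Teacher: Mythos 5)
Your proof is correct, and on the ``in particular'' claim it coincides with the paper's: both expand ${\rm tr}(XDX^T)=\sum_i \bm x^i D (\bm x^i)^T$ row-wise, observe that the Hessian in the entries of $X$ is block-diagonal with every block a copy of $D\succeq 0$ (the paper writes the blocks as $D$ rather than $2D$; the factor is immaterial), and use that $X\odot P=M$ is affine. For the non-convexity claim the two texts contain the same two arguments but with the emphasis swapped. The paper's primary exhibit is exactly your ``alternative'': the scalar slice $f(x,d)=x^2d$ with indefinite Hessian $\bigl[\begin{smallmatrix}2d & 2x\\ 2x & 0\end{smallmatrix}\bigr]$ evaluated at $x=d=1$, $P=M=0$, while the non-convexity of the constraint ${\rm tr}(D^\dag)=1$ is relegated to a one-line remark with no witness. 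Your primary route inverts this: you show the feasible set $\Psi$ itself is non-convex via the explicit pair $D_1={\rm diag}(2,2)$, $D_2={\rm diag}(3,\tfrac32)$, both on the surface ${\rm tr}(D^{-1})=1$, whose midpoint is not. Your route is in fact the more watertight one: in the paper's scalar instance ($n=1$, $P=M=0$) the constraint pins $D=1$, so the feasible set is a line on which the objective restricts to the convex function $x^2$; the paper's computation therefore only shows the objective is non-convex on the ambient space, not that the exhibited instance fails to be a convex program. You flag precisely this honesty issue (needing $n\geq 2$ and a free entry of $X$ to embed the indefinite slice inside $\Psi$), and your feasible-set counterexample avoids it while working for any observation pattern, since the $X$-component can be frozen at $X=M$. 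Both arguments establish the proposition; yours does so with the counterexample actually living inside the constraint set.
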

In spite of the non-convexity, ${\rm tr}(X D X^T)$ is smooth compared with 
$\|X\|_*$. In the following subsection, an efficient gradient-free 
algorithm, which can converge into the global optimum, is developed.
Besides, in the next section, we will find that this surrogate is more 
compatible with additional mechanisms.

\subsection{Optimization of Problem (\ref{obj_no_graph})}
Since the problem is non-convex and the subproblem regarding $X$ is convex, 
we optimize problem (\ref{obj_no_graph}) by an alternative method. 
Inspired by \cite{MTL-2}, 
Theorem \ref{theo_solution_D} provides a closed-form solution for the subproblem 
regarding $D$.
\begin{myTheo} \label{theo_solution_D}
    If $X$ is fixed as constant, the optimum of problem (\ref{obj_no_graph}) is $\|X\|_*^2$, \textit{i.e.},
    \begin{equation}
    \begin{split}
        & \|X\|_*^2 = \min \limits_{D} {\rm tr}(X D X^T), ~~ s.t. ~ {\rm tr}(D^\dag) = 1, D \in \mathbb{S}_{+}^n ,
    \end{split}
    \end{equation}
    where the optimal $D$ is given as
    \begin{equation} \label{solution_D}
        D = (\frac{(X^T X)^{\frac{1}{2}}}{{\rm tr}((X^T X)^{\frac{1}{2}})})^\dag .
    \end{equation}
\end{myTheo}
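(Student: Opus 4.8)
The plan is to reduce the matrix problem to a scalar Cauchy--Schwarz inequality and then exhibit the minimizer explicitly. First I would use the cyclic invariance of the trace to write ${\rm tr}(X D X^T) = {\rm tr}(D\, X^T X)$, and set $A = (X^T X)^{\frac12} \in \mathbb{S}_+^n$, so that $A$ is positive semi-definite with eigenvalues equal to the singular values $\sigma_i$ of $X$ and ${\rm tr}(A) = \sum_i \sigma_i = \|X\|_*$. The objective becomes ${\rm tr}(D A^2)$, and since the constraint ${\rm tr}(D^\dag)=1$ merely fixes the scale of $D$ (the pair $({\rm tr}(DA^2), {\rm tr}(D^\dag))$ behaves reciprocally under $D \mapsto tD$), the essential content is the scale-invariant product ${\rm tr}(DA^2)\,{\rm tr}(D^\dag)$.

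The core step is the inequality
\[ {\rm tr}(D A^2)\,{\rm tr}(D^\dag) \ge ({\rm tr}\, A)^2 = \|X\|_*^2 , \]
which I would obtain from the matrix Cauchy--Schwarz inequality $|{\rm tr}(P^T Q)|^2 \le {\rm tr}(P^TP)\,{\rm tr}(Q^T Q)$ applied to $P = D^{\frac12} A$ and $Q = (D^\dag)^{\frac12}$. When $\mathcal{R}(A) \subseteq \mathcal{R}(D)$ these satisfy $P^T Q = A$, ${\rm tr}(P^TP) = {\rm tr}(D A^2)$ and ${\rm tr}(Q^TQ) = {\rm tr}(D^\dag)$, so the inequality follows; imposing ${\rm tr}(D^\dag)=1$ then yields the lower bound ${\rm tr}(XDX^T)\ge \|X\|_*^2$. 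Equality in Cauchy--Schwarz forces $P \propto Q$, i.e. $DA$ proportional to the projector onto $\mathcal{R}(A)$, which is precisely the structure of the candidate $D = (A/{\rm tr}(A))^\dag$.

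Next I would verify that the proposed $D$ is feasible and attains the bound. Since $(M^\dag)^\dag = M$ for symmetric $M$, one has $D^\dag = A/\|X\|_*$, so $D \in \mathbb{S}_+^n$ and ${\rm tr}(D^\dag) = {\rm tr}(A)/\|X\|_* = 1$; and using $A^\dag A^2 = A$ gives ${\rm tr}(DA^2) = \|X\|_*\,{\rm tr}(A) = \|X\|_*^2$. Hence the lower bound is achieved and the optimum equals $\|X\|_*^2$, which completes the argument.

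The step I expect to be the main obstacle is the rank-deficient case: when $X$ is not full column rank, $A$ and the optimal $D$ are singular and the identity $P^TQ = A$ requires $\mathcal{R}(A)\subseteq \mathcal{R}(D)$. I would dispatch this by a compression argument. Writing $\Pi$ for the orthogonal projector onto $\mathcal{R}(A) = \mathcal{R}(X^TX)$ and $\tilde D = \Pi D \Pi$, one has ${\rm tr}(DA^2) = {\rm tr}(\tilde D A^2)$ because $A\Pi = A$, while a Schur-complement monotonicity (the $(1,1)$ block of $D^{-1}$ dominates $(\Pi D \Pi)^\dag$) gives ${\rm tr}(\tilde D^\dag) \le {\rm tr}(D^\dag)$. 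Thus any mass of $D$ outside $\mathcal{R}(A)$ only relaxes the constraint without reducing the objective and can be discarded; the Cauchy--Schwarz step is then carried out inside $\mathcal{R}(A)$, where every matrix involved is invertible, and the remainder of the proof goes through unchanged.
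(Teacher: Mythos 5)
The core of your proposal --- rewriting ${\rm tr}(XDX^T)={\rm tr}(DA^2)$ with $A=(X^TX)^{1/2}$, the trace Cauchy--Schwarz bound with $P=D^{1/2}A$, $Q=(D^\dag)^{1/2}$, and the verification that $D=(A/{\rm tr}(A))^\dag$ is feasible and attains $\|X\|_*^2$ --- is correct and is essentially the paper's own route: the paper runs the same Cauchy--Schwarz inequality and handles the equality case through the factorization $X=W^TU^T$, $D=U\Lambda U^T$. You are also right, and more explicit than the paper, that the key identity ${\rm tr}(P^TQ)={\rm tr}(A)$ requires the range condition $\mathcal{R}(A)\subseteq\mathcal{R}(D)$; without it, since $D^{1/2}(D^\dag)^{1/2}$ is the orthogonal projector $\Pi_{\mathcal{R}(D)}$ onto $\mathcal{R}(D)$, one only gets ${\rm tr}(A\,\Pi_{\mathcal{R}(D)})$, which can be strictly smaller than ${\rm tr}(A)$.

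However, your patch for the case $\mathcal{R}(A)\not\subseteq\mathcal{R}(D)$ has a genuine gap --- two, in fact. First, the claimed monotonicity ${\rm tr}\bigl((\Pi D\Pi)^\dag\bigr)\le{\rm tr}(D^\dag)$ is false for singular $D$: with $n=2$, $\Pi=\bm e_1\bm e_1^T$ and $D=\frac12\bigl(\begin{smallmatrix}1&1\\1&1\end{smallmatrix}\bigr)$, one has ${\rm tr}(D^\dag)=1$ but $\Pi D\Pi=\frac12\bm e_1\bm e_1^T$, so ${\rm tr}\bigl((\Pi D\Pi)^\dag\bigr)=2$. The Schur-complement inequality you invoke needs $D$ invertible; Moore--Penrose inverses do not obey it. Second, even where compression is harmless, $\tilde D=\Pi D\Pi$ need not be invertible on $\mathcal{R}(A)$, so the assertion that ``every matrix involved is invertible'' after compressing is unjustified, and Cauchy--Schwarz still returns only ${\rm tr}(A\,\Pi_{\mathcal{R}(\tilde D)})^2$. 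No repair is possible, because the statement read literally over $\mathbb{S}_+^n$ is false: for $X={\rm diag}(\sigma_1,\sigma_2)$ with $\sigma_1\ge\sigma_2>0$, the matrix $D=\bm e_2\bm e_2^T$ is feasible ($D\succeq 0$, ${\rm tr}(D^\dag)=1$) yet ${\rm tr}(XDX^T)=\sigma_2^2<(\sigma_1+\sigma_2)^2=\|X\|_*^2$; more generally any rank-one $D=\bm v\bm v^T$ is feasible and gives $\|X\bm v\|_2^2$, whose minimum is $\sigma_{\min}(X)^2$. The theorem only holds after adding the constraint $\mathcal{R}(X^T)\subseteq\mathcal{R}(D)$ --- equivalently, in the variable $G=D^\dag$ with ${\rm tr}(G)=1$ and $\mathcal{R}(X^T)\subseteq\mathcal{R}(G)$, which is exactly the sub-problem (\ref{problem_convex}) that the paper itself introduces in the proof of Theorem \ref{theo_equivalence_nuclear} --- or after restricting $D$ to $\mathbb{S}_{++}^n$, in which case the value is an infimum rather than a minimum when $X$ is column-rank deficient. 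To be fair, the paper's own proof of Theorem \ref{theo_solution_D} has the same leak: it silently replaces the indicator $\mathbbm{1}[\Lambda_{ii}>0]$ by $\mathbbm{1}[\|\bm w^i\|_2>0]$, which is valid only under that same range condition. Your instinct about where the difficulty sits was exactly right; the compression argument just cannot close it.
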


\begin{algorithm}[t]
    \caption{Algorithm to solve problem (\ref{obj_no_graph}).}
    \label{alg_surrogate}
    \begin{algorithmic}
        \REQUIRE Mask matrix $P$, observed entries $M$, perturbation coefficient 
        $\delta = 10^{-6}$, and maximum iterations $t_{m} = 50$.
        \STATE $X \leftarrow M$.
        \REPEAT
            \STATE Update $D$ by Eq. (\ref{solution_D}).
            \STATE $\hat D \leftarrow D + \delta I$.
            \STATE $F_i \leftarrow P_i \hat D^{-1} P_i$.
            \STATE Update $X$ according to Eq. (\ref{solution_X_noiseless}): $\bm x^i \leftarrow \bm m^i (F_i)^{\bm p^i +} \hat D^{-1}$.
        \UNTIL{convergence or exceeding maximum iteration $t_{m}$.}
        \ENSURE Recovered matrix $X$.
    \end{algorithmic}
\end{algorithm}

Accordingly, we can focus on how to optimize $\min_{X} {\rm tr}(X D X^T)$ 
subject to $X \odot P = M$.
The Lagrangian function can be represented as 
\begin{equation}
    \mathcal L = {\rm tr}(X D X^T) + {\rm tr}(V^T (X \odot P - M)),
\end{equation}
where $V \in \mathbb R^{m \times n}$ denotes Lagrange multipliers. Note that only $|\Omega|$ multipliers are needed due to the fact that $X_{ij} P_{ij} = M_{ij}$ always holds if $(i, j) \notin \Omega$. The KKT conditions are
\begin{equation}
    \left \{
    \begin{array}{l}
        \nabla_{X} \mathcal{L}(X, V) = 2X D + V \odot P = 0, \\
        X \odot P = M .
    \end{array}
    \right .
\end{equation}
To obtain the closed-form solution, we replace $D$ with 
$\hat D = D + \delta I$ ($\delta > 0$) since $\hat D$ is invertible. 
Note that $\bm x^i \odot \bm p^i = \bm x^i {\rm diag}(\bm p^i)$. 
To keep notations uncluttered, let $P_i = {\rm diag}(\bm p^i)$. 
Accordingly, we have
\begin{equation} \label{eq_lagrangian}
    \left \{
    \begin{array}{l}
        \bm x^i = -\frac{1}{2} \bm v^i P_i \hat D^{-1} \\
        \bm x^i P_i = \bm m^i
    \end{array}
    \right .
    \Rightarrow
    -\frac{1}{2} \bm v^i P_i \hat D^{-1} P_i = \bm m^i .
\end{equation}
Let $F_i = P_i \hat D^{-1} P_i$. Lemma \ref{lemma_principal_minor} shows that $[F_i]_{\bm p^i, \bm p^i}$ is invertible.

\begin{myLemma} \label{lemma_principal_minor}
    For any $Q \in \mathbb{S}_{++}^n$ and any non-zero binary vector 
    $\bm p \in \{0, 1\}^n$, $[Q]_{\bm p, \bm p}$ is positive definite.
\end{myLemma}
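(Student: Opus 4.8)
The plan is to use the standard characterization that a symmetric matrix is positive definite precisely when its associated quadratic form is strictly positive on every nonzero vector, and to realize $[Q]_{\bm p, \bm p}$ as a congruence $S^T Q S$ by a full-column-rank selection matrix $S$. Note that symmetry of $Q$ is implicit from $Q \in \mathbb{S}_{++}^n$, and symmetry is clearly inherited by any principal submatrix, so it only remains to verify strict positivity of the reduced quadratic form.

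First I would set $k = \|\bm p\|_0$ and define the selection matrix $S \in \mathbb{R}^{n \times k}$ whose columns are the standard basis vectors $\bm e_i$ of $\mathbb{R}^n$ for exactly those indices $i$ with $p_i = 1$, listed in increasing order. By the deletion-based definition of the submatrix notation, keeping the rows and columns indexed by the support of $\bm p$ is exactly $[Q]_{\bm p, \bm p} = S^T Q S$.

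Next I would take an arbitrary nonzero $\bm y \in \mathbb{R}^k$ and compute $\bm y^T [Q]_{\bm p, \bm p} \bm y = (S\bm y)^T Q (S \bm y)$. The key observation is that $S$ has full column rank, since its columns are distinct standard basis vectors, so $S \bm y = \bm 0$ forces $\bm y = \bm 0$. Hence $\bm z := S \bm y$ is a nonzero vector in $\mathbb{R}^n$, and positive definiteness of $Q$ yields $\bm z^T Q \bm z > 0$. Therefore $\bm y^T [Q]_{\bm p, \bm p} \bm y > 0$ for every nonzero $\bm y$, which is precisely $[Q]_{\bm p, \bm p} \in \mathbb{S}_{++}^k$.

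The argument is essentially free of obstacles; the only point requiring care is the index bookkeeping that identifies the reduced quadratic form with the original one evaluated on the zero-padded vector, i.e. justifying $[Q]_{\bm p, \bm p} = S^T Q S$ from the deletion definition of $[\cdot]_{\bm p, \bm p}$. An equivalent route, avoiding the selection matrix, is to define $\bm z \in \mathbb{R}^n$ directly by inserting the entries of $\bm y$ at the positions where $p_i = 1$ and zeros elsewhere, and then observe that the deleted rows and columns of $Q$ contribute nothing to $\bm z^T Q \bm z$ because they are multiplied by vanishing components of $\bm z$; this gives the same identity $\bm z^T Q \bm z = \bm y^T [Q]_{\bm p, \bm p} \bm y$ and hence the same conclusion.
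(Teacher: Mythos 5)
Your proof is correct and follows essentially the same route as the paper: the paper zero-pads an arbitrary $\bm y \in \mathbb{R}^k$ into a vector $\bm v \in \mathbb{R}^n$ supported on $\bm p$ and observes $0 < \bm v^T Q \bm v = \bm y^T [Q]_{\bm p, \bm p}\, \bm y$, which is exactly the ``equivalent route'' you describe at the end (your selection-matrix formulation $[Q]_{\bm p, \bm p} = S^T Q S$ is just a cleaner packaging of the same congruence, with the full-column-rank observation making the nonvanishing of the padded vector explicit).
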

\begin{myDefinition}
    Given a binary vector $\bm p \in \{0, 1\}^n$ and a square 
    matrix $Q \in \mathbb R^{n \times n}$, 
    suppose that $[Q]_{\bm p, \bm p}$ is invertible. 
    We define $Q^{\bm p+} \in \mathbb R^{n \times n}$ as a matrix 
    which satisfies 
    $[Q^{\bm p+}]_{\bm p, \bm p} = [Q]_{\bm p, \bm p}^{-1}$ and the 
    other entries are 0.
\end{myDefinition}
Accordingly, $V$ and $X$ can be approximately solved by 
\begin{equation} \label{solution_X_imprecise}
    \left \{
    \begin{array}{l}
        \bm v_i = -2 \bm m^i (F_i)^{\bm p^i +} \\
        \bm x^i = \bm m^i (F_i)^{\bm p^i +} \hat D^{-1} .
    \end{array}
    \right .
\end{equation}
However, the residual caused by Eq. (\ref{solution_X_imprecise}) can not be 
guaranteed to be upper-bounded when $\exists i, \hat D_{ii} = 0$. 
To address this issue, 
define $H_i \in \mathbb{R}^{n \times n}$ as a diagonal matrix such that 
$[H_i]_{ii} = \mathbbm{1}[\hat D_{ii} \neq 0]$ 
and the other diagonal entries are 1.
If the solution is modified as  
\begin{equation} \label{solution_X_noiseless}
    \left \{
    \begin{array}{l}
        \bm v_i = -2 \bm m^i (\hat F_i)^{\bm p_i +}, \\
        \bm x^i = \bm m^i (\hat F_i)^{\bm p_i +} (H_i \hat D H_i)^{\bm p_i+} ,
    \end{array}
    \right .
\end{equation}
where $\hat F_i = P_i \hat D^{\bm h_i +} P_i$ and $\bm h_i = {\rm diag}(H_i)$, 
Theorem \ref{theo_solution_X} shows that the residual between approximate 
solution and real solution is related to $\delta$.

\begin{myTheo} \label{theo_solution_X}
    Let $\hat X$ and $\hat V$ denote the approximate solutions defined in 
    Eq. (\ref{solution_X_noiseless}). 
    There exists a constant $u$, which is independent on $\delta$, 
    such that $\hat X \odot P = M$ and 
    $\|\nabla_X \mathcal{L}(\hat X, \hat V)\| \leq 2 \delta u \|M\|$.
\end{myTheo}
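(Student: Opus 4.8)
The plan is to establish the two assertions separately and row by row, since both the constraint $X\odot P = M$ and the gradient $\nabla_X\mathcal L = 2XD + V\odot P$ decouple across the rows of $X$. Throughout I write $S_i=\{j:p^i_j=1\}$ for the observed support of row $i$, so that $\bm m^i$ is supported on $S_i$, and I use the defining identity $[Q^{\bm p+}]_{\bm p,\bm p}=[Q]_{\bm p,\bm p}^{-1}$ with zeros elsewhere. Lemma \ref{lemma_principal_minor} guarantees that the relevant principal submatrices (those of $\hat D$ and of $\hat D^{-1}$ restricted to $S_i$) are positive definite, hence invertible, so that every pseudo-inverse appearing in Eq. (\ref{solution_X_noiseless}) is well defined for each fixed $\delta>0$.

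For feasibility I would verify $\bm x^i P_i=\bm m^i$ directly. Multiplying the closed form for $\bm x^i$ on the right by $P_i$ restricts everything to the $[\cdot]_{\bm p^i,\bm p^i}$ block; on that block the factors $(\hat F_i)^{\bm p^i+}$ and $(H_i\hat D H_i)^{\bm p^i+}$ are precisely the inverses of the corresponding submatrices, so the modification through $\hat F_i=P_i\hat D^{\bm h_i+}P_i$ and $H_i$ is designed exactly so that their restriction to $S_i$ composes to the identity while the complementary indices are annihilated. This gives $[\bm x^i]_{\bm p^i}=[\bm m^i]_{\bm p^i}$, and since the off-support entries of $\bm m^i$ vanish, $\hat X\odot P=M$ holds exactly and independently of $\delta$.

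For the gradient bound I would substitute $\hat X$ and $\hat V$ into the $i$-th row of $\nabla_X\mathcal L$, namely $2\bm x^i D+\bm v^i P_i$, and split $D=\hat D-\delta I$. The key cancellation is that the $\hat D$-part reproduces, up to sign, the multiplier term $\bm v^i P_i$: because $\bm v^i$ (equivalently $-2\bm m^i(\hat F_i)^{\bm p^i+}$) is supported on $S_i$, its projection onto the complement vanishes, so the two $O(1)$ contributions cancel exactly. What survives is proportional to $\delta$, i.e. the residual row takes the form $-2\delta\,\bm m^i(\hat F_i)^{\bm p^i+}(H_i\hat D H_i)^{\bm p^i+}$, which already carries the factor $\delta$ demanded by the statement.

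The main obstacle is to show that the remaining coefficient is bounded uniformly in $\delta$, so that the constant $u$ is genuinely $\delta$-independent; this is delicate precisely because $\hat D^{-1}$, and the block inverses built from it, behave like $O(1/\delta)$ on the null space of the rank-deficient $D$, which would a priori cancel the gained factor $\delta$. I would tame this with a Schur-complement decomposition of $\hat D$ along $S_i$ and its complement: since $D\in\mathbb{S}_{+}^n$, any null vector of a diagonal block is a null vector of the whole block column, so the off-diagonal blocks annihilate exactly the directions along which $\hat D^{-1}$ blows up. This is the structural role of $H_i$, $\hat F_i$ and $(H_i\hat D H_i)^{\bm p^i+}$ in Eq. (\ref{solution_X_noiseless}): they project away the offending diagonal directions so that the $O(1/\delta)$ growth is never actually triggered. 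Collecting the resulting $\delta$-free spectral bounds and factoring out $\|M\|\ge\|\bm m^i\|$ then produces a constant $u$ with $\|\nabla_X\mathcal L(\hat X,\hat V)\|\le 2\delta u\|M\|$, which is the claim.
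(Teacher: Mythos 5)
Your proposal is sound, and it matches the paper on the first two steps: the row-wise verification of $\hat X \odot P = M$ via the block-inverse composing to the identity on $S_i$, and the substitution $D=\hat D-\delta I$ whose cancellation leaves a residual row proportional to $\delta$, are exactly the paper's computations in Theorem \ref{theo_solution_X_special}. Where you genuinely diverge is the one hard step, the $\delta$-uniform bound on the surviving coefficient. The paper works entrywise: it uses the classical formula for the inverse of a principal submatrix in terms of the entries of the full inverse (Lemma \ref{lemma_sub_matrix}), deletes one index at a time and telescopes the product, bounds each factor by $(n-1)+\|\hat{\bm d}_k\|_2^2/\hat d_{kk}^2$ (Lemma \ref{lemma_upper}), and removes the $\delta$-dependence by the comparison $\|\hat{\bm d}_k\|_2^2/\hat d_{kk}^2<\|\bm d_k\|_2^2/d_{kk}^2$ (Lemma \ref{lemma_assumption}), which only makes sense when $D_{kk}\neq 0$; this forces a second stage (Lemmas \ref{lemma_Sigma0} and \ref{lemma_null_column}) identifying zero diagonal entries of $D$ with completely unobserved columns and deleting them before invoking the special case. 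Your Schur-complement route does this in one shot. Writing $\hat D$ in blocks along $S_i$ and its complement, the block-inverse identity gives
\begin{equation*}
    \bigl[\hat D^{-1}\bigr]_{S_i,S_i}^{-1}\,\bigl[\hat D^{-1}\bigr]_{S_i,S_i^{c}}
    \;=\; -\,\hat D_{12}\,\hat D_{22}^{-1}
    \;=\; -\,D_{12}\,(D_{22}+\delta I)^{-1},
\end{equation*}
so that $(F_i)^{\bm p^i+}\hat D^{-1}$ is, up to permutation, $\bigl(\begin{smallmatrix} I & -D_{12}(D_{22}+\delta I)^{-1}\\ 0 & 0\end{smallmatrix}\bigr)$; positive semidefiniteness of $D$ gives the kernel inclusion $\ker(D_{22})\subseteq\ker(D_{12})$, whence $\|D_{12}(D_{22}+\delta I)^{-1}\|\leq \|D_{12}\|/\lambda_{\min}^{+}(D_{22})$ (smallest \emph{nonzero} eigenvalue) uniformly in $\delta$. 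This is cleaner and strictly more general than the paper's argument: it needs no case split on $D_{ii}=0$, no column-removal reduction, and it yields a single spectral constant rather than the paper's product $u=\sqrt{\prod_i[(n-1)+n c_i^2]}$ of entrywise ratios.

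Two points to tighten. First, the identity displayed above is the crux and must be stated explicitly: it is what converts block inverses built from $\hat D^{-1}$, which individually behave like $O(1/\delta)$, into ratios of blocks of $\hat D$ itself, which do not; without it your claim that the blow-up "is never triggered" is an assertion, not a proof. Second, your attribution of this protection to the masks $H_i$, $\hat F_i$ in Eq. (\ref{solution_X_noiseless}) is off: those masks only neutralize coordinate-aligned degeneracy ($D_{jj}=0$, where PSD-ness forces the entire $j$-th row and column of $D$ to vanish), while the dangerous case of a rank-deficient $D$ with nonzero diagonal is handled by nothing in Eq. (\ref{solution_X_noiseless}) — it is handled precisely by the Schur/kernel-inclusion argument you describe. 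Since you invoke both, the argument stands, but the roles should not be conflated.
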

Therefore, with $\delta \rightarrow 0$, $\nabla_X \mathcal L \rightarrow 0$. In our experiments, we set $\delta = 10^{-6}$. To compute $D$, we need $O(m n^2)$ time. Since we have to compute the inverse of $\hat D$, $O(n^3)$ are needed to calculate $X$ at least. Recall that $m \geq n$ and thus the computational complexity is $O(m n^2)$. 
The algorithm to solve problem (\ref{obj_no_graph}) is summarized in Algorithm \ref{alg_surrogate}.
In Section \ref{section_experiment}, we can see that our method can converge within 20 iterations. Compared with other methods that require hundreds even thousands of iterations to converge, the consuming time of the proposed model is less even though the computational complexity of each iteration is $O(m n^2)$. 
Theoretically, combining with Theorem \ref{theo_solution_D} and \ref{theo_solution_X}, we have the following proposition,
\begin{myCorollary} \label{theo_convergence}
    If $\delta \rightarrow 0$,
    then Algorithm \ref{alg_surrogate} will approach the global minimum of 
    problem (\ref{obj_no_graph}). 
\end{myCorollary}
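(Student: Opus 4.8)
The plan is to eliminate $D$ and recognize the whole alternating scheme as a majorize-minimize (MM) iteration on a convex reduced objective. By Theorem~\ref{theo_solution_D}, for any fixed feasible $X$ the inner minimization over $D$ returns the value $\|X\|_*^2$ with minimizer $D^\ast(X)$ given by Eq.~(\ref{solution_D}). Substituting this back shows that problem (\ref{obj_no_graph}) shares its optimal value and optimal $X$ with the reduced problem $\min_X \|X\|_*^2$ subject to $X \odot P = M$. Since $\|\cdot\|_*$ is a norm (hence convex and nonnegative) and $t\mapsto t^2$ is convex and nondecreasing on $[0,\infty)$, the map $g(X):=\|X\|_*^2$ is convex; moreover $g(X)\ge \|X\|_F^2$ makes it coercive on the affine feasible set, so a global minimizer exists and every sublevel set is compact. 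This reduces the corollary to showing that Algorithm~\ref{alg_surrogate} drives $g(X_t)$ to $\min g$ as $t\to\infty$ and $\delta\to 0$.

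Next I would interpret one outer iteration as an MM step. Writing $D_t = D^\ast(X_t)$ and $h_t(X):=\mathrm{tr}(X D_t X^T)$, Theorem~\ref{theo_solution_D} gives simultaneously $h_t(X)\ge g(X)$ for every $X$ and $h_t(X_t)=g(X_t)$; that is, $h_t$ is a quadratic majorizer of $g$ that is tight at the current iterate. Since the $X$-subproblem is convex (Proposition~\ref{proposition}), the $X$-update minimizes (a $\delta$-perturbation of) $h_t$, so in the idealized case $\delta=0$ we obtain the standard MM chain $g(X_{t+1})\le h_t(X_{t+1})\le h_t(X_t)=g(X_t)$. Hence $g(X_t)$ is nonincreasing and bounded below by $0$, so it converges; by coercivity the iterates stay in a compact sublevel set and admit limit points.

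I would then prove global optimality through a fixed-point argument in the spirit of the alternating scheme of \cite{MTL-2}. Because $\hat D = D+\delta I \succ 0$, the $X$-subproblem is strictly convex and its solution (Theorem~\ref{theo_solution_X}) is single valued and feasible, $\hat X\odot P = M$, while the $D$-update is continuous in $X$; thus the outer map $X_t\mapsto X_{t+1}$ is continuous and strictly decreases $g$ away from its fixed points. Consequently every limit point $\bar X$ is a fixed point, satisfying the KKT system $2\bar X\bar D + \bar V\odot P = 0$, $\bar X\odot P = M$ with $\bar D = D^\ast(\bar X)$. By the envelope (Danskin) identity the (sub)gradient of $g$ at $\bar X$ equals $2\bar X D^\ast(\bar X)$, so this KKT system is exactly the optimality condition of the convex reduced problem $\min_X g(X)$ s.t. $X\odot P = M$; convexity of $g$ then upgrades stationarity to global optimality. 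I emphasize that block-optimality alone is \emph{not} enough here (the joint objective is non-convex), which is why the tangency furnished by the envelope identity is essential.

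Finally I would restore the perturbation. Theorem~\ref{theo_solution_X} guarantees $\hat X\odot P = M$ exactly and $\|\nabla_X\mathcal L(\hat X,\hat V)\|\le 2\delta u\|M\|$, so the $X$-update solves $h_t$ only up to an $O(\delta)$ stationarity residual, and the monotonicity above holds up to additive terms of order $\delta(\|X_t\|_F^2-\|X_{t+1}\|_F^2)$. Carrying $\delta$ through the fixed-point characterization shows the limit point satisfies the reduced KKT system up to $O(\delta)$, hence lies within $O(\delta)$ of $\min g$; letting $\delta\to 0$ yields the claim. The main obstacle is precisely this limiting step: $g=\|\cdot\|_*^2$ and $D^\ast(X)$ are nonsmooth and involve pseudoinverses that degenerate when $X$ is rank-deficient, so the envelope identity and the continuity of the outer map must be established on the perturbed ($\hat D\succ 0$) problem and then shown to survive the limit $\delta\to 0$ together with the $H_i$ correction that Theorem~\ref{theo_solution_X} introduces to keep the residual bounded.
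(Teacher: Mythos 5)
Your proposal is correct in substance but reaches the conclusion by a genuinely different route than the paper. The paper's own proof is a short transfer argument: it leans entirely on Theorem~\ref{theo_equivalence_nuclear} and, in particular, on the range-restricted sub-problem (\ref{problem_convex}) introduced in that proof, namely $\min_{X,G} {\rm tr}(X G^\dag X^T)$ subject to $X \odot P = M$, ${\rm tr}(G)=1$, $G \in \mathbb{S}_+^n$, $\mathcal{R}(X^T) \subseteq \mathcal{R}(G)$, which is \emph{jointly} convex in $(X,G)$ (citing Boyd); the argument is that monotone descent plus lower-boundedness yields a limit point, that as $\delta \to 0$ this limit is a valid point of the convex sub-problem (the range condition holds automatically because $D$ is updated by Eq.~(\ref{solution_D})), that a stationary point of a convex problem is its global optimum, and that Theorem~\ref{theo_equivalence_nuclear} transfers this optimality back to problem (\ref{obj_no_graph}). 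You instead eliminate $D$ by partial minimization, reduce to the convex problem $\min_X \|X\|_*^2$ s.t.\ $X \odot P = M$ in $X$ alone, recast the alternation as majorize-minimize, and use the envelope (Danskin) identity to show that fixed points of the outer map are stationary for the reduced convex problem. What your route buys: it is self-contained (no appeal to joint convexity of the matrix fractional function), the monotone decrease falls out transparently from the MM sandwich, and you correctly flag that block-optimality alone proves nothing for a non-convex joint objective --- the tangency of the majorizer is the load-bearing step, playing exactly the role that joint convexity plays in the paper. What the paper's route buys: by working with the jointly convex restricted problem it never needs subgradient or envelope arguments, at the price of the range-condition bookkeeping. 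Both arguments share the same soft spot, which you alone name explicitly: the majorization $h_t \geq g$ and the envelope identity are only unconditionally valid when $D_t \succ 0$ (the pseudo-inverse in the constraint ${\rm tr}(D^\dag)=1$ lets rank-deficient $D$ undercut $\|X\|_*^2$ along null directions --- this is precisely why the paper needs the range condition in (\ref{problem_convex})), and neither proof fully carries the $\delta$-perturbation and the degenerate-column correction of Theorem~\ref{theo_solution_X} through the limit $\delta \to 0$; your closing paragraph identifies this as the genuine remaining work, which is fair, since the paper's own proof glosses over the same step.
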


\begin{figure}[t]
    \centering
    \subfigure[Image-1] {
        \includegraphics[width=0.47\linewidth]{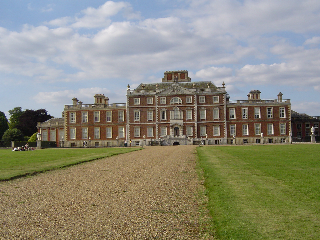}
    }
    \subfigure[Image-2] {
        \includegraphics[width=0.47\linewidth]{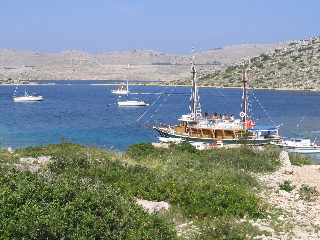}
    }
    \caption{Two images from MSRC-v2 which are used for image recovery.}
    \label{figure_datasets}
\end{figure}

\subsection{Recovery Bound}
As the recovery bound, which exposes the upper bound of errors between the recovered matrix and the real matrix, is an important part in field of matrix completion, we also provide recovery bounds about our model. Inspired by Theorem \ref{theo_solution_D}, the following theorem is the critical part to show recovery bounds. 
\begin{myTheo} \label{theo_equivalence_nuclear}
    Problem (\ref{obj_21}) is equivalent to 
    \begin{equation}
    \begin{split}
        \min \limits_{X} \|X\|_{*}, ~~ s.t. ~ X \odot P = M.
    \end{split}
    \end{equation}
\end{myTheo}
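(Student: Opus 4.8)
The plan is to prove the equivalence by chaining the two results already established, namely Theorem \ref{theo_equivalence} and Theorem \ref{theo_solution_D}. First I would invoke Theorem \ref{theo_equivalence} to replace problem (\ref{obj_21}) with the equivalent smooth problem (\ref{obj_no_graph}), so that it suffices to show that
\[
    \min \limits_{(X, D) \in \Psi} {\rm tr}(X D X^T)
\]
coincides with the nuclear-norm problem. The key structural observation is that the constraint set $\Psi$ is the Cartesian product of $\{X : X \odot P = M\}$ and $\{D : {\rm tr}(D^\dag) = 1, D \in \mathbb{S}_{+}^n\}$: the two constraints act on disjoint variables, so the joint minimization splits into an iterated one. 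I would therefore first minimize over $D$ with $X$ held fixed, and then minimize the resulting value over the feasible $X$.

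For the inner minimization, Theorem \ref{theo_solution_D} gives precisely
\[
    \min \limits_{{\rm tr}(D^\dag) = 1, D \in \mathbb{S}_{+}^n} {\rm tr}(X D X^T) = \|X\|_*^2
\]
for each fixed $X$. Substituting this back leaves $\min_{X \odot P = M} \|X\|_*^2$. The final step is to note that since $\|X\|_* \geq 0$ always holds and the map $t \mapsto t^2$ is strictly increasing on $[0, \infty)$, minimizing $\|X\|_*^2$ and minimizing $\|X\|_*$ over the same feasible set yield the same optimizer and the same argument of the minimum. Hence the problem reduces to $\min_{X \odot P = M} \|X\|_*$, which is exactly the claimed nuclear-norm formulation.

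I expect the argument to be essentially routine given the prior theorems, so the points requiring care are formal rather than deep. The first is justifying that the joint minimum over $\Psi$ equals the iterated minimum; this is valid here precisely because $\Psi$ is a product set, so no coupling between $X$ and $D$ is lost and the order of minimization is immaterial. The second is confirming that the monotone reparametrization $t \mapsto t^2$ preserves the minimizer, which is immediate on the nonnegative range of the nuclear norm. A minor edge case worth a brief remark is $X = 0$ (equivalently $M = 0$), where the $D$-subproblem degenerates; this is trivial since both objectives vanish. The main conceptual content has already been absorbed into Theorem \ref{theo_solution_D}, so this theorem is in effect a corollary that repackages the smooth surrogate back into the familiar nuclear-norm picture, which is exactly what is needed to import existing recovery bounds in the next subsection.
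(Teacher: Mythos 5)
Your proof is correct, but it takes a genuinely different route from the paper's. You exploit the fact that $\Psi$ is a product set---the constraint $X \odot P = M$ involves only $X$, while ${\rm tr}(D^\dag) = 1$, $D \in \mathbb{S}_{+}^n$ involves only $D$---so the joint minimization in problem (\ref{obj_no_graph}) splits into an iterated one; Theorem \ref{theo_solution_D} then collapses the inner minimization over $D$ to $\|X\|_*^2$, and strict monotonicity of $t \mapsto t^2$ on $[0,\infty)$ finishes the reduction, with Theorem \ref{theo_equivalence} supplying the link back to problem (\ref{obj_21}). The paper instead argues via an auxiliary construction: it extracts from optimality the range condition $\mathcal{R}((X_*)^T) \subseteq \mathcal{R}((D_*)^\dag)$, introduces the convex \emph{sub-problem} (\ref{problem_convex}) in the variable $G = D^\dag$ with the added constraint $\mathcal{R}(X^T) \subseteq \mathcal{R}(G)$, shows by reductio that $(X_*, D_*)$ solves it, and only then compares against the nuclear-norm minimizer $\tilde X$ through the same two-inequality bookkeeping that your partial-minimization argument delivers directly. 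Your version is shorter and cleaner for the equivalence statement itself; what the paper's heavier detour buys is the convex reformulation (\ref{problem_convex}), which is reused in the proof of Corollary \ref{theo_convergence} to conclude that Algorithm \ref{alg_surrogate} approaches the global minimum---if one adopted your proof, that corollary would need its own argument. Two formal points you should state explicitly to make your write-up airtight: first, the outer minimum $\min_{X \odot P = M} \|X\|_*$ is attained (the feasible set is nonempty and closed and the nuclear norm is coercive), so the iterated infimum is a genuine minimum and the sets of minimizing $X$ coincide; second, as you already note, the degenerate case $X$ having zero columns is harmless since Theorem \ref{theo_solution_D} still returns the correct value there.
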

Therefore, recovery bounds for the nuclear relaxation model \cite{RecoveryBounds-2,nuclear} 
can be applied to our model. 
For instance, the famous recovery bound proposed by \cite{nuclear} is 
available for our model:
\begin{myLemma} \cite{nuclear}
    Let $X_0 \in \mathbb{R}^{m \times n}$ be the real matrix, and $N = \max(m, n)$. Suppose that $|\Omega|$ entries of $X_0$ are observed uniformly at random. Then there are constants $c_1$ and $c_2$ such that if $|\Omega| \geq c_1 N^{5/4} r \log N$, the unique minimizer equals with $X_0$ with probability at least $1 - c_2 N^{-3}\log N$. 
\end{myLemma}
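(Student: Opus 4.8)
The plan is to prove this via the classical dual-certificate argument for nuclear-norm minimization, which is legitimate here because Theorem \ref{theo_equivalence_nuclear} shows our relaxed model coincides with the nuclear-norm problem, so it suffices to reproduce the recovery guarantee of \cite{nuclear}. First I would fix a rank-$r$ singular value decomposition $X_0 = U_0 \Sigma_0 V_0^T$ and introduce the tangent space $T = \{U_0 A^T + B V_0^T : A \in \mathbb{R}^{n \times r},\, B \in \mathbb{R}^{m \times r}\}$ to the rank-$r$ manifold at $X_0$, together with its orthogonal complement $T^\perp$. Writing $\mathcal{P}_\Omega$ for the coordinate projection onto the observed entries (so that $\mathcal{P}_\Omega(X) = X \odot P$), subgradient optimality for $\|\cdot\|_*$ reduces exact recovery to producing a dual certificate $Y$ satisfying $\mathcal{P}_\Omega(Y) = Y$, $\mathcal{P}_T(Y) = U_0 V_0^T$, and $\|\mathcal{P}_{T^\perp}(Y)\| < 1$; the strict spectral inequality together with injectivity of $\mathcal{P}_\Omega$ restricted to $T$ upgrades optimality to \emph{unique} optimality of $X_0$.

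The technical core is to establish that the uniform random sampling is dense enough on the tangent space. Under the random model, $U_0$ and $V_0$ are incoherent with respect to the standard basis with high probability, i.e. $\max_i \|U_0^T e_i\|_2^2 \lesssim \mu r / m$ and similarly for $V_0$. Setting $p = |\Omega|/(mn)$, I would prove a concentration estimate of the form $\|\mathcal{P}_T \mathcal{P}_\Omega \mathcal{P}_T - p\, \mathcal{P}_T\| \leq p/2$ holding with high probability once $|\Omega|$ is large enough, using an operator-moment (matrix-Bernstein-type) bound over the independent inclusion of each entry. This single estimate simultaneously guarantees that $\mathcal{A} := \mathcal{P}_T \mathcal{P}_\Omega \mathcal{P}_T$ is invertible as a map on $T$ and that $\mathcal{P}_\Omega|_T$ is injective, which is the second ingredient needed for uniqueness.

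Next I would build the certificate by the least-squares ansatz $Y = \mathcal{P}_\Omega \mathcal{P}_T\, \mathcal{A}^{-1}(U_0 V_0^T)$, which by construction is supported on $\Omega$ and whose $T$-component is exactly $U_0 V_0^T$. The hard part will be verifying the strict bound $\|\mathcal{P}_{T^\perp}(Y)\| < 1$: one expands $\mathcal{A}^{-1}$ as a Neumann series in $(\mathcal{P}_T \mathcal{P}_\Omega \mathcal{P}_T - p\,\mathcal{P}_T)$ and must control the spectral norm of each resulting sum of independent rank-one random matrices while carefully tracking the incoherence constants. It is precisely the suboptimal moment bookkeeping in this step that forces the $N^{5/4}$ sampling rate rather than the near-optimal $N$; I expect this spectral control to be the dominant obstacle, as it demands repeated Bernstein-style tail bounds on heavy-tailed random operator terms.

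Finally, I would collect the failure probabilities of the invertibility event and of each concentration and spectral-norm event through a union bound, choosing the absolute constants $c_1$ (in the sampling threshold $|\Omega| \geq c_1 N^{5/4} r \log N$) and $c_2$ so that the total failure probability is at most $c_2 N^{-3}\log N$. On the complementary high-probability event all three certificate conditions and the injectivity of $\mathcal{P}_\Omega|_T$ hold, so $X_0$ is the unique nuclear-norm minimizer, completing the argument.
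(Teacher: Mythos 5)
The paper does not prove this lemma at all: it is quoted verbatim from \cite{nuclear} (the Cand\`es--Recht exact-completion theorem) and imported as a known result, with Theorem \ref{theo_equivalence_nuclear} serving as the only new ingredient needed to make it applicable to the proposed surrogate. Your proposal therefore takes a genuinely different route --- you set out to re-derive the cited theorem itself via the dual-certificate machinery. As an outline of that literature proof, your sketch is faithful and correctly structured: the tangent space $T$, the three certificate conditions together with injectivity of $\mathcal{P}_\Omega|_T$ for uniqueness, the concentration estimate $\|\mathcal{P}_T \mathcal{P}_\Omega \mathcal{P}_T - p\,\mathcal{P}_T\| \leq p/2$, the least-squares ansatz for $Y$ with its Neumann-series expansion, and the observation that the crude spectral bookkeeping is what produces the suboptimal $N^{5/4}$ threshold are all the right ingredients, and you correctly supply the incoherence hypothesis (absent from the lemma's statement) from the random model of \cite{nuclear}. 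What your write-up buys is self-containedness and a clear view of where the sampling exponent comes from; what the paper's citation buys is brevity and rigor by reference. Be aware, though, that as submitted your argument is a plan rather than a proof: the decisive technical content --- the repeated moment/Bernstein-type tail bounds on the decoupled random operator sums in the Neumann series, and the explicit constant-tracking that yields failure probability $c_2 N^{-3}\log N$ --- occupies the bulk of the original paper and is named but not executed here, so on its own it would not substitute for the citation.
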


\textit{Remark:} 
One may concern the significance of our relaxation since the equivalence between problem (\ref{obj_21}) and the nuclear norm surrogate. 
Roughly speaking, the main merits include the tractable optimization and scalability of our model. 
On the one hand, the optimization is completely parameter-free. Compared with gradient-based methods and ADMM-based methods, no hyper-parameters  (\textit{e.g.}, step-size in gradient-based methods, increasing coefficient in ADMM-based methods, \textit{etc.}) are required, which leads to the simple optimization. 
On the other hand, our model is well formulated from the mathematical aspect, 
since \textbf{the mathematical form is common in machine learning}. 
Therefore, it is easy to incorporate other mechanisms without obvious 
expenses and extra derivation. 
As we show in the next section, the optimization of the whole model is 
analogous to the one of problem (\ref{obj_21}) after introducing the 
correlation learning of columns. 
Contrastively, the existing models that attempt to integrate the 
additional information of matrices and the nuclear norm surrogate 
(\textit{e.g.}, LRFD \cite{LRFD}, S$^3$LR \cite{S3LR}, \textit{etc}.) 
needs a great deal of time to train. 

\subsection{Noisy Case}

Similarly, the noisy matrix completion can be modeled by adding an extra regularization, 
\begin{equation} \label{obj_no_graph_noisy}
    \begin{split}
    \min \limits_{X, D} & \overbrace{\|X \odot P - M\|_F^2 + \gamma {\rm tr}(X D X^T)}^{\mathcal J}, \\
    s.t. & ~ {\rm tr}(D^\dag) = 1, D \in \mathbb{S}_{+}^n.
    \end{split}
\end{equation}
Without any additional proofs, Theorem \ref{theo_equivalence} and \ref{theo_solution_D} can be easily extended into the noisy case. The solution of the subproblem regarding $D$ is given by Eq. (\ref{solution_D}). Since the subproblem to solve $X$ is an unconstrained problem, we can take the derivative and set it to 0,
\begin{equation}
    \nabla_X \mathcal J = 2 X \odot P - 2 M \odot P + 2 \gamma X D = 0.
\end{equation}
Similar with Eq. (\ref{solution_X_noiseless}), $X$ can be solved analytically by  adding a perturbation, 
\begin{equation} \label{solution_X_noisy}
        \bm x^i = \bm m^i P_i (P_i + \gamma \hat D)^{-1} .
\end{equation}
Motived by Theorem \ref{theo_solution_X}, $\|\nabla_X \mathcal J\| \leq 2 \delta u \|M\|$ always holds if $X$ is approximately computed by Eq. (\ref{solution_X_noisy}). Obviously, the computational cost of every iteration is $O(m n^2)$. Figure \ref{figure_illustration} gives a vivid illustration of the result of the noisy model.

Interestingly, if $D \in \mathbb{S}_{++}^n$, then problem (\ref{obj_no_graph_noisy}) can be regarded as a Maximum A Posterior (\textit{MAP}) model from the probabilistic perspective. In the probabilistic model, $X$ is a random variable, and $M$ is regarded as supervised information. Hence, the objective is to solve 
\begin{equation} \label{eq_map}
    \max \limits_{X} p(X | M) \Leftrightarrow \max \limits_{X} p(M | X) \cdot P(X) .
\end{equation}
Recall that $M_{ij} = (X_{ij})_* + \varepsilon_{ij}$ if $(i, j) \in \Omega$. Suppose that $\varepsilon_{ij} \sim \mathcal{N}(0, I)$. Therefore, $p(M_{ij} | X_{ij})$ can be modeled as $\mathcal{N}(X_{ij}, I)$ if $(i, j) \in \Omega$. To be convenient, $p(M_{ij} | X_{ij}) = 1$ if $(i, j) \notin \Omega$. We further assume that the prior distribution of $X$ is a matrix Gaussian distribution, \textit{i.e.}, $p(X) = \mathcal{MN}(0, I, \frac{1}{2 \gamma} D^{-1})$ \footnote{
$\mathcal{MN}(E, \Sigma_1, \Sigma_2) = \frac{\exp(-\frac{1}{2} {\rm tr}({\Sigma_2}^{-1} (X - E)^T {\Sigma_1}^{-1} (X - E)))}{(2 \pi)^{mn / 2} |{\Sigma_1}|^{n/2} |{\Sigma_2}|^{m/2}}$ where $E \in \mathbb{R}^{m \times n}$ is the mean, $\Sigma_1 \in \mathbb{R}^{m \times m}$ and $\Sigma_2 \in \mathbb{R}^{n \times n}$ represent covariance matrix of row and column, respectively. 
}. 
Take the $\log$ of Eq. (\ref{eq_map}),
\begin{equation}
    \begin{split}
    & \log p(M | X) + \log(X) \\
    = & \sum \limits_{i,j} \log p(M_{ij} | X_{ij}) + \log \mathcal{MN}(0, I, \frac{1}{2\gamma}D^{-1}) \\
    = & \sum \limits_{(i,j) \in \Omega} \log \mathcal{N}(0,1) + \log \mathcal{MN}(0, I, \frac{1}{2\gamma}D^{-1}) \\
    = & -\|X \odot P - M\|_F^2 - \gamma {\rm tr}(X D X^T) - \frac{n}{2} \log |D^{-1}| + C , \\
    \end{split}
\end{equation}
where $C$ denotes the constant term. Therefore, Eq. (\ref{eq_lagrangian}) is equivalent to
\begin{equation}
    \min \limits_{X, D} \|X \odot P - M\|_F^2 + \gamma {\rm tr}(X D X^T) +  \frac{n}{2} \log |D^{-1}| .
\end{equation}
Note that $\log |D^{-1}|$ can be viewed as a penalty term such that eigenvalues of $D^{-1}$ will not be too large. To simplify the model, the constraint ${\rm tr}(D^{-1})$ is used to replace $|D^{-1}|$, which can restrict eigenvalues as well. Specifically speaking, if ${\rm tr}(D^{-1}) = 1$, $|D^{-1}| < {\rm tr}(D^{-1}) = 1$ always holds. In sum, Eq. (\ref{obj_no_graph_noisy}) is thus derived from the probabilistic aspect.


\begin{algorithm}[t]
    \caption{Algorithm to solve NCARL (defined in Eq. (\ref{obj})).}
    \label{alg}
    \begin{algorithmic}
        \REQUIRE The tradeoff hyper-parameter $\alpha$, sparsity $k$, mask matrix $P$, observed entries $M$, perturbation coefficient $\delta = 10^{-6}$, and maximum iterations $t_{m} = 50$.
        \STATE $X \leftarrow M$.
        \REPEAT
            \STATE $l_{ij} \leftarrow \|\bm x_i - \bm x_j\|_2^2$.
            \STATE Update $S$ by Eq. (\ref{solution_S}).
            \STATE $S \leftarrow \frac{S + S^T}{2}$.
            \STATE Update $D$ by Eq. (\ref{solution_D}).
            \STATE $L \leftarrow D_S - S$.
            \STATE $\hat Q \leftarrow D + \alpha L + \delta I$.
            \STATE $F_i \leftarrow P_i \hat Q^{-1} P_i$.
            \STATE Update $X$ according to Eq. (\ref{solution_X_noiseless}): $\bm x^i \leftarrow \bm m^i (F_i)^{\bm p^i +} \hat Q^{-1}$.
        \UNTIL{convergence or exceeding maximum iteration $t_{m}$.}
        \ENSURE Recovered matrix $X$.
    \end{algorithmic}
\end{algorithm}

\section{Adaptive correlation Learning: Completion  correlation of Columns} 
Although the matrix completion is usually formulated as a brief optimization problem, 
the concise formulation may hide some important properties in practice. 
In this section, we will design a rational mechanism to utilize some 
additional information to improve the performance of matrix completion. 
Meanwhile, it also verifies the compatibility of the surrogate proposed in 
the previous section.

In practical applications, columns probably have underlying connections 
with each other. 
For instance, in recommendation systems, a column vector may represent 
the preferences of a user to diverse items. According to the obtained 
information (\textit{i.e.,} the observed entries), 
we can judge whether two users are similar. 
Therefore, the two recovered user vectors, which are highly similar, 
should be more analogous. 
Inspired by this, we have the following assumption,  
\begin{myAssumption} \label{assumption}
    Two vectors are similar if the Euclidean distance between them is small. 
    Formally, given $\bm x_i$, $\bm x_j$, and $\bm x_k$, 
    $\bm x_i$ are more similar with $\bm x_j$ compared with 
    $\bm x_k$ if $\|\bm x_i - \bm x_j\|_2 < \|\bm x_i - \bm x_k\|_2$. 
\end{myAssumption}

 Suppose that we have obtained similarities of some pairs of column vectors as the prior knowledge. Formally, let $S_{ij} \geq 0$ denote the similarity of $(\bm x_i)_*$ and $(\bm x_j)_*$ where $(\bm x_i)_*$ is the $i$-th column of the optimal matrix $X_*$. Clearly, $S$ should be symmetric. Naturally, the recovered matrix $X$ should keep these similarities. More formally, 
\begin{equation}
    \begin{split}
    & \min \limits_{X} \sum \limits_{i, j = 1}^n S_{ij} \|\bm x_i - \bm x_j\|_2^2 
    \Leftrightarrow \min \limits_{X} {\rm tr}(X L X^T) ,
    \end{split}
\end{equation}
where $L = D_S - S$ and $D_S$ is a diagonal matrix where $(D_S)_{ii} = \sum_{j=1}^n S_{ij}$. The matrix, $L$, is usually called Laplacian matrix in spectral graph theory \cite{spectral_graph_theory}. Hence, the noiseless model is formulated as
\begin{equation}
    \begin{split}
    \min \limits_{X, D} &~ {\rm tr}(X D X^T) + \alpha {\rm tr}(X L X^T), \\
    s.t. &~ X \odot P = M, {\rm tr}(D^\dag) = 1, D \in \mathbb{S}_+^n .
    \end{split}
\end{equation}
However, the similarity matrix $S$ is frequently unavailable in most matrix completion scenarios. Inspired by self-supervised learning \cite{self-supervised}, we compute $S$  adaptively in the training phase. Suppose that the recovered matrix is $X^{(t)}$ at step $t$. Then, the similarity $S$ is updated according to $X^{(t)}$. Accordingly, the key is how to  obtain a rational similarity matrix $S$ based on Assumption \ref{assumption}. Intuitively, for every column vector, the number of correlated columns should not be too large. In other words, $\bm s^i$ should be sparse. Besides, we normalize $S$ such that $S \textbf{1} = \textbf{1}$. In this paper, we design a novel point-wise similarity learning model, which can precisely control the sparsity degree. 
The designed correlation learning model is 
\begin{equation} \label{obj_graph}
    \begin{split}
    & \min \limits_{S \textbf{1} = \textbf{1}, S \geq 0} \sum \limits_{i, j = 1}^n S_{ij} \|\bm x_i^{(t)} - \bm x_j^{(t)}\|_2^2 +  \|\bm \mu^T S\|_F^2 \\
    \Leftrightarrow & \min \limits_{S \textbf{1} = \textbf{1}, S \geq 0} {\rm tr}(X^{(t)} L X^{(t)T}) + \|\bm \mu^T S\|_F^2 ,
    \end{split}
\end{equation}
where $\bm \mu \in \mathbb R^{n}$ represents hyper-parameters for $n$ columns. Theorem \ref{theo_graph} states that the $n$ hyper-parameters can be converted into one parameter if we assume that the sparsity degrees of columns are identical. According to Theorem \ref{theo_graph}, Corollary \ref{corollary_invariant} demonstrates that the correlation learning mechanism is scaling-invariant. 
\begin{myTheo} \label{theo_graph}
    In problem (\ref{obj_graph}), each row of the optimal $S$ is $k$-sparse (\textit{i.e.}, $\forall i, \|\bm s^i\|_0 = k$) if $\bm \mu$ satisfies
    \begin{equation}
        \frac{1}{2} (k \bm l_{i}^{(k)} - \sum \limits_{v=1}^k \bm l_{i}^{(v)} ) < \mu_i^2 \leq \frac{1}{2} (k \bm l_{i}^{(k+1)} - \sum \limits_{v=1}^k \bm l_{i}^{(v)} ), 
    \end{equation}
    where $i = 1, 2, \cdots, n$, $l_{ij} = \|\bm x_i - \bm x_j\|_2^2$ and $\bm l_i^{(k)}$ is the $k$-th smallest value in $\{l_{ij}\}_{j=1}^n$. Moreover, if $\mu_i^2 = \frac{1}{2} (k \bm l_{i}^{(k+1)} - \sum \limits_{v=1}^k \bm l_{i}^{(v)})$, $S$ can be solved by 
    \begin{equation} \label{solution_S}
        S_{ij} = (\frac{\bm l_i^{(k+1)} - l_{ij}}{\sum \limits_{v=1}^k \bm l_i^{(k+1)} - \bm l_i^{(v)}})_+ .
    \end{equation}
\end{myTheo}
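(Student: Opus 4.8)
The plan is to exploit the fact that problem (\ref{obj_graph}) is completely separable across the rows of $S$: the fitting term satisfies $\sum_{i,j} S_{ij} l_{ij} = \sum_i \langle \bm s^i, \bm l_i\rangle$, the penalty $\|\bm\mu^T S\|_F^2$ contributes $\mu_i^2 \|\bm s^i\|_2^2$ to row $i$, and the constraints $\bm s^i \geq 0$ and $\bm s^i \textbf{1} = 1$ are themselves row-wise. Hence it suffices to solve, for each fixed $i$, the single-row problem $\min_{\bm s \geq 0,\, \bm s^T \textbf{1} = 1} \langle \bm s, \bm l_i\rangle + \mu_i^2 \|\bm s\|_2^2$, where $\bm l_i = (l_{i1},\dots,l_{in})^T$ and I abbreviate $\gamma := \mu_i^2$.

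First I would complete the square to rewrite the row objective as $\gamma\,\|\bm s + \bm l_i/(2\gamma)\|_2^2$ up to an additive constant, so that the subproblem becomes the Euclidean projection of $-\bm l_i/(2\gamma)$ onto the probability simplex $\{\bm s : \bm s \geq 0,\ \bm s^T\textbf{1}=1\}$. I would then write the KKT system with a scalar multiplier $\eta$ for the equality constraint and nonnegative multipliers $\bm\beta$ for $\bm s \geq 0$; stationarity gives $s_j = \eta + \beta_j - l_{ij}/(2\gamma)$, and complementary slackness $\beta_j s_j = 0$ collapses this to the thresholded form $s_j = \big(\eta - l_{ij}/(2\gamma)\big)_+$.

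Next I would enforce the prescribed cardinality. Because the map $l_{ij}\mapsto \big(\eta - l_{ij}/(2\gamma)\big)_+$ is nonincreasing in $l_{ij}$, the $k$ surviving (positive) coordinates are exactly those attaining the $k$ smallest distances $\bm l_i^{(1)}\le\cdots\le\bm l_i^{(k)}$. Imposing that coordinate $k$ stays positive while coordinate $k+1$ vanishes yields the two-sided threshold $\bm l_i^{(k)}/(2\gamma) < \eta \le \bm l_i^{(k+1)}/(2\gamma)$. Substituting $s_j = \eta - l_{ij}/(2\gamma)$ for the top $k$ coordinates into $\sum_j s_j = 1$ determines the multiplier as $\eta = \tfrac1k\big(1 + \tfrac{1}{2\gamma}\sum_{v=1}^k \bm l_i^{(v)}\big)$; plugging this $\eta$ back into the two threshold inequalities and clearing denominators converts them, after routine algebra, into exactly the stated interval $\tfrac12(k\bm l_i^{(k)} - \sum_{v=1}^k \bm l_i^{(v)}) < \mu_i^2 \le \tfrac12(k\bm l_i^{(k+1)} - \sum_{v=1}^k \bm l_i^{(v)})$.

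Finally, for the closed form I would take $\mu_i^2$ at the right endpoint, i.e. $2\gamma = k\bm l_i^{(k+1)} - \sum_{v=1}^k \bm l_i^{(v)}$, and verify by direct substitution into the expression for $\eta$ that it simplifies to $\eta = \bm l_i^{(k+1)}/(2\gamma)$; then $s_j = \big(\bm l_i^{(k+1)} - l_{ij}\big)_+/(2\gamma)$ reproduces Eq. (\ref{solution_S}), since the denominator $2\gamma$ equals $\sum_{v=1}^k(\bm l_i^{(k+1)} - \bm l_i^{(v)})$. The main obstacle I anticipate is the cardinality bookkeeping rather than the optimization itself: one must justify that an optimal dual variable $\eta$ exists with exactly $k$ positive primal coordinates, align those coordinates with the $k$ smallest $l_{ij}$ (handling ties in the distances), and track the strict-versus-weak inequalities carefully so that the two endpoints of the $\mu_i^2$-interval come out exactly as claimed.
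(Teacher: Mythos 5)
Your proposal is correct and follows essentially the same route as the paper's proof: row-wise separation of problem (\ref{obj_graph}), rewriting each row subproblem as a Euclidean projection onto the probability simplex, KKT analysis yielding the thresholded form $s_j = (\eta - l_{ij}/(2\mu_i^2))_+$, determination of the multiplier from the sum constraint, and evaluation at the right endpoint of the $\mu_i^2$-interval to obtain Eq. (\ref{solution_S}). Your closing remarks on ties and strict-versus-weak inequalities are in fact points the paper's own proof glosses over (it simply assumes an ordering $f_1 \leq \cdots \leq f_n$), so your version is, if anything, slightly more careful.
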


\begin{algorithm}[t]
    \caption{Algorithm to solve NCARL-noisy.}
    \label{alg_noisy}
    \begin{algorithmic}
        \REQUIRE The tradeoff hyper-parameter $\alpha$ and $\gamma$, 
                    sparsity $k$, 
                    mask matrix $P$, observed entries $M$, 
                    perturbation coefficient $\delta = 10^{-6}$, 
                    and maximum iterations $t_{m} = 50$.
        \STATE $X \leftarrow M$.
        \REPEAT
            \STATE $l_{ij} \leftarrow \|\bm x_i - \bm x_j\|_2^2$.
            \STATE Update $S$ by Eq. (\ref{solution_S}).
            \STATE $S \leftarrow \frac{S + S^T}{2}$.
            \STATE Update $D$ by Eq. (\ref{solution_D}).
            \STATE $L \leftarrow D_S - S$. 
            \STATE $\hat Q \leftarrow P + D + \delta I + \alpha L$.
            \STATE Update $X$ according to Eq. (\ref{solution_X_noiseless}): $\bm x^i \leftarrow \bm m^i P_i \hat Q^{-1}$.
        \UNTIL{convergence or exceeding maximum iteration $t_{m}$.}
        \ENSURE Recovered matrix $X$.
    \end{algorithmic}
\end{algorithm}

\begin{myCorollary} \label{corollary_invariant}
    The adaptive graph learning process is scaling-invariant. In other words, if $\hat X = k X$, then the learned similarity $\hat S$ satisfies $\hat S = S$. 
\end{myCorollary}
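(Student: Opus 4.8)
The plan is to observe that the closed-form solution in Eq. (\ref{solution_S}) is homogeneous of degree zero in the pairwise squared distances $\{l_{ij}\}$, so that any global positive rescaling of those distances leaves $S$ unchanged. To keep the algebra clean I would write the scaling factor as $c$ (the corollary denotes it $k$, which unfortunately clashes with the sparsity level $k$ appearing in Theorem \ref{theo_graph}; I would disambiguate this explicitly at the start of the proof). The entire argument then reduces to tracking how the quantities entering Eq. (\ref{solution_S}) transform under $\hat X = c X$.

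First I would compute the effect on the distances: since each column satisfies $\hat{\bm x}_i = c\, \bm x_i$, we get $\hat l_{ij} = \|c\bm x_i - c\bm x_j\|_2^2 = c^2 \|\bm x_i - \bm x_j\|_2^2 = c^2 l_{ij}$ for every pair $(i,j)$. Second, because $c^2 > 0$ for any nonzero $c$, multiplication by this constant preserves the relative order of the entries of each row $\{l_{ij}\}_{j=1}^n$; hence the order statistics rescale identically, $\hat{\bm l}_i^{(v)} = c^2\, \bm l_i^{(v)}$ for all $v$ and $i$, and in particular the columns selected as the $k$ nearest are unchanged. Third, I would substitute these scaled quantities into Eq. (\ref{solution_S}): the numerator $\hat{\bm l}_i^{(k+1)} - \hat l_{ij}$ and every summand $\hat{\bm l}_i^{(k+1)} - \hat{\bm l}_i^{(v)}$ of the denominator each acquire the common factor $c^2$, which cancels. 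Since the thresholding $(\cdot)_+$ commutes with multiplication by the positive constant $c^2$, the support pattern is preserved as well, yielding $\hat S_{ij} = S_{ij}$ for all $i,j$, which is exactly the claim.

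There is no serious analytic obstacle here; the result is essentially the degree-zero homogeneity of the ratio in Eq. (\ref{solution_S}). The only points needing care are the notational overloading of $k$ noted above, and confirming that the learning \emph{process} (not just the fixed formula) is scale-invariant: one should verify via Theorem \ref{theo_graph} that the prescribed $\mu_i^2 = \tfrac12\big(k\,\bm l_i^{(k+1)} - \sum_{v=1}^k \bm l_i^{(v)}\big)$ also rescales by $c^2$, so that the same sparsity level $k$ is genuinely maintained under scaling and the two processes produce identical $S$. Both checks follow immediately from the transformation $\hat{\bm l}_i^{(v)} = c^2 \bm l_i^{(v)}$ established above.
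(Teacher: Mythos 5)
Your proof is correct and follows exactly the route the paper intends: the paper gives no separate proof of this corollary but derives it directly from Theorem \ref{theo_graph}, since the closed-form solution in Eq. (\ref{solution_S}) is a ratio of distance differences and is therefore homogeneous of degree zero under $l_{ij} \mapsto c^2 l_{ij}$. Your additional checks (the preserved ordering of the $\bm l_i^{(v)}$, the $c^2$-rescaling of $\mu_i^2$ preserving the sparsity level, and the notational clash of $k$) only make the argument more careful than the paper's implicit one.
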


To keep $S$ symmetric, we set $S \leftarrow (S + S^T) / 2$.
By unifying the above correlation learning and the matrix completion model proposed in Section \ref{section_mc}, the final objective of \textit{NCARL} can be formulated as 
\begin{equation}
\begin{split} \label{obj}
    \min \limits_{X, D, S} ~ &  {\rm tr}(X (D + \alpha L) X^T) + \alpha \|\bm \mu^T S\|_F^2, \\
    s.t. ~ & X \odot P = M, {\rm tr}(D^\dag) = 1, D \in \mathbb{S}_+^n, S \textbf{1} = \textbf{1}, S \geq 0.\\
\end{split}
\end{equation}
It should be pointed out that the added correlation learning mechanism does not impact the optimization of $D$. The only difference is that $D$ should be replaced by $D + \alpha L$ in Eq. (\ref{solution_X_noiseless}) when optimizing $X$. Accordingly, the adaptive correlation learning is compatible with the non-convex surrogate such that problem (\ref{obj}) can be optimized by the alternative method as well. The entire procedure is summarized in Algorithm \ref{alg}.

Similarly, the model for noisy case, \textit{NCARL-noisy}, is given as 
\begin{equation} \label{obj_noisy}
    \begin{split}
        \min \limits_{X, D} ~ & \|X \odot P - M\|_F^2 + \gamma {\rm tr}(X D X^T) \\
        & + \alpha({\rm tr}(X L X^T) + \|\bm \mu^T S\|_F^2), \\
        s.t. ~ & {\rm tr}(D^\dag) = 1, D \in \mathbb{S}_+^n, S \textbf{1} = \textbf{1}, S \geq 0 .
    \end{split}
\end{equation}
Analogous to Eq. (\ref{solution_X_noisy}), we have $\bm x^i = \bm m^i P_i (P_i + \gamma \hat D + \alpha L)^{-1}$. 
The whole procedure to solve NCARL-noisy is similar with Algorithm \ref{alg}, 
and the concrete algorithm is stated in Algorithm \ref{alg_noisy}. 
The computational cost of updating $X$ and $D$ does not change. 
The optimization of $S$ requires $O(n^2 \log n + n^2)$ and thus, 
the time complexity of each iteration is still $O(m n^2)$.

\begin{figure*}[t]
    \centering
    \subfigure[Image-1 with random mask] {
        \includegraphics[width=0.21\linewidth]{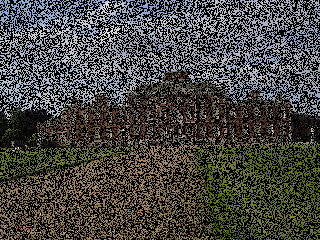}
    }
    \subfigure[Recovered Image-1] {
        \includegraphics[width=0.21\linewidth]{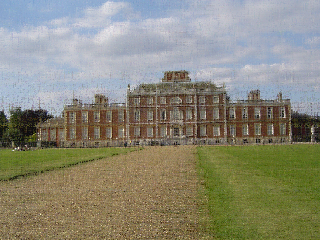}
    }
    \subfigure[Image-2 with random mask] {
        \includegraphics[width=0.21\linewidth]{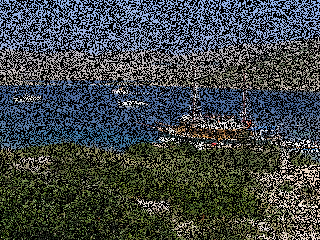}
    }
    \subfigure[Recovered Image-2] {
        \includegraphics[width=0.21\linewidth]{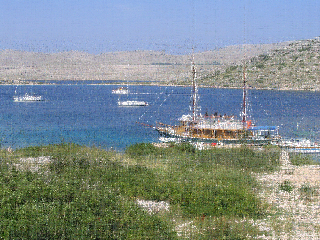}
    }

    \subfigure[Image-1 with block mask] {
        \includegraphics[width=0.21\linewidth]{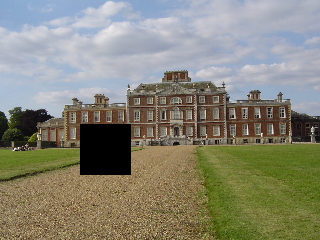}
    }
    \subfigure[Recovered Image-1] {
        \includegraphics[width=0.21\linewidth]{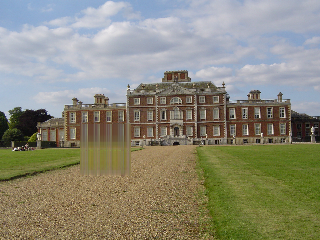}
    }
    \subfigure[Image-2 with block mask] {
        \includegraphics[width=0.21\linewidth]{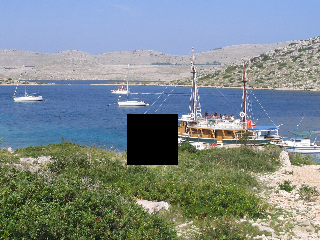}
    }
    \subfigure[Recovered Image-2] {
        \includegraphics[width=0.21\linewidth]{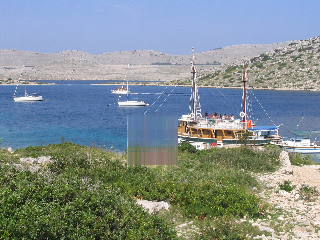}
    }

    \caption{Recovered images under different kinds of noise. 
    For random noise, the missing rate, $\epsilon$, is set as 0.5, 
    while the missing block is $50 \times 50$ for the block mask. 
    The colorful image is obtained by recovering images from three 
    channels individually.}
    \label{figure_noise}
\end{figure*}

\begin{table*}
    \centering
    \renewcommand\arraystretch{1.2}
    \setlength{\tabcolsep}{2mm}
    \caption{MSE and consuming seconds on three synthetic matrices. 
    $\epsilon$ represents the missing rate. }
    \label{table_synthetic}
    \begin{tabular}{c c c c c c c c c c c c c}
    \hline
    
    \hline
        Size & $\epsilon$ & Metric & Nuclear & F-Nuclear & $S_{1/2}$ & $S_{2/3}$ & WNNM & RegL1 & FGSR & LRFD & S$^3$LR & NCARL \\
    \hline
    \hline
        \multirow{4}{*}{$500 \times 300$} &\multirow{2}{*}{0.8} 
        & MSE & 0.0428 & 0.0515 & 0.0410 & 0.0347 & 0.0958 & 0.0473 & 0.0549 & 0.1921 & \textbf{0.0387} & \textbf{0.0344} \\
        & & Time & 3.9667 & 4.9850 & 3.7477 & \textbf{3.6688} & 18.274 & 34.134 & 9.7684 & 116.76 & 69.783 & \textbf{3.2725} \\
        & \multirow{2}{*}{0.9} & MSE & 0.0446 & 0.3484 & 0.0991 & 0.0709 & 0.0952 & 0.0492 & 0.0556 & 0.2996 & \textbf{0.0406} & \textbf{0.0397} \\
        & & Time & 3.7660 & 6.6923 & 1.6861 & \textbf{1.6662} & 11.778 & 40.323 & 7.4459 & 119.77 & 85.002 & \textbf{1.6175} \\
    \hline
        \multirow{4}{*}{$1500 \times 1000$} & \multirow{2}{*}{0.8}
        & MSE & 0.0409 & 0.1079 & 0.0433 & 0.0357 & 0.1006 & 0.0584 & \textbf{0.0377} & 0.5788 & 0.0385 & \textbf{0.0328} \\
        & & Time & 37.993 & 33.935 & 56.461 & 52.000 & 610.76 & 165.63 & \textbf{32.941} & $> 3000$ & 1612.7 & \textbf{26.262} \\
        & \multirow{2}{*}{0.9} & MSE & 0.0404 & 0.5451 & 0.0599 & 0.0450 & 0.1028 & 0.0527 & 0.0453 & 0.6978 & \textbf{0.0389} & \textbf{0.0326} \\
        & & Time & 42.730 & 36.529 & 29.242 & \textbf{28.973} & 237.84 & 126.18 & 86.099 & $> 3000$ & 1556.8 & \textbf{27.176} \\
    \hline  
        \multirow{4}{*}{$4000 \times 3000$} & \multirow{2}{*}{0.8}
        & MSE & 0.0383 & 0.1640 & 0.0445 & \textbf{0.0343} & - & - & 0.0630 & - & - & \textbf{0.0255} \\
        & & Time & 2042.9 & \textbf{374.65} & 1290.3 & 1393.1 & - & - & 713.15 & - & - & \textbf{229.50} \\
        & \multirow{2}{*}{0.9} & MSE & \textbf{0.0360} & 0.5988 & 0.0529 & 0.0408 & - & - & 0.0474 & - & - & \textbf{0.0287} \\
        & & Time & 2007.2 & 369.95 & 564.55 & \textbf{547.60} & - & - & 633.48 & - & - & \textbf{189.91} \\
    \hline
    
    \hline
    \end{tabular}
    
\end{table*}

\begin{table*}
    \centering
    \renewcommand\arraystretch{1.2}
    \setlength{\tabcolsep}{2mm}
    \caption{MSE on two noiseless datasets. \textit{Image-1} and \textit{Image-2} denote the 
    images chosen from MSRC-V2 while \textit{ML-100K} denotes the recommendation system dataset, MovieLens-100K. $\epsilon$ represents the missing rate. }
    \label{table_MSE}
    \begin{tabular}{c c c c c c c c c c c c}
    \hline
    
    \hline
         & $\epsilon$ & Nuclear & F-Nuclear & $S_{1/2}$ & $S_{2/3}$ & WNNM & RegL1 & FGSR & LRFD & S$^3$LR & NCARL \\
    \hline
    \hline
        \multirow{4}{*}{Image-1} & 0.4  & 0.1285 & 0.1285 & 0.1298 & 0.1259 & 0.1930 & 0.1285 & 0.1460 & 0.1824 & \textbf{0.1223} & \textbf{0.1267} \\
        & 0.5 & 0.1338 & 0.1338 & 0.1359 & 0.1311 & 0.1926 & 0.1338 & 0.1492 & 0.2455 & \textbf{0.1268} & \textbf{0.1309} \\
        & 0.6  & 0.1396 & 0.1396 & 0.1441 & 0.1380 & 0.1967 & 0.1396 & 0.1555 & 0.2785 & \textbf{0.1328} & \textbf{0.1385} \\
        & 0.7  & 0.1468 & 0.1468 & 0.1559 & 0.1464 & 0.2065 & 0.1468 & 0.1621 & 0.3374 & \textbf{0.1397} & \textbf{0.1451} \\
         
    \hline
        \multirow{4}{*}{Image-2} & 0.4 & 0.1880 & 0.1880 & 0.1967 & 0.1877 & 0.2911 & 0.1880 & 0.2152 & 0.2933 & \textbf{0.1786} & \textbf{0.1766} \\
        & 0.5 & 0.1913 & 0.1913 & 0.2031 & 0.1925 & 0.2847 & 0.1913 & 0.2169 & 0.7527 & \textbf{0.1813} & \textbf{0.1804} \\
        & 0.6 & 0.1975 & 0.1975 & 0.2119 & 0.2002 & 0.2908 & 0.1975 & 0.2176 & 0.8971 & \textbf{0.1886} & \textbf{0.1892} \\
        & 0.7  & 0.2068 & 0.2067 & 0.2266 & 0.2117 & 0.3003 & 0.2067 & 0.2247  & 0.9773 & \textbf{0.1973} & \textbf{0.1991} \\
    \hline  
        \multirow{4}{*}{ML-100K} & 0.4 & 0.2799 & 0.3570 & 0.3818 & 0.3342 & 0.3877 & 0.2763 & 0.3147 & 0.3962 & \textbf{0.2746} & \textbf{0.2733} \\
        & 0.5 & 0.2859 & 0.3782 & 0.4183 & 0.3650 & 0.3915 & 0.2815 & 0.3189 & 0.4219 & \textbf{0.2803} & \textbf{0.2787} \\
        & 0.6 & 0.2936 & 0.4050 & 0.4465 & 0.3918 & 0.4005 & 0.2878 & 0.3650 & 0.4463 & \textbf{0.2917} & \textbf{0.2848} \\
        & 0.7 & 0.3083 & 0.4503 & 0.4927 & 0.4386 & 0.4050 & 0.3002 & 0.3670 & 0.4511 & \textbf{0.3029} & \textbf{0.2953} \\
    \hline
    
    \hline
    \end{tabular}
    
\end{table*}

\begin{table*}[t]
    \centering
    \renewcommand\arraystretch{1.2}
    \setlength{\tabcolsep}{1.6mm}
    \caption{Consuming seconds of various models.}
    \label{table_time}
    \begin{tabular}{c c c c c c c c c c c c}
    \hline
    
    \hline
        & $\epsilon$ & Nuclear & F-Nuclear & $S_{1/2}$ & $S_{2/3}$ & WNNM & RegL1 & FGSR & LRFD & S$^3$LR & NCARL \\
    \hline
    \hline
    
        \multirow{4}{*}{IMG-1} & 0.4 & 1.5624 & \textbf{0.7689} & 4.9093 & 4.8850 & 9.4661 & 15.7755 & 1.1416 & 32.5176 & 45.3176 & \textbf{0.6179}\\
        & 0.5 & 1.5574 & \textbf{0.7913} & 4.0239 & 4.1486 & 17.7139 & 18.2746 & 1.1728 & 33.8724 & 44.3514 & \textbf{0.9211} \\
        & 0.6 & 1.6201 & \textbf{0.8880} & 3.3858 & 3.3786 & 21.0650 & 19.8961 & 1.0485 & 44.7844 & 46.4201 & \textbf{1.0525} \\
        & 0.7 & 1.5986 & \textbf{0.6103} & 2.6372 & 2.6854 & 13.8199 & 22.8290 & 0.9676 & 46.2483 & 47.9612 & \textbf{0.9640} \\
    \hline  
        \multirow{4}{*}{IMG-2} & 0.4 & 1.7186 & \textbf{1.1745} & 7.1439 & 7.5223 & 11.1063 & 15.4871 & 3.2615 & 67.1065 & 69.5478 & \textbf{1.3193} \\
        & 0.5 & 1.9493 & \textbf{1.4213} & 7.7525 & 7.5294 & 12.2682 & 17.2063 & 2.8130 & 71.0738 & 66.4808 & \textbf{1.4625} \\
        & 0.6 & 1.9046 & \textbf{1.7328} & 5.4397 & 5.5025 & 25.9807 & 18.8793 & 2.5439 & 94.2695 & 62.4828 & \textbf{1.4859} \\
        & 0.7  & 1.8639 & \textbf{1.0327} & 4.5917 & 4.3262 & 17.1553 & 21.3594 & 2.1404 & 91.5482 & 61.4658 & \textbf{1.2273} \\
    \hline
        
        \multirow{4}{*}{ML-100K} & 0.4 & \textbf{32.5527} & 38.8636 & 43.2083 & 42.8679 & 180.6862 & 1333.5766 & 53.9980 & $> 3000$ & $> 3000$ & \textbf{7.9339} \\
        & 0.5 & \textbf{34.9918} & 39.2142 & 41.6637 & 41.5215 & 144.4549 & 1538.9810 & 46.3761 & $> 3000$ & $> 3000$ & \textbf{8.0379} \\
        & 0.6 & \textbf{34.2989} & 38.8689 & 40.5688 & 40.2747 & 109.9873 & 1677.0022 & 44.2300 & $> 3000$ & $> 3000$ & \textbf{7.7124} \\
        & 0.7 & \textbf{34.7444} & 38.5751 & 38.5936 & 38.6748 & 86.8998 & 1920.2985 & 38.8642 & $> 3000$ & $> 3000$ & \textbf{8.4206} \\
    \hline
    
    \hline
    \end{tabular}
\end{table*}

\begin{figure}[t]
    \centering
    \subfigure[MSE]{
        \includegraphics[width=0.47\linewidth]{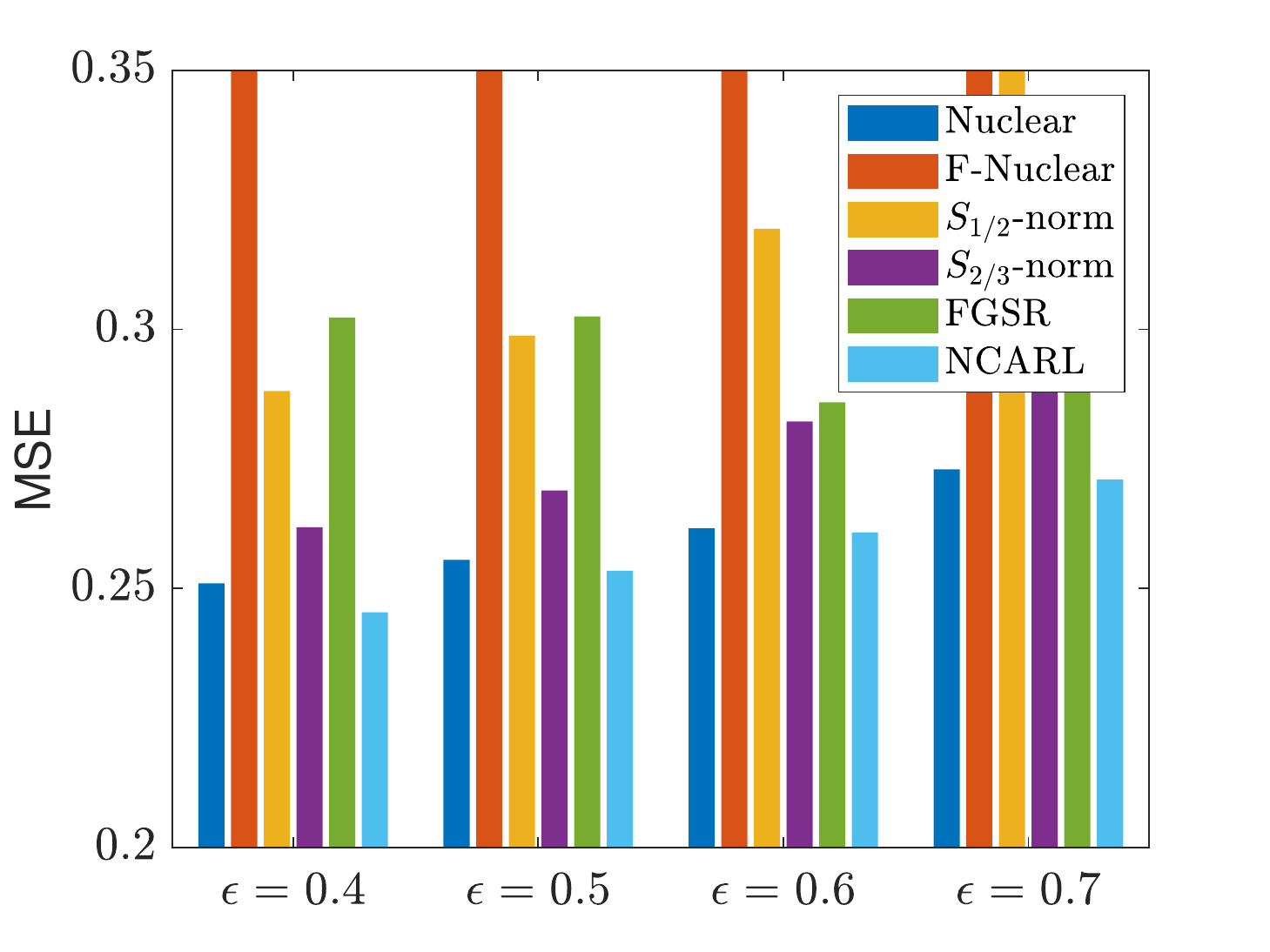}
    }
    \subfigure[Time]{
        \includegraphics[width=0.47\linewidth]{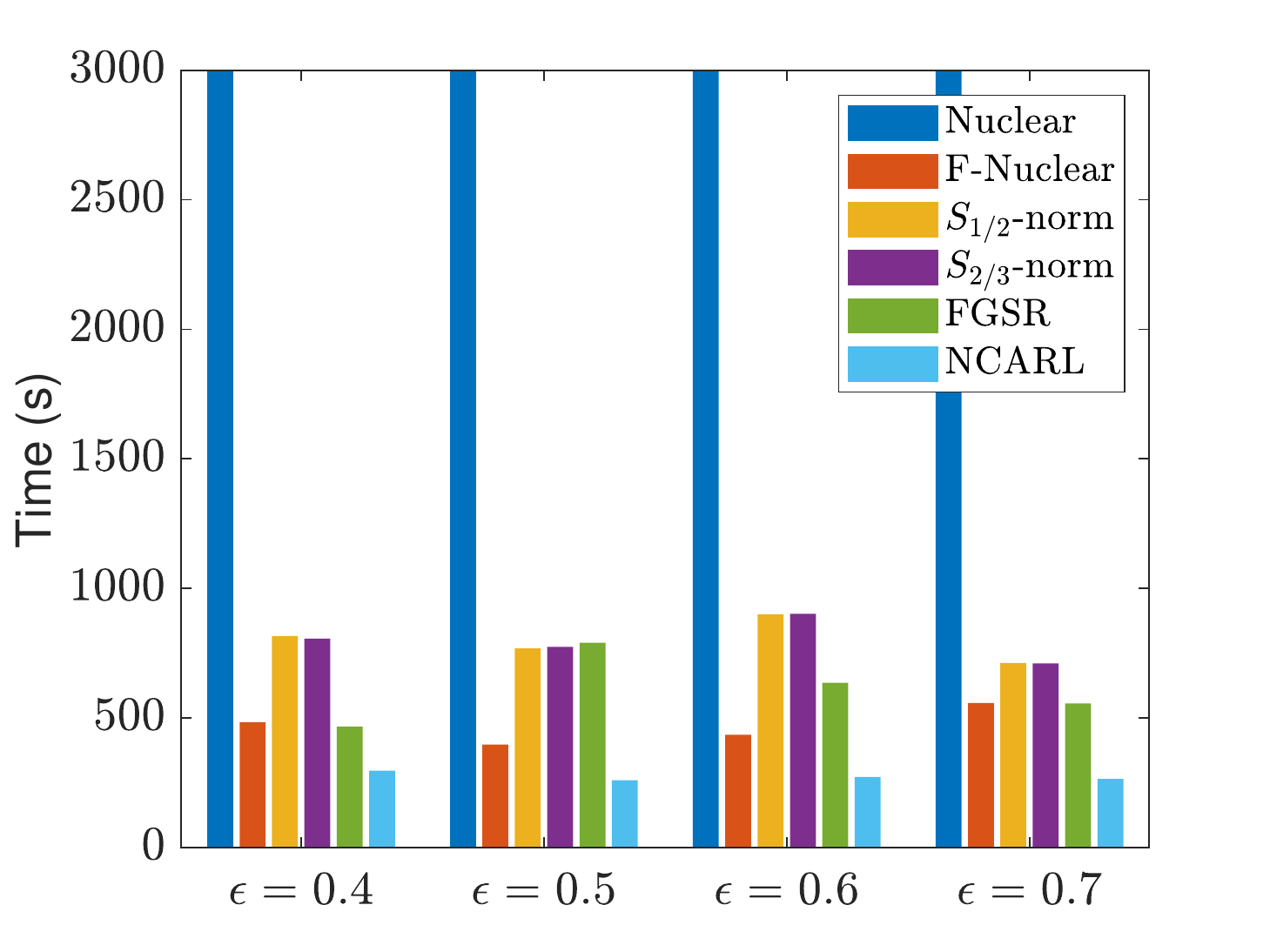}
    }
    \caption{Experiemntal results on MovieLens-1M with different $\epsilon$.
                Note that the nuclear norm consumes more than 3000s on this dataset.}
    \label{figure_movieLens_1M}
\end{figure}

\begin{figure*}[t]
    \centering
    \subfigure[Polluted Image-1] {
        \includegraphics[width=0.21\linewidth]{masked_image.png}
    }
    \subfigure[$\gamma = 10^{-6}$, $r = 240$] {
        \includegraphics[width=0.21\linewidth]{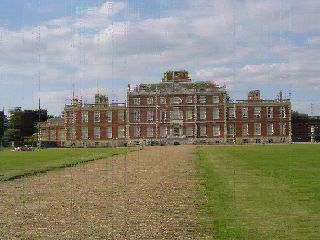}
    }
    \subfigure[$\gamma = 10^{-3}$, $r = 210$] {
        \includegraphics[width=0.21\linewidth]{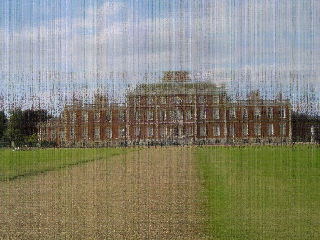}
    }
    \subfigure[$\gamma = 10^{-1}$, $r = 20$] {
        \includegraphics[width=0.21\linewidth]{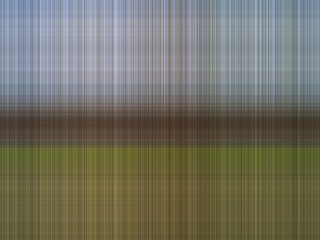}
    }

    \subfigure[Polluted Image-2] {
        \includegraphics[width=0.21\linewidth]{masked_image2.png}
    }
    \subfigure[$\gamma = 10^{-6}$, $r = 240$] {
        \includegraphics[width=0.21\linewidth]{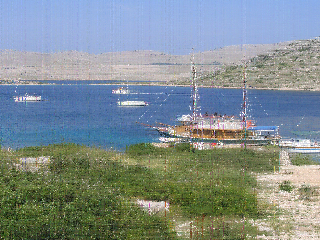}
    }
    \subfigure[$\gamma = 10^{-3}$, $r = 204$] {
        \includegraphics[width=0.21\linewidth]{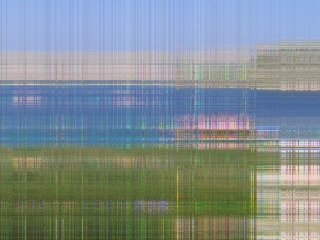}
    }
    \subfigure[$\gamma = 10^{-1}$, $r = 2$] {
        \includegraphics[width=0.21\linewidth]{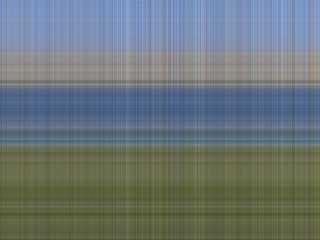}
    }
    \caption{Recovered images employing noisy model with different $\gamma$. $r$ is the average rank of recovered images from three channels. The mask rate, $\epsilon$, is set as 0.5.}
    \label{figure_illustration}
\end{figure*}

\section{Experiments} \label{section_experiment}
In this section, we test the performance of NCARL and its noisy extension on 
several real datasets. 
The experimental details, results, and analysis are reported as follows. 
All codes are implemented by MATLAB 2019b.

\subsection{Baseline Methods} 
In our experiments, 8 representative models are compared with our model, 
including classical \textit{nuclear} norm \cite{nuclear}, factored nuclear 
norm (\textit{F-Nuclear}) \cite{F-nuclear-application-1}, Schatten-$p$ norm 
($S_p$-norm) \cite{Schatten-p}, 
weighted nuclear norm (\textit{WNNM}) \cite{WNNM}, 
factored model with $\ell_1$-norm (\textit{RegL1}) \cite{RegL1}, 
factored group-sparse regularization (\textit{FGSR}) \cite{FGSR}, 
\textit{LRFD} \cite{LRFD}, and \textit{S$^3$LR} \cite{S3LR}. 
Specifically, S$^3$LR introduces the sparse subspace learning \cite{SubspaceLearning}. 
Note that the subspace learning can also be reformulated as the compatible form 
of our proposed surrogate. 
The authors \cite{S3LR} use the traditional nuclear norm and develop the optimization 
based on LADMM \cite{LADMM}. 
For Schatten-$p$ norm, we set $p$ as 1/2 and 2/3.
In particular, F-nuclear, RegL1, and FGSR are factored models, 
which are usually more efficient on large scale matrices. 
The hyper-parameters of these methods are searched in the same way 
recorded in corresponding papers.
All codes are downloaded from the authors' homepages.

\subsection{Datasets and Evaluation Metric} 
NCARL is first tested on 3 synthetic matrices with different scale, 
including $500 \times 300$, $1500 \times 1000$, and $4000 \times 3000$. 
The rank of these matrices are set as 100, 200, and 300, respectively. 
As all models performs well on the clean data when the missing entries 
is not too large, the missing rate, $\epsilon$, is set as 0.8 and 0.9. 
Moreover, we add tiny noise on 20 percent of observed entries to disturb models. 
Then, we test different models on two real images (denoted by \textit{Image-1} and \textit{Image-2}) from 
MSRC-v2 \footnote{\url{research.microsoft.com/en-us/projects/objectclassrecognition/}} \cite{msrc-v2} 
and two recommendation system datasets, MovieLens-100K and MovieLens-1M \footnote{\url{grouplens.org/datasets/movielens/}} \cite{movieLens}. 
The colorful images, which are stored as $320 \times 240 \times 3$ tensors in computer, 
are shown in Figure \ref{figure_datasets}. 
Two MovieLens datasets are two large matrices. 
Specifically, MovieLens-100K is a $943 \times 1682$ matrix
while MovieLens-1M is a $6040 \times 3952$ matrix.
For the images, we convert the image into grey pictures to obtain a matrix 
and mask $\epsilon$ ($0 < \epsilon < 1$) of pixels randomly. 
For MovieLens-100K, since only parts of entries are known, we mask $\epsilon$ of known entries randomly. 
The normalized mean-squared-error (\textit{MSE}) is employed to measure the performance of various models. 
The definition of MSE is 
\begin{equation}
    \text{MSE} = \sqrt{\frac{\sum _{(i, j) \in \Upsilon \backslash \Omega} (X_{ij} - (X_{ij})_*)^2 }{\sum _{(i, j) \in \Upsilon \backslash \Omega} (X_{ij})_{*}^{2}}} ,
\end{equation}
where $X_*$ is the true matrix and $\Upsilon$ is the set of known entries. Besides the recovery quality, the consuming time is also a vital metric in our paper.

\subsection{Experimental Setup}
There are two hyper-parameters ($\alpha$ and $k$) to tune in NCARL. 
It should be emphasized that the two hyper-parameters are introduced 
via incorporating correlation learning into the proposed surrogate that 
can be optimized via a parameter-free algorithm. 
Therefore, the hyper-parameters in NCARL do not contradict our claim about 
parameter-free optimization. 
Specifically, $\alpha$ is searched from $\{10^0, 10^1, \cdots, 10^4\}$ 
and $k$ is searched from $\{5, 10, 20, 50\}$. 
Note that the perturbation coefficient $\delta$ is set as $10^{-6}$ 
and the maximum iteration $t_{m}$ is set as 50. 
For factored models, the upper-bounds of rank are identical to each other.
On synthetic datasets, the upper-bounds are set as the exact rank. 
On two images, they are fixed as 200 while they are set as 500 on MovieLens 
datasets. 
To ensure fairness, all methods with randomness are run 5 times and 
the average results are reported. 

\begin{table*}[t]
    \centering
    \setlength{\tabcolsep}{2.5mm}
    \renewcommand\arraystretch{1.2}
    \caption{Ablation Experiments of NCARL ($\epsilon = 0.5$): \textit{Correlation} represents 
                the correlation preserving term and \textit{Adaptive} 
                denotes the adaptive learning mechanism.  }
    \label{table_ablation}
    \begin{tabular}{c c c c c c c c c c c}
        \hline
        
        \hline
         \multirow{2}{*}{}& \multirow{2}{*}{Correlation} & \multirow{2}{*}{Adaptive} & \multicolumn{2}{|c}{~Image-1~}  & \multicolumn{2}{|c}{~Image-2~} & \multicolumn{2}{|c}{~MovieLens-100K~} & \multicolumn{2}{|c}{~MovieLens-1M~} \\
         \cline{4-11}
         & & & \multicolumn{1}{|c}{MSE} & Time (s) & \multicolumn{1}{|c}{MSE} & Time (s) & \multicolumn{1}{|c}{MSE} & Time (s) & \multicolumn{1}{|c}{MSE} & Time (s) \\
         \hline
         \hline
         Method-A & \xmark & \xmark & 0.1484 & 0.8297 & 0.2030 & 1.2890 & 0.3905 & \textbf{5.9932} & 0.6481 & \textbf{236.4501} \\
         Method-B & \checkmark & \xmark & 0.1476 & 0.8098 & 0.2027 & \textbf{1.2236} & 0.2922 & 7.0448 & 0.2721 & 241.2230 \\
         NCARL & \checkmark & \checkmark & \textbf{0.1309} & \textbf{0.7302} & \textbf{0.1804} & 1.4625 & \textbf{0.2787} & 8.0379 & \textbf{0.2542} & 260.1178 \\
         \hline
         
         \hline
    \end{tabular}

\end{table*}

\subsection{Experimental Results}
\subsubsection{Synthetic Datasets} 
The results on synthetic low-rank matrices are summarized in Table \ref{table_synthetic}. 
Since WNNM, RegL1, LRFD, and S$^3$LR require too much time to converge, 
we use dash marks to represent the unavailability of these methods. 
The optimal and suboptimal results are bolded. 
From Table \ref{table_synthetic}, NCARL shows the strong stability and 
impressive efficiency due to the fast convergence. 
It should be emphasized that S$^3$LR usually provides the competitive 
results but with too much time. 
It also provides convincing evidence about the effectiveness of extra 
mechanisms for matrix completion. 
The major barrier to introduce additional information is the complicated and 
inefficient optimization. 
Therefore, the proposed surrogate, which is compatible with diverse models 
in machine learning, is meaningful. 

\subsubsection{Real Datasets}
For all real datasets, we run all methods under various missing rates
(or named as the mask rate), $\epsilon \in \{0.4, 0.5, 0.6, 0.7\}$. 
MSEs are recorded in Table \ref{table_MSE}.
For the two images and MovieLens-100K, 
the best results and second ones are highlighted in the boldface, 
while the consuming time is reported in Table \ref{table_time}. 
Due to the large scale of MovieLens-1M, several compared methods 
(Nuclear, WNNM, RegL1, LRFD, and S$^3$LR) become inefficient and thereby we 
only employ F-nuclear, Schatten-$p$ norm, and FGSR as our main competitors.
Although the classical nuclear norm model is quite time-consuming on 
MovieLens-1M, it acts as a baseline model. MSE and time on MovieLens-1M 
are shown in Figure \ref{figure_movieLens_1M}. 
From Table \ref{table_MSE}, Table \ref{table_time}, and Figure \ref{figure_movieLens_1M},
we conclude that NCARL obtains preferable performance on all datasets 
with the least time. 
As we expect, factored models like FGSR and F-Nuclear are more efficient 
especially on two MovieLens datasets.
Although the performance of S$^3$LR is usually impressive 
compared with other competitors due to the additional subspace exploration, 
its time cost is extremely expensive. 
Specifically, it needs more than 3000s to converge, which is unacceptable 
in practice, even though it returns the second-best results. 
By contrast, NCARL performs significantly in terms of both MSE and time. 

To test the recovery quality under different kinds of noise, 
images are contaminated by random noise and block noise. 
NCARL works well under both noises and 
the recovered images are shown in Figure \ref{figure_noise}.
Besides, the performance of noisy extension, NCARL-noisy, is also illustrated 
in Figure \ref{figure_illustration}.
To establish the low-rank property (indicated by Theorem \ref{theo_equivalence_nuclear}) of our surrogate meanwhile, 
we show the recovery results with different $\gamma$.
Note that the closed-form solution is approximate, 
singular values of the recovery images approach 0 numerically.
Therefore, we regard the singular values smaller than $10^{-3}$ as 0 
and mark the number of singular values larger than $10^{-3}$ as the rank $r$.
Obviously, the rank of the obtained image becomes smaller with the growth 
of $\gamma$ from Figure \ref{figure_illustration}. 

\begin{figure}[t]
    \centering
    \subfigure[Image-1]{
        \includegraphics[width=0.47\linewidth, height=3.0cm]{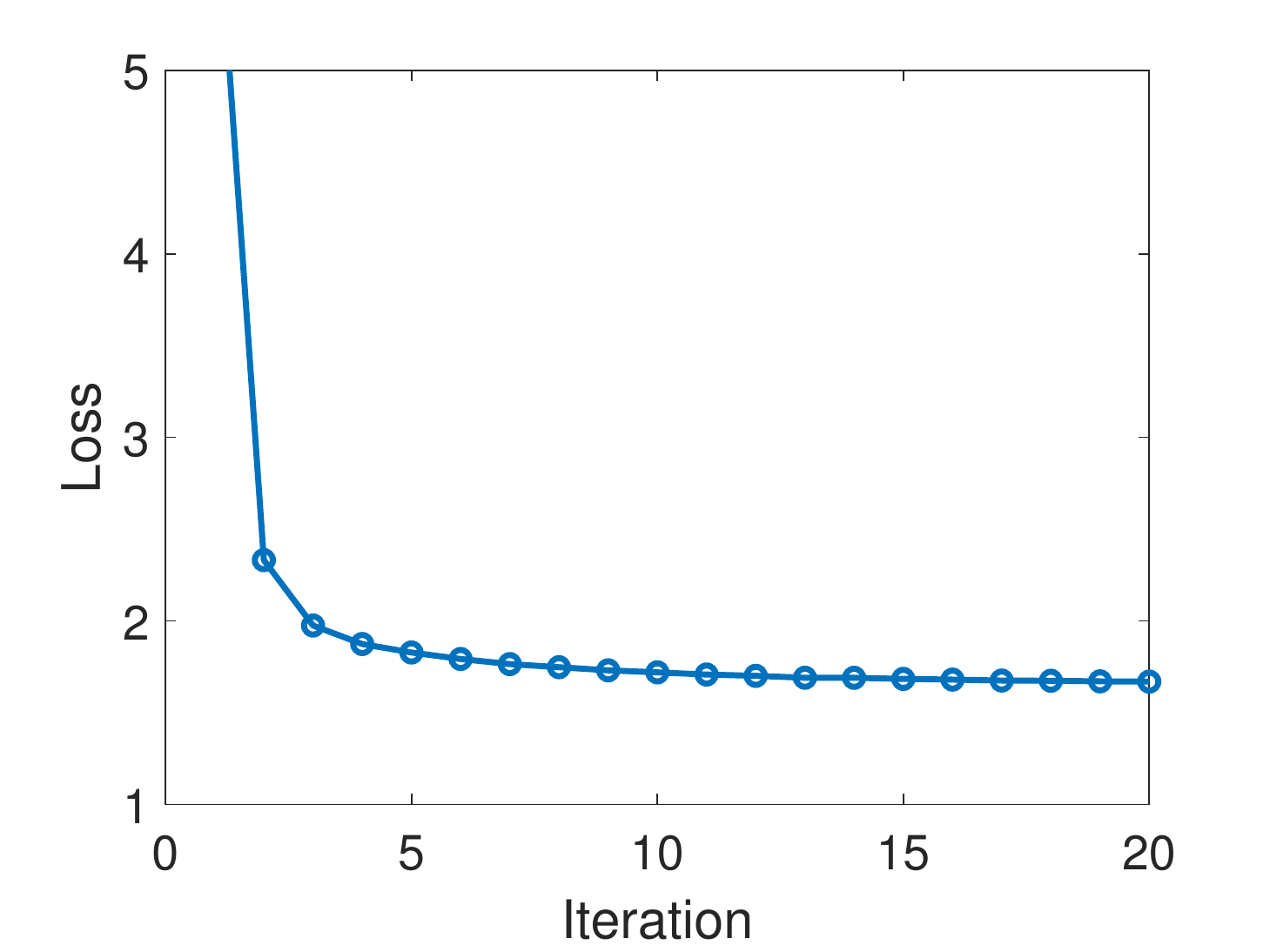}
    }
    \subfigure[MovieLens-100K]{
        \includegraphics[width=0.47\linewidth, height=3.0cm]{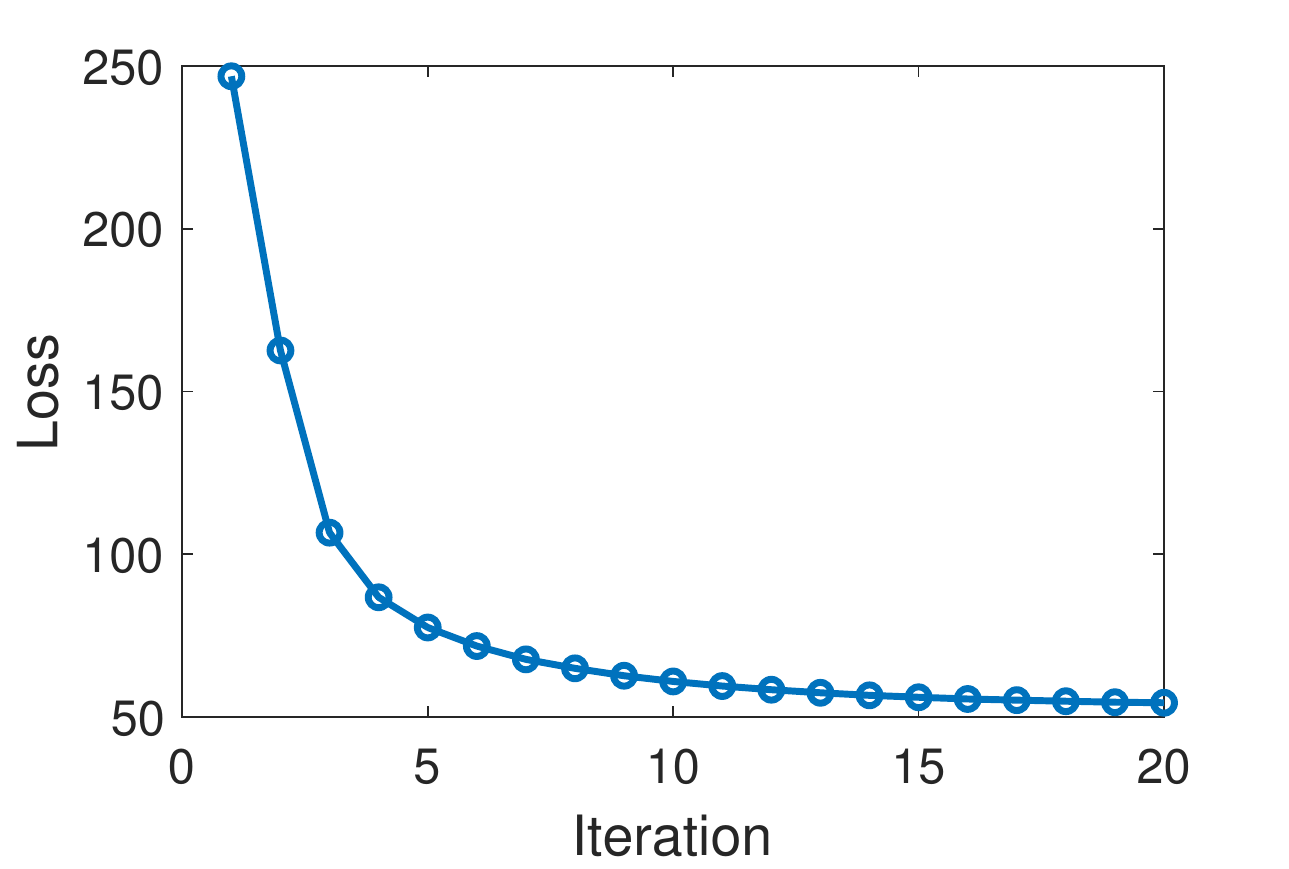}
    }
    \caption{Convergence of Algorithm \ref{alg} 
    on Image-1 and MovieLens-100K with $\epsilon=0.5$. 
    Clearly, NCARL converges rapidly on both datasets, especially Image-1}
    \label{figure_convergence}
\end{figure}

To testify the rapid convergence of NCARL, we show the objective value 
of NCARL on Image-1 and MovieLens-100K in Figure \ref{figure_convergence}.
Clearly, NCARL converges fast within 20 iterations. 
This attractive trait is more apparent on images. 
Contrasively, the other models, which are solved by gradient-based methods 
or ADMM-based methods, require hundreds or thousands of iterations to converge.
Accordingly, NCARL can be used in large scale datasets (like MovieLens-1M) 
as well, though the computational complexity of each iteration is $O(mn^2)$.

To study the effect of sparsity $k$, results with different $k$ are shown 
in Figure \ref{figure_sensitivity}, where $\alpha$ is assigned as the 
best value from $\{10^0, 10^1, \cdots, 10^4\}$. 
In our experiments, $\alpha$ is set as 100 on Image-1 and $10^2$ on MovieLens-100K. 
From Figure \ref{figure_sensitivity}, 
it is not hard to find that the proposed model will get the best 
performance when $k$ is not too large and the optimal $k$ becomes larger 
with the increase of matrix scale. 
On Image-1, the best sparsity is $10$ 
while $k = 100$ will lead to the best MSE on MovieLens-100K. 
Moreover, the increase of $k$ will not burden the time cost obviously.

\begin{figure}[t]
    \centering
    \subfigure[Image-1]{
        \includegraphics[width=0.47\linewidth]{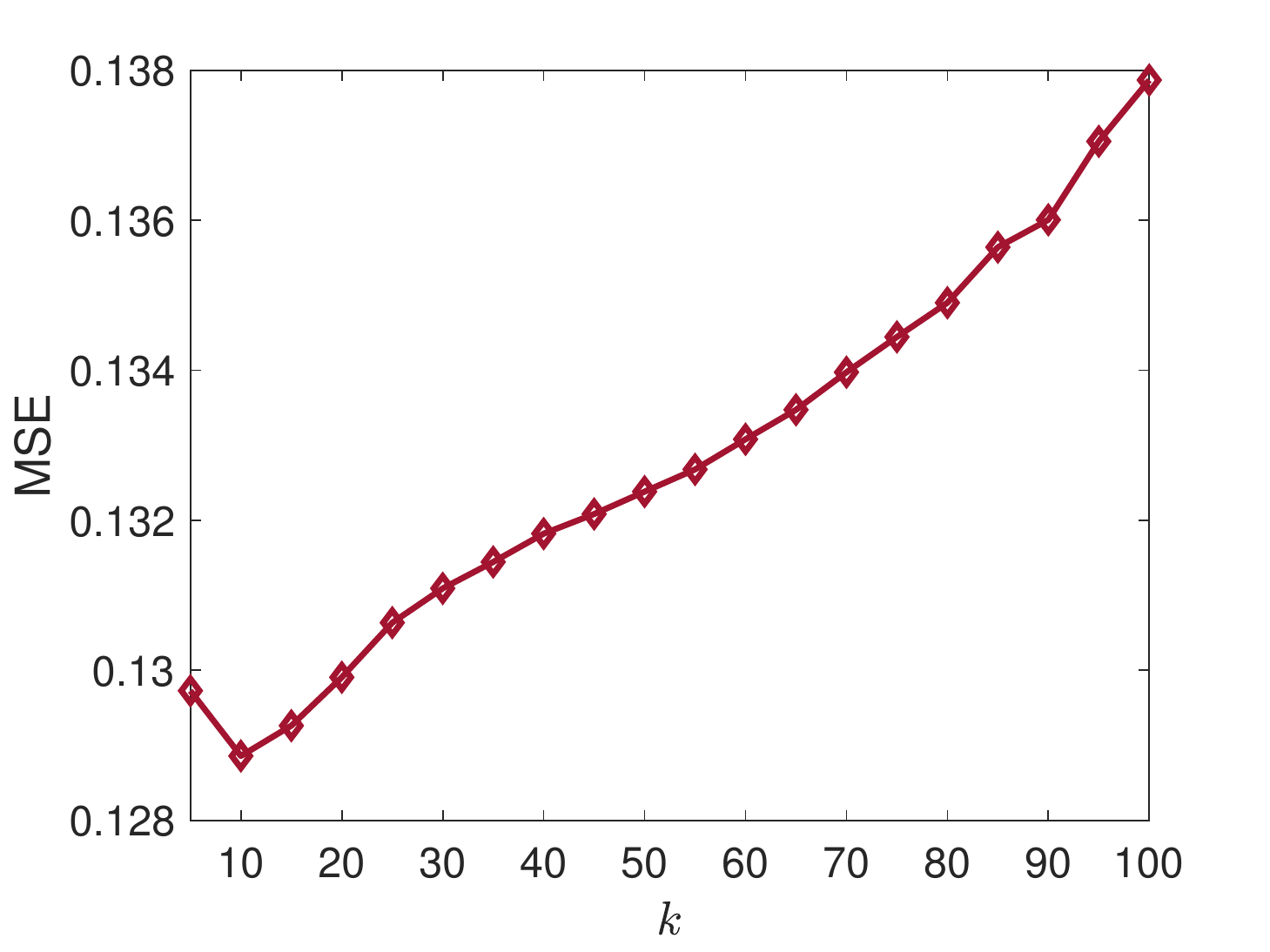}
    }
    \subfigure[MovieLens-100K]{
        \includegraphics[width=0.47\linewidth]{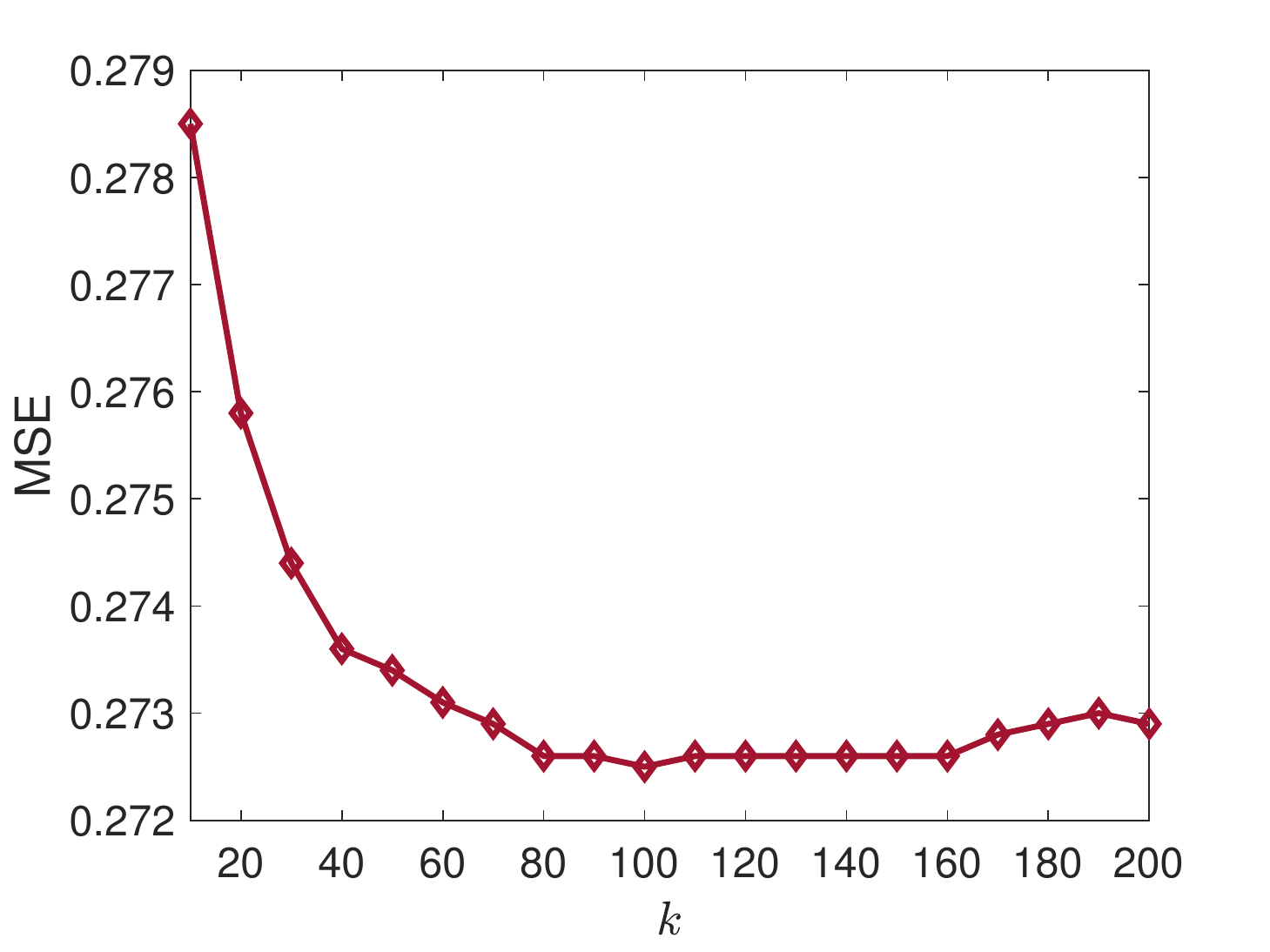}
    }
    \caption{The influence of sparsity $k$ to MSE
    on Image-1 and MovieLens-100K with $\epsilon=0.5$.}
    \label{figure_sensitivity}
\end{figure}

\subsection{Ablation Analysis} 
To test the impact of different parts, 
we design ablation experiments on all datasets. 
There are totally 2 mechanisms to testify, 
column correlation preserving term and adaptive correlation learning mechanism. 
Accordingly, we conduct experiments to study the role of two parts 
and the results are recorded in Table \ref{table_ablation}. 
On the one hand, we conclude that the correlation preserving is 
useful for LRMC, especially on recommendation system datasets. 
Specifically speaking, the correlation preserving decreases MSE by about 0.1 
and 0.37 on MovieLens-100K and MovieLens-1M, respectively.
On the other hand, the adaptive learning mechanism further promotes the 
performance of our model. Compared with the method only with the 
correlation preserving, the adaptive learning reduces MSE by about 0.2 on 
two MovieLens datasets. 

In particular, the two mechanisms do not burden the time cost significantly.
In contrast, S$^3$LR, which introduces sparse subspace learning into the 
nuclear norm model, requires much more time to train than the original nuclear 
model.

\section{Conclusion}
In this paper, we propose a novel model for low-rank matrix completion. 
Rather than the nuclear norm, a non-convex surrogate is developed. 
Although the surrogate is non-convex, it is easy to optimize and extend 
since the optimization consists of multiple closed-form solutions.
Based on the proposed relaxation, we introduce an adaptive correlation 
learning to explore the underlying information of the matrix, 
which is inspired by recommendation systems. 
Although the computational complexity of each iteration is $O(m n^2)$, 
the algorithm converges so fast that it needs less time than the existing 
methods. 
We conduct experiments on 2 real images and 2 recommendation system datasets 
and the superiority of our model is supported on both recovery quality and 
consuming time. 
In the future work, we will focus on the investigation about the convergence 
rate since the rapid convergence is only verified empirically.





    

\newpage

~\\

\newpage 

\begin{appendices}

\section{Proofs} \label{section_proof}
In this part, proofs of the above theorems and propositions are 
elaborated successively.

\subsection{Proof of Theorem \ref{theo_equivalence_20_rank}}
\begin{proof}
    Let 
    \begin{equation}
        \left \{
        \begin{array}{l}
            \mathcal{J}_1 = \min \limits_{U, W} \|W\|_{2,0}, ~~ s.t. ~ (W, U) \in \Psi', \\
            \mathcal{J}_2 = \min \limits_{X} {\rm rank}(X), ~~ s.t. ~ X \odot P = M .
        \end{array}
        \right .
    \end{equation}
    On the one hand, for the optimal $(W_*, U_*)$, we can easily find an 
    $X = (U_* W_*)^T \in \{X | X \odot P = M\}$, which indicates $\mathcal{J}_2 \leq \mathcal{J}_1$.

    On the other hand, for the optimal $X_*$, we can apply the full-rank 
    factorization on $X_*$, and thus we can get $\mathcal{J}_1 \leq \mathcal{J}_2$.

    Overall, we have $\mathcal{J}_1 = \mathcal{J}_2$.
\end{proof}

\begin{figure}
    \centering
    \includegraphics[width=0.9\linewidth]{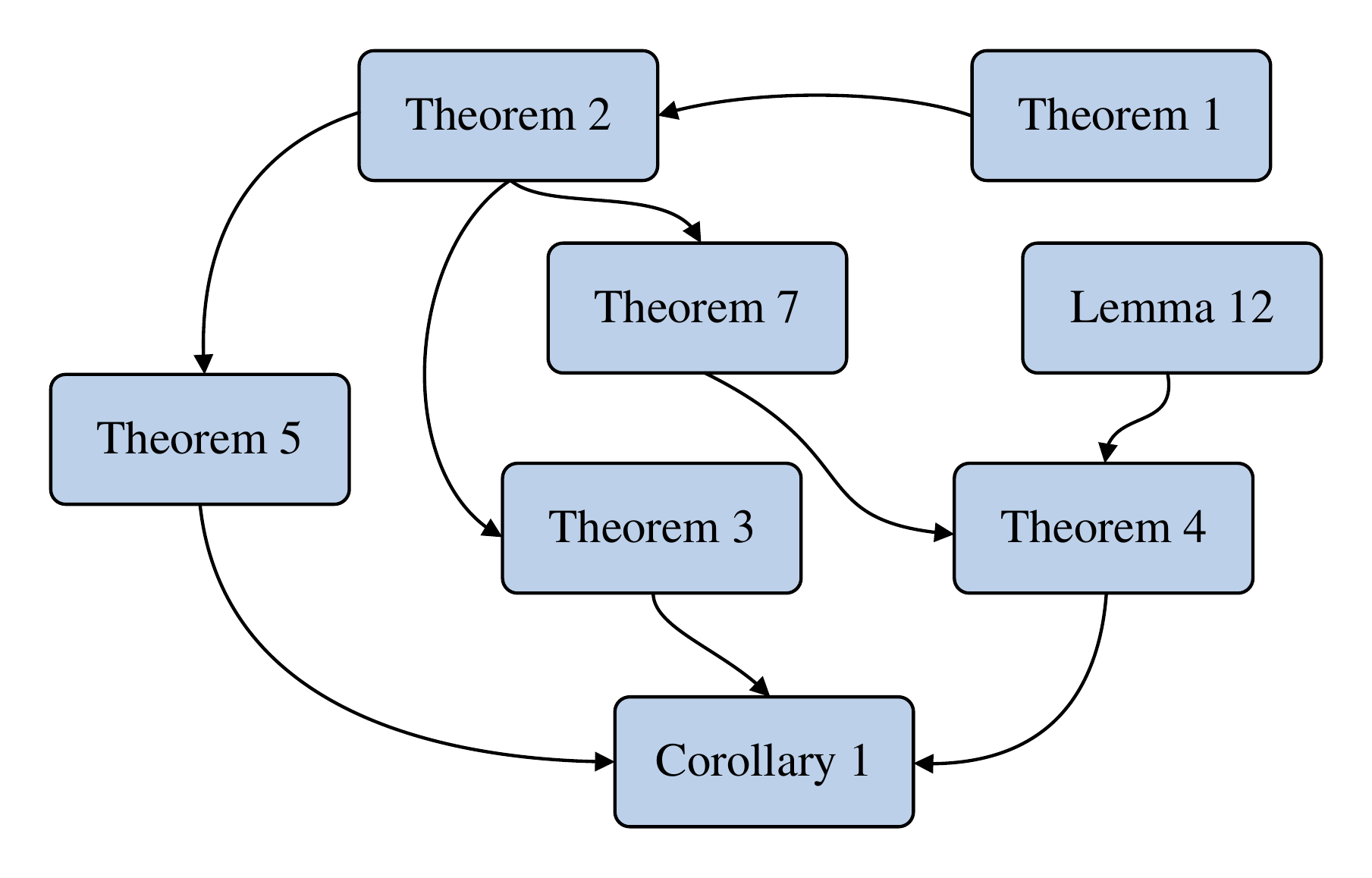}
    \caption{Proof sketch of the main theoretical results.}
    \label{figure_proofs}
\end{figure}

\subsection{Proof of Theorem \ref{theo_equivalence}}

The following theorem demonstrates that the above objective function can be converted into a smooth function that has a continuous first-order derivative. 

\begin{myLemma} \label{lemma}
    Given $k$ non-negative constants $c_i$ and variable $\bm x \in \{\bm x | \bm x^T \textbf{1} = 1, \bm x > 0\}$, the following inequality holds
    \begin{equation}
        \sum \limits_{i=1}^k \frac{c_i^2}{x_i} \geq (\sum \limits_{i=1}^k c_i)^2 .
    \end{equation}
    The equality holds if and only if $x_i = \frac{c_i}{\sum_{i=1}^k c_i}$.
\end{myLemma}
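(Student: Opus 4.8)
The plan is to recognize this as a Cauchy--Schwarz inequality in disguise and to read off the equality condition from the proportionality criterion. Specifically, I would introduce the two vectors $a_i = c_i / \sqrt{x_i}$ and $b_i = \sqrt{x_i}$, which are well defined because $x_i > 0$. Applying the Cauchy--Schwarz inequality $\left( \sum_i a_i b_i \right)^2 \le \left( \sum_i a_i^2 \right) \left( \sum_i b_i^2 \right)$ gives
\begin{equation}
    \left( \sum_{i=1}^k c_i \right)^2 = \left( \sum_{i=1}^k \frac{c_i}{\sqrt{x_i}} \cdot \sqrt{x_i} \right)^2 \le \left( \sum_{i=1}^k \frac{c_i^2}{x_i} \right) \left( \sum_{i=1}^k x_i \right).
\end{equation}
The constraint $\bm x^T \textbf{1} = 1$ then collapses the second factor to $1$, which yields the claimed inequality immediately.

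For the equality condition, I would invoke the standard fact that Cauchy--Schwarz is tight precisely when $a$ and $b$ are linearly dependent. Since $b \neq 0$ (its entries sum to $1$), this means $a_i = \lambda b_i$ for some scalar $\lambda$, i.e. $c_i = \lambda x_i$ for all $i$. Summing over $i$ and using $\sum_i x_i = 1$ forces $\lambda = \sum_i c_i$, so that $x_i = c_i / \sum_{j=1}^k c_j$, exactly the stated minimizer. Substituting this back confirms that the bound $\left( \sum_i c_i \right)^2$ is attained.

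As an alternative, and as a cross-check, I could instead minimize $\sum_i c_i^2 / x_i$ directly over the simplex by a Lagrange multiplier argument: the objective is convex on the open positive orthant since each term $c_i^2 / x_i$ is convex, so any stationary point of the Lagrangian is the global minimizer. Setting the $x_i$-derivative of $\sum_i c_i^2/x_i + \mu(\sum_i x_i - 1)$ to zero gives $x_i \propto c_i$, recovering the same solution, and convexity guarantees this is a minimum rather than a saddle point.

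The main subtlety to watch is the boundary behavior when some $c_i = 0$: the purported minimizer then has $x_i = 0$, which lies outside the open feasible set $\bm x > 0$, so the value $\left( \sum_i c_i \right)^2$ is only an infimum approached in the limit rather than an attained minimum. I would therefore either restrict the sharp equality statement to the case $c_i > 0$ for all $i$, or note that in the application to Theorem \ref{theo_equivalence} the pseudo-inverse is used precisely to accommodate vanishing $c_i$; this is the one place where the strict-positivity constraint interacts nontrivially with the equality claim, and it is where I expect the only real care to be needed.
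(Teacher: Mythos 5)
Your proof is correct, and your primary argument takes a genuinely different route from the paper's. The paper proves Lemma \ref{lemma} by optimization: it computes the gradient and diagonal Hessian of $f(\bm x)=\sum_{i=1}^k c_i^2/x_i$ to establish convexity, then writes the Lagrangian of the simplex-constrained problem and solves the KKT system, obtaining the stationary point $x_i = c_i/\sum_{j=1}^k c_j$ and the value $(\sum_{i=1}^k c_i)^2$ by substitution. You instead split $c_i=(c_i/\sqrt{x_i})\cdot\sqrt{x_i}$ and invoke Cauchy--Schwarz, letting the constraint $\sum_i x_i=1$ collapse one factor, and you read the equality condition off the linear-dependence criterion rather than off stationarity; your ``cross-check'' via Lagrange multipliers is essentially the paper's actual proof. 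Your main route is shorter, calculus-free, and stylistically consistent with the paper's own proof of Theorem \ref{theo_solution_D}, which uses a trace form of Cauchy--Schwarz for the matrix version of this bound; what the paper's KKT derivation buys is a constructive derivation of the minimizer (no splitting needs to be guessed) and a template that is reused almost verbatim for Theorem \ref{theo_graph}. Your closing remark also flags something the paper silently skips: when some $c_i=0$, the claimed minimizer has $x_i=0$, which lies outside the open feasible set $\{\bm x \mid \bm x^T\textbf{1}=1,\ \bm x>0\}$, so the bound is then only an infimum rather than an attained minimum (and the paper's assertion that the Hessian lies in $\mathbb{S}_{++}^k$ also fails in that case, though positive semidefiniteness still gives convexity); restricting the sharp equality claim to strictly positive $c_i$, as you propose, is the correct repair and is harmless for the lemma's use in Theorem \ref{theo_equivalence}, where the pseudo-inverse is introduced precisely to absorb vanishing $\|\bm w^i\|_2$.
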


\begin{proof}
    Let $f(\bm x) = \sum_{i=1}^k \frac{c_i^2}{x_i}$ where $\bm x > 0$. At first, we will show that $f(\bm x)$ is convex. Clearly, 
    \begin{equation}
        \nabla f(\bm x) = -
        \left [
        \begin{array}{c}
            \frac{c_1^2}{x_i^2} \\
            \vdots \\
            \frac{c_k^2}{x_k^2}
        \end{array}
        \right ], 
        \nabla^2 f(\bm x) = 2 
        \left [
        \begin{array}{c c c}
            \frac{c_1^2}{x_1^3} & & \\
            & \ddots & \\
            & & \frac{c_k^2}{x_k^3}
        \end{array}
        \right ] \in \mathbb{S}_{++}^k .
    \end{equation}
    The convex property indicates that the objective is equivalent to prove $(\sum_{i=1}^k c_i)^2$ is the infimum of $f(\bm x)$ for $\bm x \in \{\bm x | \bm x^T \textbf{1} = 1, \bm x > 0\}$. Now, we solve 
    \begin{equation}
        \min \limits_{\bm x^T \textbf{1} = 1, \bm x \geq 0} \sum \limits_{i=1}^k \frac{c_i^2}{x_i} ,
    \end{equation}
    via Lagrangian method. Let $\lambda$ and $\bm \eta \geq 0$ be Lagrangian multipliers, 
    \begin{equation}
        \mathcal L = \sum \limits_{i=1}^k \frac{c_i^2}{x_i} + \lambda (\sum \limits_{i=1}^k x_i - 1) - \bm \eta^T \bm x .
    \end{equation}
    Therefore, the KKT conditions can be formulated as 
    \begin{equation}
        \left \{
        \begin{array}{l}
            -\frac{c_i^2}{x_i^2} + \lambda - \eta_i = 0, \\
            \sum \limits_{i=1}^k x_i = 1, \\
            \eta_i x_i = 0,
        \end{array}
        \right .
        \Rightarrow
        \left \{
        \begin{array}{l}
            \eta_i = 0 ,\\
            \lambda = (\sum \limits_{i=1}^k c_i)^2 ,\\
            x_i = \frac{c_i}{\sum \limits_{i=1}^k c_i} .
        \end{array}
        \right .
    \end{equation}
    Substitute the solution into $f(x)$ and we get $f_*(x) = (\sum_{i=1}^k c_i)^2$. Hence, the lemma is proved. 
\end{proof}

\begin{proof} [Proof of Theorem \ref{theo_equivalence_20_rank}]
    To keep simplicity, let 
    \begin{equation}
        \left \{
        \begin{array}{l}
            \mathcal J_0 = \|(W^T U^T) \odot P - M\|_F^2 + \gamma \|W\|_{2,1}^2 ,\\
            \mathcal J_1 = \|X \odot P - M\|_F^2 + \gamma {\rm tr}(X D X^T) .
        \end{array}
        \right .
    \end{equation}
    On the one hand, for any $W$ and orthogonal matrix $U$, 
    we can construct $X = W^T U^T$, $D = U \Lambda U^T$, and 
    $\Lambda = {\rm diag}(\frac{\|\bm w^i\|_2}{\|W\|_{2,1}})^\dag$. Then, 
    \begin{equation}
        \begin{split}
            & {\rm tr}(X D X^T) = {\rm tr}(X U \Lambda U^T X^T) \\
            = & {\rm tr}(W^T \Lambda W) = {\rm tr}(\sum \limits_{i} \Lambda_{ii} \bm w^{iT} \bm w^i) \\
            = & \sum \limits_{i} {\rm tr}(\Lambda_{ii} \|\bm w^i\|_2^2) = \|W\|_{2,1}^2 .
        \end{split}
    \end{equation}
    Hence, we have $\min \mathcal J_1 \leq \min \mathcal J_0$. 
    On the other hand, for any $X$ and $D$, 
    we can perform full rank factorization to $X$ and eigenvalue factorization to $D$, 
    which means $X = W^T U^T$ and $D = U \Lambda U^T$. Similarly, 
    \begin{equation}
        {\rm tr}(X D X^T) = \sum \limits_{i} {\rm tr}(\Lambda_{ii} \|\bm w^i\|_2^2) .
    \end{equation}
    According to Lemma \ref{lemma}, we have 
    ${\rm tr}(X D X^T) \geq (\sum_i \|\bm w^i\|_2)^2 = \|W\|_{2,1}^2$. 
    The equality holds if and only if $\Lambda_{ii} = (\frac{\|\bm w^i\|_2}{\|W\|_{2,1}})^\dag$. 
    Hence, $\min \mathcal J_1 \geq \min \mathcal J_0$.
    
    In sum, the theorem is proved.
\end{proof}

\subsection{Proof of Proposition \ref{proposition}}
\begin{proof}
    We can consider the special case that $X = x \in \mathbb R$, 
    $D = y \in \mathbb R$, and $P = M = 0$. Accordingly, 
    \begin{equation}
        \begin{split}
        & \nabla {\rm tr}(X D X^T) = 
        \left [
        \begin{array}{c}
            2 x y \\
            x^2
        \end{array}
        \right ] \\
        \Rightarrow
        & \nabla^2 {\rm tr}(X D X^T) = 
        \left [
        \begin{array}{c c}
            2 y & 2x\\
            2 x & 0
        \end{array}
        \right ]
        = H .
        \end{split}
    \end{equation}
    When $x = y = 1$ and $\bm v = [1; -1]$, 
    we have $\bm v^T H \bm v = -1 < 0$. 
    Hence, the problem is non-convex.
    Note that the constraint ${\rm tr}(D^\dag) = 1$ is not 
    convex such that the problem regarding $D$ is non-convex. 
    When $D$ is fixed, the subproblem, 
    \begin{equation}
        \min \limits_{X} {\rm tr}(X D X^T) , ~~~~ s.t. ~ X \odot P = M,
    \end{equation}
    is convex. To show it, we rewrite ${\rm tr}(X D X^T)$  as
    \begin{equation}
        {\rm tr}(X D X^T) = \sum \limits_{i} \bm x^i D (\bm x^i)^T .
    \end{equation}
    Expand $X$ according to rows, and we have 
    \begin{equation}
        \begin{split}
        & \nabla {\rm tr}(X D X^T) = 
        \left [
        \begin{array}{c}
            D \\
            D \\
            \vdots \\
            D
        \end{array}
        \right ] \\
        \Rightarrow
        & \nabla^2 {\rm tr}(X D X^T) = 
        \left [
        \begin{array}{c c c c}
            D & 0 & \cdots & 0 \\
            0 & D & \cdots & 0 \\
            \vdots & \vdots & \ddots & \vdots \\
            0 & 0 & \cdots & D \\
        \end{array}
        \right ] \in \mathbb{S}_+^{m n} .
        \end{split}
    \end{equation}
    Hence, the proposition is proved.
\end{proof}

\subsection{Proof of Theorem \ref{theo_solution_D}}

\begin{proof}
    According to the Cauchy-Schwarz inequality, we have 
    \begin{equation}
    \begin{split}
        & {\rm tr}(X D X^T) \\
        = & {\rm tr}(D^{\frac{1}{2}} (X^T X)^{\frac{1}{2}} (D^{\frac{1}{2}} (X^T X)^{\frac{1}{2}})^T) \cdot {\rm tr}((D^\dag)^{\frac{1}{2}} (D^\dag)^{\frac{1}{2}}) \\
        \geq & {\rm tr}(D^{\frac{1}{2}} (X^T X)^{\frac{1}{2}} (D^\dag)^{\frac{1}{2}})^2  .
    \end{split}
    \end{equation} 
    The equality holds if and only if 
    \begin{equation} \label{cauchy_cond}
        D^{\frac{1}{2}} (X^T X)^{\frac{1}{2}} = k (D^\dag)^{\frac{1}{2}}  .
    \end{equation}
    Since $X = W^T U^T$ and $D = U \Lambda U^T$, we can rewrite the right hand as 
    \begin{equation}
    \begin{split}
        {\rm tr}(X D X^T) & \geq {\rm tr}(U \Lambda^{\frac{1}{2}} U^T (X^T X)^{\frac{1}{2}} U (\Lambda^\dag)^{\frac{1}{2}} U^T)^2 \\
        & = {\rm tr}(U \hat I U^T(X^T X)^{\frac{1}{2}})^2 \\
        & = (\sum \limits_{i} \mathbbm{1}[\frac{\|\bm w^i\|_2}{\|W\|_{2,1}} > 0] {\rm tr}(\bm u_i \bm u_i^T (X^T X)^{\frac{1}{2}}) )^2 .
    \end{split}
    \end{equation}
    where $\hat I = \Lambda \Lambda^\dag$. Note that ${\rm tr}(\bm u_i \bm u_i^T (X^T X)^{\frac{1}{2}} )$ can be rewritten as 
    \begin{equation}
        \begin{split}
        {\rm tr}(\bm u_i \bm u_i^T (X^T X)^{\frac{1}{2}} ) & = {\rm tr}(\bm u_i \bm u_i^T (U W W^T U^T )^{\frac{1}{2}} ) \\
        & = {\rm tr}(\bm u_i^T U (W W^T)^{\frac{1}{2}} U^T \bm u_i) \\
        & = {\rm tr}(\bm e_i^T V S^{\frac{1}{2}} V^T \bm e_i) ,
        \end{split}
    \end{equation}
    where $\bm e_i = U^T \bm u_i$ and $W W^T = V S V^T$. Note that $\bm e_i^T V S V^T \bm e_i = 0$, $\bm e_i^T V S^{\frac{1}{2}} V^T \bm e_i = 0$. Hence, we have 
    \begin{equation}
        \begin{split}
            {\rm tr}(X D X^T) & \geq (\sum \limits_{i} \mathbbm{1}[\frac{\|\bm w^i\|_2}{\|W\|_{2,1}} > 0] {\rm tr}(\bm u_i \bm u_i^T (X^T X)^{\frac{1}{2}}) )^2 \\
            & = {\rm tr}((X^T X)^\frac{1}{2})^2 = \|X\|_*^2 ,
        \end{split}
    \end{equation}
    where $\mathbbm{1}[\cdot]$ is the indicator function. Formally, $\mathbbm{1}[\cdot] = 1$ if $\cdot$ is true; otherwise, $\mathbbm{1}[\cdot] = 0$.
\end{proof}

\subsection{Proof of Lemma \ref{lemma_principal_minor}}

\begin{proof}
    Given a matrix $Q \in \mathbb{S}_{++}^{n}$ and arbitrary vector $\bm x \in \mathbb R^n$, we have 
    \begin{equation}
        \bm x^T Q \bm x > 0 .
    \end{equation}
    For any binary vector $\bm p$, suppose that $\|\bm p\|_0 = k$. Accordingly, the sub-matrix $[Q]_{\bm p, \bm p}$ is a $k \times k$ matrix. Given an arbitrary vector $\bm y \in \mathbb R^k$, we can construct a $n$-dimension vector, $\bm {v}$, such that $\bm{v}_{[\bm p]} = \bm y$ and $[\bm{v}]_{\bm{\bar p}} = 0$. Accordingly, we have that
    \begin{equation}
        0 < \bm v^T Q \bm v = \sum \limits_{i,j=1}^n v_i v_j Q_{ij} = \bm y^T [Q]_{\bm p, \bm p} \bm y .
    \end{equation}
    Hence, the theorem is proved.
\end{proof}

\subsection{Proof of Theorem \ref{theo_solution_X}}
To prove Theorem \ref{theo_solution_X} completely, we will prove it by two sub-theorems.
First, if $D_{ii} \neq 0$ for any $i$, we aim to prove the following theorem.
\begin{myTheo} \label{theo_solution_X_special}
    Let $\hat X$ and $\hat V$ denote the approximate solutions defined as
    \begin{equation} \label{solution_X_noiseless_appendix}
        \left \{
        \begin{array}{l}
            \hat{\bm \gamma}_i = -2 \bm m^i (F_i)^{\bm p_i+}, \\
            \hat{\bm x}^i = \bm m^i (F_i)^{\bm p_i+} \hat D^{-1} .
        \end{array}
        \right .
    \end{equation}
    where $F_i = P_i \hat D^{-1} P_i$, $H_i = [F_i]_{\bm p_i, \bm p_i} \in \mathbb R^{r_i \times r_i}$ and $r_i = \|\bm p_i\|_0$.
    If $\forall i, D_{ii} \neq 0$,
    then there exists a constant $u$, which is independent on $\delta$, 
    such that $\hat X \odot P = M$ and $\|\nabla_X \mathcal{L}(\hat X, \hat V)\| \leq 2 \delta u \|M\|$.
\end{myTheo}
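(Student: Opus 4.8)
The plan is to exhibit the approximate solution in Eq.~(\ref{solution_X_noiseless_appendix}) as the \emph{exact} stationary point of the $\delta$-perturbed problem, i.e.\ the one in which $D$ is replaced by $\hat D = D + \delta I$, so that the only source of error in the \emph{original} gradient $\nabla_X\mathcal L = 2XD + V\odot P$ is the diagonal perturbation $\hat D - D = \delta I$. Concretely I would establish three facts in turn: feasibility $\hat X\odot P = M$; the identity $\nabla_X\mathcal L(\hat X,\hat V) = -2\delta\hat X$; and a bound $\|\hat X\|\le u\|M\|$ with $u$ independent of $\delta$. The first two are purely algebraic; the third carries the real content.

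For feasibility I would work row by row. Since $M_{ij}=0$ off $\Omega$, each $\bm m^i$ is supported on $\bm p_i$, and by construction $(F_i)^{\bm p_i+}$ is supported on the $\bm p_i\times\bm p_i$ block with that block equal to $[F_i]_{\bm p_i,\bm p_i}^{-1}$, which exists because $\hat D^{-1}\in\mathbb S_{++}^n$ and Lemma~\ref{lemma_principal_minor} makes its principal submatrix positive definite. Reordering coordinates so the support comes first and writing $\hat D^{-1}$ in block form with $(1,1)$-block $A=[\hat D^{-1}]_{\bm p_i,\bm p_i}$ and $(1,2)$-block $B=[\hat D^{-1}]_{\bm p_i,\bar{\bm p}_i}$, a direct block multiplication gives $\hat{\bm x}^i = [\,[\bm m^i]_{\bm p_i},\ [\bm m^i]_{\bm p_i}A^{-1}B\,]$, whose restriction to $\bm p_i$ is exactly $[\bm m^i]_{\bm p_i}$; hence $\hat{\bm x}^i P_i = \bm m^i$.

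For the gradient identity I would substitute Eq.~(\ref{solution_X_noiseless_appendix}) into $2\hat{\bm x}^i\hat D + \hat{\bm \gamma}_i P_i$ and use $\hat D^{-1}\hat D = I$ together with $(F_i)^{\bm p_i+}P_i = (F_i)^{\bm p_i+}$ to see it vanishes, i.e.\ the approximate solution satisfies the perturbed stationarity exactly. Therefore $\hat{\bm \gamma}_i P_i = -2\hat{\bm x}^i\hat D$, and substituting this into the \emph{unperturbed} gradient row gives $[\nabla_X\mathcal L(\hat X,\hat V)]^i = 2\hat{\bm x}^i D + \hat{\bm \gamma}_i P_i = 2\hat{\bm x}^i(D-\hat D) = -2\delta\hat{\bm x}^i$. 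Stacking rows yields $\nabla_X\mathcal L(\hat X,\hat V) = -2\delta\hat X$, so $\|\nabla_X\mathcal L(\hat X,\hat V)\| = 2\delta\|\hat X\|$ and it only remains to control $\|\hat X\|$.

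The hard part is the bound $\|\hat X\|\le u\|M\|$ with $u$ free of $\delta$, because the naive estimate $\lambda_{\min}(\hat D)=\delta$ only gives $\|\hat X\|=O(\delta^{-1/2})$ and hence a residual $O(\delta^{1/2})$, not the claimed $O(\delta)$. I would instead evaluate the map acting on $\bm m^i$ in closed form: in the reordered basis $(F_i)^{\bm p_i+}\hat D^{-1}$ is the block matrix whose $(1,1)$-block is $I$, whose $(1,2)$-block is $-D_{12}(D_{22}+\delta I)^{-1}$, and whose second block-row is zero, where $D_{12},D_{22}$ are blocks of $D$ itself since the perturbation only alters diagonal blocks. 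The crucial observation is that $D\in\mathbb S_+^n$ forces $\mathcal R(D_{12}^T)\subseteq\mathcal R(D_{22})$, so $D_{12}$ annihilates $\mathcal N(D_{22})$; consequently the directions on which $(D_{22}+\delta I)^{-1}$ blows up like $\delta^{-1}$ are killed, giving $\|D_{12}(D_{22}+\delta I)^{-1}\|\le \|D_{12}\|/\lambda_{\min}^+(D_{22})$ uniformly in $\delta$, with $\lambda_{\min}^+$ the smallest positive eigenvalue. Taking $u = 1 + \max_i \|D_{12}^{(i)}\|/\lambda_{\min}^+(D_{22}^{(i)})$, a finite maximum over the $m$ rows, yields $\|\hat{\bm x}^i\|\le u\|\bm m^i\|$ and hence $\|\hat X\|\le u\|M\|$, closing the estimate. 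Here the standing assumption $D_{ii}\neq 0$ keeps these block manipulations clean, since no coordinate decouples so as to inject a $\delta^{-1}$ entry into the support block, while the general case is reduced to this one through the $H_i$ correction of Eq.~(\ref{solution_X_noiseless}).
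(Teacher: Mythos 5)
Your proof is correct, and it genuinely diverges from the paper's at the one step that carries the difficulty. The first two stages coincide with the paper's argument up to organization: the paper also verifies $\hat X \odot P = M$ from the support structure of $(F_i)^{\bm p_i+}$ (via Lemma \ref{lemma_submatrix_inverse} rather than your explicit block multiplication), and its gradient computation arrives at $\bm g^i = -2\delta\,\bm m^i (F_i)^{\bm p_i+}\hat D^{-1}$, which is exactly your identity $\nabla_X\mathcal{L}(\hat X,\hat V) = -2\delta \hat X$, though the paper never names it as such. The divergence is in the uniform bound on $\|(F_i)^{\bm p_i+}\hat D^{-1}\|$. The paper telescopes this quantity through a chain of binary patterns $\bm p_i = \bm p_1,\dots,\bm p_t$ with $\|\bm p_t\|_0 = n-1$, controls each factor by the bordered-inverse formula of Lemma \ref{lemma_sub_matrix} and Lemma \ref{lemma_upper}, and this is precisely where the hypothesis $\forall i, D_{ii}\neq 0$ enters (Lemma \ref{lemma_assumption}), to make the resulting ratios $\|\hat{\bm d}_k\|_2^2/\hat d_{kk}^2$ independent of $\delta$. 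You instead compute $(F_i)^{\bm p_i+}\hat D^{-1}$ in closed form via the Schur-complement block-inverse identity, so that everything reduces to $\|D_{12}(D_{22}+\delta I)^{-1}\|$, and you cancel the $\delta^{-1}$ eigendirections by the range inclusion $\mathcal{R}(D_{12}^T)\subseteq\mathcal{R}(D_{22})$, which holds for every $D\in\mathbb{S}_{+}^{n}$. Each route has its merits: the paper's stays at the level of elementary entrywise identities for inverses of submatrices, while yours yields an explicit and typically much smaller constant ($u = 1+\max_i \|D_{12}^{(i)}\|/\lambda_{\min}^{+}(D_{22}^{(i)})$, versus the paper's product $\sqrt{\prod_i [(n-1)+n c_i^2]}$, which grows with the number of unobserved entries per row), dispenses with the chain of auxiliary lemmas, and---notably---never uses the hypothesis $D_{ii}\neq 0$: the range-inclusion cancellation works for degenerate $D$ as well, so your argument in fact covers the general case of Theorem \ref{theo_solution_X} for the unmodified formula (\ref{solution_X_noiseless_appendix}), rendering the paper's special/general split and the $H_i$ correction avoidable. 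Two cosmetic points that do not affect correctness: your aside that the naive estimate gives $O(\delta^{-1/2})$ is only heuristic (a crude operator-norm bound gives $O(\delta^{-1})$), and since the paper's $\|\cdot\|$ is the Frobenius norm, the final stacking step should read $\|\hat X\|_F^2 = \sum_i \|\hat{\bm x}^i\|_2^2 \leq u^2\|M\|_F^2$.
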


The above theorem proves Theorem \ref{theo_solution_X} partially. 
Then, we will prove the more general case 
when there exists $i$ which satisfies $D_{ii} = 0$, 
which completes the proof.


\subsubsection{Proof of Theorem \ref{theo_solution_X_special}}

\begin{myLemma} \label{lemma_submatrix_inverse}
    Given a binary vector $\bm p \in \mathbb R^{n}$ and a square matrix $S \in \mathbb R^{n \times n}$, suppose that $[S]_{\bm p, \bm p}$ is invertible. Then we have $P S P S^{\bm p+} = P$ where $P = {\rm diag}(\bm p)$.
\end{myLemma}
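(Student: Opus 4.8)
The plan is to reduce the identity to an elementary block computation. The key observation is that two-sided multiplication by $P = {\rm diag}(\bm p)$ acts as the coordinate projection onto the active index set $\{i : p_i = 1\}$, and that by its definition $S^{\bm p+}$ is exactly the inverse of the principal block $[S]_{\bm p, \bm p}$ padded by zeros on the complementary coordinates. First I would note that the claim $P S P S^{\bm p+} = P$ is invariant under conjugating every matrix by the permutation $\Pi$ that sorts the active coordinates of $\bm p$ to the front: this sends $P, S, S^{\bm p+}$ to $\Pi P \Pi^T, \Pi S \Pi^T, \Pi S^{\bm p+} \Pi^T$ respectively and leaves the relation intact. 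Hence it suffices to treat the canonical case $\bm p = (1,\dots,1,0,\dots,0)$ with $k = \|\bm p\|_0$ ones.

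In this canonical form I would write all three matrices in $k$-versus-$(n-k)$ block structure: $P = \left[\begin{smallmatrix} I_k & 0 \\ 0 & 0 \end{smallmatrix}\right]$, $S = \left[\begin{smallmatrix} S_{11} & S_{12} \\ S_{21} & S_{22} \end{smallmatrix}\right]$ with $S_{11} = [S]_{\bm p, \bm p}$ invertible by hypothesis, and $S^{\bm p+} = \left[\begin{smallmatrix} S_{11}^{-1} & 0 \\ 0 & 0 \end{smallmatrix}\right]$ by the Definition preceding the lemma. A direct multiplication gives $PS = \left[\begin{smallmatrix} S_{11} & S_{12} \\ 0 & 0 \end{smallmatrix}\right]$ and then $PSP = \left[\begin{smallmatrix} S_{11} & 0 \\ 0 & 0 \end{smallmatrix}\right]$, i.e. the two projections isolate the principal block and annihilate everything else. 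Multiplying on the right by $S^{\bm p+}$ yields $PSPS^{\bm p+} = \left[\begin{smallmatrix} S_{11}S_{11}^{-1} & 0 \\ 0 & 0 \end{smallmatrix}\right] = \left[\begin{smallmatrix} I_k & 0 \\ 0 & 0 \end{smallmatrix}\right] = P$, which is the desired identity.

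As a cross-check avoiding the permutation, I would instead argue entrywise: since $(PSP)_{ij} = p_i S_{ij} p_j$, the product $PSP$ coincides with $S$ on active index pairs and vanishes elsewhere, so $[PSP]_{\bm p, \bm p} = [S]_{\bm p, \bm p}$; composing with $S^{\bm p+}$, whose sole nonzero block is $[S]_{\bm p, \bm p}^{-1}$, returns the identity on the active block and zeros elsewhere, again reproducing $P$. Both routes are routine; the only step requiring care is the notational bookkeeping — confirming that the zero-padding built into $S^{\bm p+}$ aligns with the zero rows and columns created by $P$, so that no contribution from the inactive block $S_{22}$ or from the off-diagonal blocks $S_{12}, S_{21}$ survives. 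This alignment is the main (and only mild) obstacle; notably, no structural assumption on $S$ beyond invertibility of $[S]_{\bm p, \bm p}$ — in particular neither symmetry nor positive definiteness — is used, which is exactly why the lemma can later be applied to the possibly non-symmetric matrix $\hat D$-based quantities.
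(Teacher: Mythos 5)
Your proof is correct and takes essentially the same route as the paper: the paper's own proof is a two-line direct computation (set $A = P S P$, $B = S^{\bm p+}$, write out $(AB)_{ij}$, and conclude $AB = P$), which is exactly your entrywise cross-check, while your primary block-matrix version after permuting the active coordinates to the front is the same computation in cleaner packaging. If anything, yours is more complete --- the paper leaves the key cancellation as ``it is not hard to see'' (and its displayed entry formula even contains an index typo, $\sum_k a_{ik} b_{ik}$ where $\sum_k a_{ik} b_{kj}$ is meant) --- and your closing observation that only invertibility of $[S]_{\bm p, \bm p}$ is needed (no symmetry or definiteness) is accurate, although in the paper's actual applications the lemma is invoked for $S = \hat D^{-1}$ or $\hat Q^{-1}$, which are in fact symmetric positive definite.
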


\begin{proof}
    Let $A = P S P$ and $B = S^{\bm p+}$. Then we have
    \begin{equation}
        \begin{split}
            (AB)_{i j} = \sum \limits_{k} a_{i k} b_{i k} .
        \end{split}
    \end{equation}
    It is not hard to see that 
    \begin{equation}
        A B = P .
    \end{equation}
\end{proof}

\begin{myLemma}
    For any $S \in \mathbb{S}^{n}_{++} $ and $i \in \{1, 2, \cdots, n\}$, $S_{ii} \neq 0$. 
\end{myLemma}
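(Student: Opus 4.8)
The plan is to read the claim off directly from the defining quadratic-form inequality of positive definiteness, specialized to a single coordinate direction. By definition, any $S \in \mathbb{S}_{++}^n$ satisfies $\bm x^T S \bm x > 0$ for every nonzero $\bm x \in \mathbb{R}^n$. The key observation is that the diagonal entry $S_{ii}$ is exactly the value of this quadratic form evaluated at the $i$-th coordinate vector, so the positivity of the form forces $S_{ii}$ to be nonzero.

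Concretely, I would let $\bm e_i \in \mathbb{R}^n$ denote the $i$-th standard basis vector, whose entries are $1$ in position $i$ and $0$ elsewhere. Since $\bm e_i$ is nonzero, positive definiteness gives $\bm e_i^T S \bm e_i > 0$. Expanding the quadratic form as $\bm e_i^T S \bm e_i = \sum_{k, l} (\bm e_i)_k (\bm e_i)_l S_{kl} = S_{ii}$, I conclude $S_{ii} > 0$, which in particular yields $S_{ii} \neq 0$ as required. This is precisely the specialization of the argument used for Lemma \ref{lemma_principal_minor}, where an arbitrary test vector was embedded into $\mathbb{R}^n$ via a binary support pattern; here the support consists of the single index $i$.

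There is essentially no obstacle to overcome: the statement is an immediate consequence of the definition, and the only ``work'' is recognizing that a diagonal entry is a quadratic form evaluated at a coordinate vector. The one point worth flagging is that the lemma as stated asks only for $S_{ii} \neq 0$, whereas the argument in fact establishes the strictly stronger $S_{ii} > 0$. I would present the proof at this slightly stronger level, since it costs nothing and makes transparent why no diagonal entry of a positive definite matrix can vanish.
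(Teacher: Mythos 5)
Your proof is correct, but it takes a genuinely more elementary route than the paper's. The paper proves this lemma via eigendecomposition: it writes $S = U^T \Lambda U$ and observes $S_{ii} = \bm u_i^T \Lambda \bm u_i > 0$, an argument that implicitly invokes the spectral theorem together with the fact that a positive definite matrix has strictly positive eigenvalues (and that the columns of an orthogonal matrix are nonzero). You instead evaluate the defining quadratic form at the standard basis vector, $S_{ii} = \bm e_i^T S \bm e_i > 0$, which needs nothing beyond the definition of $\mathbb{S}_{++}^n$. Your remark that this is the singleton-support specialization of Lemma \ref{lemma_principal_minor} is also apt: taking $\bm p$ to be the binary vector with a single $1$ in position $i$, that lemma says the $1 \times 1$ submatrix $[S]_{\bm p, \bm p} = S_{ii}$ is positive definite, so the present lemma could even have been deduced as an immediate corollary of it — something the paper does not do, giving a separate spectral argument instead. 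Both routes deliver the stronger conclusion $S_{ii} > 0$; yours buys brevity and self-containedness, while the paper's spectral route offers no additional payoff here beyond stylistic consistency with its other eigendecomposition-based computations.
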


\begin{proof}
    Use the eigenvalue decomposition, and we can factor $S$ as
    \begin{equation}
        S = U^T \Lambda U .
    \end{equation}
    Then, we have 
    \begin{equation}
        S_{ii} = \bm u_i^T \Lambda \bm u_i > 0 .
    \end{equation}
\end{proof}

\begin{myLemma} \label{lemma_norm_inequality}
    Given a matrix $A \in \mathbb R^{m \times n}$ and $\bm x \in \mathbb R^{n}$, the following inequality,
    \begin{equation}
        \|A \bm x\|_2 \leq \|A\|_F \|\bm x\|_2 .
    \end{equation}
\end{myLemma}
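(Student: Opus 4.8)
The plan is to reduce the inequality to the Cauchy--Schwarz inequality applied row by row, which keeps the argument fully self-contained. First I would expand $\|A\bm x\|_2^2$ in terms of the rows of $A$. Writing $\bm a^i$ for the $i$-th row, the $i$-th coordinate of the vector $A\bm x$ is exactly the inner product $\bm a^i \bm x$, so that $\|A\bm x\|_2^2 = \sum_{i=1}^m (\bm a^i \bm x)^2$. This is just the definition of the Euclidean norm together with the definition of matrix--vector multiplication.

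Next I would bound each summand by Cauchy--Schwarz, namely $(\bm a^i \bm x)^2 \leq \|\bm a^i\|_2^2 \, \|\bm x\|_2^2$. Summing over $i$ and factoring out the common term $\|\bm x\|_2^2$ yields $\|A\bm x\|_2^2 \leq \left( \sum_{i=1}^m \|\bm a^i\|_2^2 \right) \|\bm x\|_2^2$. The final bookkeeping step is the identity $\sum_{i=1}^m \|\bm a^i\|_2^2 = \|A\|_F^2$, which holds because the Frobenius norm is the Euclidean norm of all entries and hence equals the sum of the squared row norms. Substituting this identity gives $\|A\bm x\|_2^2 \leq \|A\|_F^2 \, \|\bm x\|_2^2$, and taking square roots of both sides completes the proof.

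An equivalent and even shorter route would invoke sub-multiplicativity of the spectral norm, $\|A\bm x\|_2 \leq \|A\|_2 \, \|\bm x\|_2$, combined with the standard domination $\|A\|_2 = \sigma_{\max}(A) \leq \sqrt{\sum_i \sigma_i^2} = \|A\|_F$. I would nonetheless prefer the row-wise Cauchy--Schwarz argument, since it avoids appealing to the singular value decomposition and matches the elementary style used elsewhere in the appendix. There is no substantive obstacle here: this is a textbook inequality, and the only point requiring a moment's care is the identity $\|A\|_F^2 = \sum_i \|\bm a^i\|_2^2$, which is immediate once one is consistent about summing the squared entries over rows rather than columns (both orderings give the same value).
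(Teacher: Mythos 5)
Your proof is correct. Note that the paper itself offers no proof of this lemma at all --- it is stated as a standard fact and used directly (only the subsequent lemma, on inverses of submatrices, is attributed to the literature). Your row-wise Cauchy--Schwarz argument, expanding $\|A\bm x\|_2^2 = \sum_{i=1}^m (\bm a^i \bm x)^2$, bounding each term by $\|\bm a^i\|_2^2\|\bm x\|_2^2$, and using $\sum_{i=1}^m \|\bm a^i\|_2^2 = \|A\|_F^2$, is the standard elementary derivation and validly fills in what the paper leaves implicit; the alternative via $\|A\|_2 \le \|A\|_F$ that you mention is equally sound.
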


\begin{myLemma} \cite{submatrix} \label{lemma_sub_matrix}
    Let $A \in \mathbb R^{n \times n}$ be an invertible matrix, and $B = A^{-1}$. If $b_{qp} \neq 0$ for any $p, q \in \{1, 2, \cdots, n\}$, then $A_{\bar p, \bar q} \in \mathbb R^{(n-1) \times (n-1)}$ is invertible. Let $M = A_{\bar p, \bar q}^{-1}$, and then we have
    \begin{equation}
        m_{ij} = b_{ij} - \frac{b_{ip} b_{qj}}{b_{qp}} .
    \end{equation}
\end{myLemma}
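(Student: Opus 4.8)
The plan is to prove the lemma by direct verification: I will write down the candidate matrix $M$ given by the stated formula and check that it is a genuine inverse of $A_{\bar p, \bar q}$, which at once yields both the invertibility claim and the entrywise expression. The only facts needed are that $B = A^{-1}$ satisfies $BA = I$, i.e. $\sum_{k=1}^n b_{ik} a_{kj} = \mathbbm{1}[i=j]$, and that $b_{qp} \neq 0$ so that $M$ is well defined.

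First I would fix the indexing carefully. Since $A_{\bar p, \bar q}$ deletes row $p$ and column $q$, its rows are labelled by $\{1,\dots,n\}\setminus\{p\}$ and its columns by $\{1,\dots,n\}\setminus\{q\}$; correspondingly the candidate inverse $M$ has rows labelled by $\{1,\dots,n\}\setminus\{q\}$ and columns by $\{1,\dots,n\}\setminus\{p\}$, so that the entries $m_{ij} = b_{ij} - b_{ip}b_{qj}/b_{qp}$ are meaningful exactly for $i\neq q$ and $j\neq p$. With this convention the product $M A_{\bar p, \bar q}$ is an $(n-1)\times(n-1)$ matrix whose $(i,j)$ entry, for $i,j \neq q$, is $\sum_{k\neq p} m_{ik} a_{kj}$.

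The core step is the evaluation of this sum. Substituting the formula for $m_{ik}$ gives
\begin{equation}
    \sum_{k\neq p} m_{ik}a_{kj} = \sum_{k\neq p} b_{ik}a_{kj} - \frac{b_{ip}}{b_{qp}}\sum_{k\neq p} b_{qk}a_{kj}.
\end{equation}
I would then rewrite each restricted sum using $BA=I$, namely $\sum_{k\neq p} b_{ik}a_{kj} = \mathbbm{1}[i=j] - b_{ip}a_{pj}$ and $\sum_{k\neq p} b_{qk}a_{kj} = \mathbbm{1}[q=j] - b_{qp}a_{pj}$. Because the column index satisfies $j\neq q$, the term $\mathbbm{1}[q=j]$ vanishes, and the two correction terms involving $a_{pj}$ cancel exactly, leaving $\sum_{k\neq p}m_{ik}a_{kj} = \mathbbm{1}[i=j]$. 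Hence $M A_{\bar p, \bar q} = I_{n-1}$.

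Finally, since $A_{\bar p, \bar q}$ and $M$ are both square of order $n-1$, a one-sided inverse is a two-sided inverse, so $A_{\bar p, \bar q}$ is invertible with $A_{\bar p, \bar q}^{-1} = M$, which is precisely the claim. Invertibility could alternatively be secured first from the cofactor identity $\det(A_{\bar p, \bar q}) = (-1)^{p+q}(\det A)\, b_{qp} \neq 0$, after which the Schur-complement (rank-one bordered-inverse) formula reproduces the same expression. I do not expect any genuine analytic difficulty here; the single point demanding care is the bookkeeping of deleted indices --- in particular recognizing that $M$ is indexed by the complement of $q$ in its rows and of $p$ in its columns, and that the hypothesis $j\neq q$ is exactly what annihilates the $\mathbbm{1}[q=j]$ term and makes the cancellation go through.
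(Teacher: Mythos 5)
Your proposal is correct, and it is worth noting that the paper itself gives no proof of this lemma at all: the statement is imported from the cited reference with only the remark that ``the lemma is proved in literature,'' so your direct verification supplies an argument the paper omits. Your computation checks out: with $M$ indexed by rows in $\{1,\dots,n\}\setminus\{q\}$ and columns in $\{1,\dots,n\}\setminus\{p\}$, the identity $BA=I$ gives $\sum_{k\neq p} b_{ik}a_{kj} = \mathbbm{1}[i=j] - b_{ip}a_{pj}$ and $\sum_{k\neq p} b_{qk}a_{kj} = -\,b_{qp}a_{pj}$ (using $j\neq q$), and the two $a_{pj}$ terms cancel exactly, yielding $M A_{\bar p,\bar q} = I_{n-1}$; since both matrices are square, this one-sided identity already proves invertibility and the entrywise formula simultaneously. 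Your careful handling of the transposed index sets (rows of $M$ complementary to $q$, columns complementary to $p$) is precisely the bookkeeping point on which a sloppier attempt would founder, since the statement's notation $m_{ij}$ silently presupposes it. The alternative you sketch --- first securing $\det(A_{\bar p,\bar q}) = (-1)^{p+q}(\det A)\,b_{qp}\neq 0$ via the adjugate/cofactor identity and then invoking the bordered-inverse (Schur complement) formula --- is also sound and makes the role of the hypothesis $b_{qp}\neq 0$ more transparent, but the direct verification you lead with is shorter and entirely self-contained, which is preferable here.
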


The lemma is proved in literature \cite{submatrix}. To keep the notations uncluttered, we use $\|\cdot\|$ to replace $\|\cdot\|_F$ for short.

\begin{myLemma} \label{lemma_upper}
    For $\forall S \in \mathbb S^{n}_{++}$ and binary vector $\bm p$ which 
    satisfies $\|\bm p\|_0 = n - 1$,  
    $\|S^{\bm p+} S\|^2 < (n - 1) + \frac{\|\bm b_k\|_2^2}{b_{kk}^2}$ 
    where $ p_k = 0$ and $B = S^{-1}$.
\end{myLemma}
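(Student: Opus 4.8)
The plan is to obtain an explicit closed form for $S^{\bm p+}S$ and then read off its Frobenius norm directly, rather than bounding things crudely. First I would note that since $\|\bm p\|_0 = n-1$ with the single zero at index $k$, the submatrix $[S]_{\bm p,\bm p}$ is exactly $S$ with its $k$-th row and column deleted, and $[S^{\bm p+}]_{\bm p,\bm p}$ is its inverse while the $k$-th row and column of $S^{\bm p+}$ are zero by the governing Definition. Because $S\in\mathbb{S}^n_{++}$, the inverse $B=S^{-1}$ is positive definite, so $b_{kk}>0$ and Lemma \ref{lemma_sub_matrix} applies with $p=q=k$; it yields the entrywise formula $[S^{\bm p+}]_{ij}=b_{ij}-\frac{b_{ik}b_{kj}}{b_{kk}}$ for $i,j\neq k$.

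The key step is to recognize that this entrywise formula is precisely the rank-one correction $S^{\bm p+}=S^{-1}-\frac{1}{b_{kk}}\bm b_k\bm b_k^T$, where $\bm b_k$ is the $k$-th column of $B$. Indeed, subtracting $\frac{1}{b_{kk}}\bm b_k\bm b_k^T$ from $B$ reproduces the stated off-diagonal entries and simultaneously annihilates the entire $k$-th row and column, since $b_{kj}-\frac{b_{kk}b_{kj}}{b_{kk}}=0$, matching the Definition of $S^{\bm p+}$ exactly. Right-multiplying by $S$ and using $\bm b_k=S^{-1}\bm e_k$ together with the symmetry of $S^{-1}$, so that $\bm b_k^T S=\bm e_k^T$ for the standard basis vector $\bm e_k$, collapses the expression to $S^{\bm p+}S=I-\frac{1}{b_{kk}}\bm b_k\bm e_k^T$.

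Finally I would compute the Frobenius norm column by column: the rank-one term $\frac{1}{b_{kk}}\bm b_k\bm e_k^T$ touches only the $k$-th column, so every column $j\neq k$ of $S^{\bm p+}S$ equals $\bm e_j$ and contributes $1$, while the $k$-th column equals $\bm e_k-\frac{1}{b_{kk}}\bm b_k$. Expanding $\|\bm e_k-\frac{1}{b_{kk}}\bm b_k\|_2^2$ and using $\bm e_k^T\bm b_k=b_{kk}$ gives $\frac{\|\bm b_k\|_2^2}{b_{kk}^2}-1$, whence $\|S^{\bm p+}S\|^2=(n-2)+\frac{\|\bm b_k\|_2^2}{b_{kk}^2}$, which is strictly smaller than the claimed bound $(n-1)+\frac{\|\bm b_k\|_2^2}{b_{kk}^2}$. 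The only real obstacle is the algebraic recognition in the second step; once the rank-one form of $S^{\bm p+}$ is identified, the remaining manipulations are routine inner-product expansions, and the strict inequality follows for free from the exact value sitting one unit below the stated upper bound.
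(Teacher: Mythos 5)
Your proof is correct and takes essentially the same route as the paper's: both apply Lemma \ref{lemma_sub_matrix} with $p=q=k$, both arrive at the same explicit product matrix (your rank-one form $S^{\bm p+}S = I - \tfrac{1}{b_{kk}}\bm b_k\bm e_k^T$ is precisely the matrix the paper computes entrywise, with identity $(\bm p,\bm p)$-block, zero $k$-th row, and $k$-th column entries $-b_{ik}/b_{kk}$), and both finish by summing squares of entries. The only difference is bookkeeping: you simplify to the exact value $(n-2)+\|\bm b_k\|_2^2/b_{kk}^2$ --- which also proves, rather than asserts, the block structure the paper states ``without formal proof'' --- whereas the paper stops at $(n-1)+\sum_{i\neq k}b_{ik}^2/b_{kk}^2$ and obtains the strict inequality by discarding the $b_{kk}^2>0$ term, so the two computations yield the same number.
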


\begin{proof}
    Without loss of generality,  we assume that $k = n$, \textit{i.e.}, $p_n = 0$. To be simple, let $M = S^{\bm p+}$. According to Lemma \ref{lemma_sub_matrix}, we have
    \begin{equation}
        m_{ij} = 
        \left \{
        \begin{array}{l l}
            b_{ij} - \frac{b_{in} b_{nj}}{b_{nn}}, & i,j \leq n - 1 ; \\
            0, & \text{otherwise} .
        \end{array}
        \right .
    \end{equation}
    Let $T = S^{\bm p+} S$. Without formal proof, $\bm t^n = 0$ and $[T]_{\bm p, \bm p} = I$. Now, we focus on $\bm t_n$ using the above equation. For $i \neq n$, 
    \begin{equation}
        \begin{split}
            t_{i n} & = \sum \limits_{l} m_{i l} a_{l n} = \sum \limits_{l} (b_{i l} - \frac{b_{in} b_{nl}}{b_{nn}}) a_{l n} \\
            & = \sum \limits_{l} b_{i l} a_{l n} - \frac{b_{i n}}{b_{nn}} \sum \limits_l b_{n l} a_{l n} .
        \end{split}
    \end{equation}
    Due to $A B = I$, we have 
    \begin{equation}
        (AB)_{i n} = \sum \limits_{l} b_{i l} a_{l n} =
        \left \{
        \begin{array}{l l}
            1, & i = n ; \\
            0, & i \neq n .
        \end{array}
        \right .
    \end{equation}
    Accordingly, 
    \begin{equation}
        t_{in} = - \frac{b_{in}}{b_{nn}} .
    \end{equation}
    Hence, we have 
    \begin{equation}
        \|S^{\bm p+} S\|^2 = \|T\|^2 = (n - 1) + \sum \limits_{i \neq n} \frac{b_{in}^2}{b_{nn}^2} < (n - 1) + \frac{\|\bm b_{n}\|_2^2}{b_{nn}^2} .
    \end{equation}
    Hence, the lemma is proved. 
\end{proof}

\begin{myLemma} \label{lemma_assumption}
    If $\forall i, D_{ii} \neq 0$, then  
    \begin{equation}
        \frac{\|\bm {\hat d_{i}}\|_2^2}{\hat d_{ii}^2} < \frac{\|\bm d_{i}\|_2^2}{d_{ii}^2} .
    \end{equation}
\end{myLemma}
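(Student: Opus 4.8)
The plan is to prove the inequality by a direct, entry-wise comparison of the two columns, reducing everything to a single scalar monotonicity observation. First I would exploit the structure $\hat D = D + \delta I$: since adding $\delta I$ perturbs only the diagonal, the $i$-th column satisfies $\hat d_{ji} = d_{ji}$ for every $j \neq i$ and $\hat d_{ii} = d_{ii} + \delta$. Consequently the squared norm decomposes as $\|\bm{\hat d}_i\|_2^2 = \sum_{j \neq i} d_{ji}^2 + (d_{ii}+\delta)^2 = \|\bm d_i\|_2^2 - d_{ii}^2 + (d_{ii}+\delta)^2$, isolating the only quantity that actually changes, namely the diagonal term $d_{ii}^2 \to (d_{ii}+\delta)^2$.

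Before proceeding I would record that the denominators are strictly positive: $D \in \mathbb{S}_+^n$ forces $d_{ii} \geq 0$, and the hypothesis $D_{ii} \neq 0$ upgrades this to $d_{ii} > 0$, whence $\hat d_{ii} = d_{ii}+\delta > 0$ for $\delta > 0$. With this in hand, the key step is to rewrite both ratios in the form ``$1 + (\text{off-diagonal mass})/(\text{diagonal})$'':
\[
\frac{\|\bm{\hat d}_i\|_2^2}{\hat d_{ii}^2} = 1 + \frac{\|\bm d_i\|_2^2 - d_{ii}^2}{(d_{ii}+\delta)^2}, \qquad \frac{\|\bm d_i\|_2^2}{d_{ii}^2} = 1 + \frac{\|\bm d_i\|_2^2 - d_{ii}^2}{d_{ii}^2}.
\]
The numerator $N := \|\bm d_i\|_2^2 - d_{ii}^2 = \sum_{j\neq i} d_{ji}^2$ is identical in both expressions and is non-negative. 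Since $(d_{ii}+\delta)^2 > d_{ii}^2 > 0$, the fraction with the larger denominator is no larger, which delivers the inequality at once. Equivalently, after cross-multiplying (legitimate because all denominators are positive) the claim factors into $\big(\|\bm d_i\|_2^2 - d_{ii}^2\big)\big(d_{ii}^2 - (d_{ii}+\delta)^2\big) < 0$, and the second factor is visibly $-\delta(2 d_{ii} + \delta) < 0$.

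The one delicate point --- and where I expect the real work to lie --- is strictness. The comparison above is strict exactly when $N > 0$, i.e.\ when column $i$ of $D$ carries some nonzero off-diagonal mass ($\exists\, j \neq i$ with $d_{ji} \neq 0$); if the column were purely diagonal, $N = 0$ and both ratios collapse to $1$. I would justify $N > 0$ by appealing to the form of $D$ produced by the algorithm, namely $D = \big((X^TX)^{1/2}/{\rm tr}((X^TX)^{1/2})\big)^\dag$ from Theorem \ref{theo_solution_D}, which is generically non-diagonal so that $N>0$ holds. The degenerate purely-diagonal case yields equality rather than strict inequality and can be handled separately, since the downstream use in Lemma \ref{lemma_upper} relies only on the uniform boundedness that this comparison provides. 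Thus the whole argument reduces to the scalar fact that $N/t$ is non-increasing in $t > 0$, evaluated at $t = d_{ii}^2$ and $t = (d_{ii}+\delta)^2$.
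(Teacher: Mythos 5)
The first thing to note is that the paper gives no proof at all to compare against: it declares this lemma ``obvious'' and omits the argument entirely. Your direct computation is the natural route, and its core is correct: writing $\hat D = D + \delta I$, so that $\hat{\bm d}_i$ and $\bm d_i$ differ only in the diagonal entry, and reducing both ratios to $1 + N/t$ with $N = \sum_{j \neq i} d_{ji}^2 \geq 0$ evaluated at $t = d_{ii}^2$ and $t = (d_{ii}+\delta)^2$ is exactly right, as is the use of $D \in \mathbb{S}_+^n$ together with $D_{ii} \neq 0$ to get $d_{ii} > 0$.

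More importantly, your analysis exposes a genuine defect in the statement itself rather than in your argument: the \emph{strict} inequality is false in general. Whenever column $i$ of $D$ carries no off-diagonal mass ($N = 0$) --- for instance $D = I$, or more generally any $D$ produced by Theorem \ref{theo_solution_D} with $X^T X$ diagonal, i.e.\ mutually orthogonal columns of $X$ --- both ratios equal $1$ and only equality holds. Your attempted patch via ``genericity'' of the algorithm's $D$ is not a valid justification, precisely because the diagonal case is reachable; but your fallback observation is the correct repair and is all that is needed: the sole downstream use of this lemma, in the proof of Theorem \ref{theo_solution_X_special} via Lemma \ref{lemma_upper}, is to produce a $\delta$-independent constant $c_t$ with $\|\hat{\bm d}_k\|_2^2/\hat d_{kk}^2 \leq c_t$, and for that the non-strict inequality suffices, with the explicit choice $c_t = \|\bm d_k\|_2^2/d_{kk}^2$. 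So the lemma should be stated with $\leq$ (or with an added hypothesis excluding the purely diagonal column), and your proof then closes completely; in this respect your write-up is more careful than the paper's.
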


Since the lemma is obvious, we omit the corresponding proof.

Now, we begin the proof of Theorem \ref{theo_solution_X_special}. 

\begin{proof}
    First, we show that $\hat X \odot P = M$. 
    Note that we only need to prove that 
    \begin{equation}
        \hat{\bm x}^i P_i = \bm m^i .
    \end{equation}
    Note that 
    \begin{equation} \label{eq_property_F}
        (F_i)^{\bm p^i+} P_i = (F_i)^{\bm p^i+} .
    \end{equation}
    Combine the above equation and Eq. (\ref{solution_X_noiseless_appendix}), and we have
    \begin{equation}
    \begin{split}
        \bm x^i P_i & = \bm m^i (F_i)^{\bm p^i+} \hat D^{-1} P_i = \bm m^i (F_i)^{\bm p^i+} P_i \hat D^{-1} P_i \\
        & = \bm m^i (F_i)^{\bm p^i+} F_i = \bm m^i P_i = \bm m^i .
    \end{split}
    \end{equation}
    where we use the lemma and fact $\hat X \odot P = M$. 
    Hence, we prove that $\hat X \odot P = M$ holds.
    
    Now, we focus on how to prove $\|\nabla_X \mathcal{L}(\hat X, \hat V)\| \leq 2 \delta u \|M\|$. Let $G = \nabla_X \mathcal{L}(\hat X, \hat V)$ represent the gradient. Substitute Eq. (\ref{solution_X_noiseless_appendix}) into $G$, and we have
    \begin{equation}
        \begin{split}
            \bm g^i & = 2 \bm x^i D + \bm v^i P_i = 2 \bm m^i (F_i)^{\bm p^i+} \hat D^{-1}D - 2 \bm m^i (F_i)^{\bm p^i+} P \\
            & = 2 \bm m^i (F_i)^{\bm p^i+} \hat D^{-1} \hat D - 2 \delta \bm m^i (F_i)^{\bm p^i+} \hat D^{-1} - 2 \bm m^i (F_i)^{\bm p^i+} P \\
            & = 2 \bm m^i (F_i)^{\bm p^i+} - 2 \bm m^i (F_i)^{\bm p^i+} - 2 \delta \bm m^i (F_i)^{\bm p^i+} \hat D^{-1} \\
            & = -2 \delta \bm m^i (F_i)^{\bm p^i+} \hat D^{-1} \\
            & = -2 \delta \bm m^i (P_i \hat D^{-1} P_i)^{\bm p^i+}  \hat D^{-1} .
        \end{split}
    \end{equation}
    According to Lemma \ref{lemma_norm_inequality}, we have
    \begin{equation}
        \begin{split}
            \|\bm g^i\| & = 2 \delta \|\bm m^i (F_i)^{\bm p^i+}  \hat D^{-1}\| \\
            & \leq 2 \delta \|\bm m^i\| \| (F_i)^{\bm p^i+} \hat D^{-1}\| . \\
        \end{split}
    \end{equation}
    Now, we need to prove that there exists a constant $u$ such that $\| (F_i)^{\bm p^i+} \hat D^{-1}\| \leq u$. Let $S = \hat D^{-1}$, and we have
    \begin{equation}
        \begin{split}
            & \|(F_i)^{\bm p+} \hat D^{-1}\| \\
            = & \|(S)^{\bm p_+} S\| \\
            = & \|(S)^{\bm p_1+} P_1  P_2 \cdots S\| \\
            = & \|(S)^{\bm p_1+} P_1 S P_1 (S)^{\bm p_1+} P_2 S P_2 (S)^{\bm p_2+}  \cdots S\| \\
            \leq & \|(S)^{\bm p_1+} P_1 S P_1\| \cdot \|(S)^{\bm p_1+} P_2 S P_2\| \cdots \|(S)^{\bm p_t+} S\| .
        \end{split}
    \end{equation}
    where $[\bm p_1, \bm p_2, \cdots, \bm p_t]$ is a sequence, $\bm p_1 = \bm p$, and $\|\bm p_t\|_0 = n - 1$. $\bm p_{i+1}$ is constructed by replace a zero entry of $\bm p_i$. According to Lemma \ref{lemma_upper}, we have
    \begin{equation}
        \|(S)^{\bm p_t+} S\|^2 < (n - 1) + \frac{\|\bm {\hat d_k} \|_2^2}{\hat d_{kk}^2} .
    \end{equation}
    where $(\bm p_t)_k = 0$. 
    According to the Lemma \ref{lemma_assumption}, there exists a constant $c_t$ such that 
    \begin{equation}
        \frac{\|\bm {\hat d_k} \|_2^2}{\hat d_{kk}^2} \leq c_t .
    \end{equation}
    Hence, 
    \begin{equation}
        \|(S)^{\bm p_t+} S\|^2 < (n - 1) + n c_t^2 .
    \end{equation}
    Similarly, it is not hard to find that
    \begin{equation}
        \|(S)^{\bm p_i+} P_{i+1} S P_{i+1}\|^2 < (n - 1) + n c_i^2 .
    \end{equation}
    Let 
    \begin{equation}
        u = \sqrt{\prod \limits_{i} [(n-1) + n c_i^2]} .
    \end{equation}
    Hence, the theorem is proved. 
\end{proof}

\subsubsection{Proof of the General Case}
For any matrix $A$, let $A_{-i}$ denote a sub-matrix which is obtained by 
deleting the $i$-th column. Besides, define $\bm \pi_i$ as a vector where its 
$i$-th entry is 0 and others are 1.

The following lemma explains the situation when $D_{ii} = 0$.
\begin{myLemma} \label{lemma_Sigma0}
    As $D$ is computed by Algorithm \ref{alg_surrogate},
    $D_{ii} = 0$ if and only if $\bm x_i = 0$.
\end{myLemma}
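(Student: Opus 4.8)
The plan is to make the dependence of $D$ on $X$ explicit via an eigendecomposition and then simply read off the diagonal. Recall that Algorithm \ref{alg_surrogate} sets $D$ by Eq. (\ref{solution_D}). Writing the eigendecomposition $X^T X = V \Sigma^2 V^T$ with $V$ orthogonal and $\Sigma = {\rm diag}(\sigma_1, \ldots, \sigma_n) \succeq 0$ collecting the singular values of $X$, I would use $(X^T X)^{1/2} = V \Sigma V^T$, ${\rm tr}((X^T X)^{1/2}) = \sum_k \sigma_k = \|X\|_*$, and the fact that scaling commutes with the pseudo-inverse to rewrite Eq. (\ref{solution_D}) as $D = \|X\|_* \cdot V \Sigma^\dagger V^T$.

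First I would compute the $i$-th diagonal entry directly, obtaining $D_{ii} = \|X\|_* \sum_{k : \sigma_k \neq 0} V_{ik}^2 / \sigma_k$. This is a sum of non-negative terms weighted by the scalar $\|X\|_*$, which is strictly positive whenever $X \neq 0$ (the degenerate case $X = 0$ forces every column to vanish and $D = 0$, so the claimed equivalence holds trivially). Consequently $D_{ii} = 0$ if and only if $V_{ik} = 0$ for every $k$ with $\sigma_k \neq 0$; equivalently, $\bm e_i$ is orthogonal to every eigenvector of $X^T X$ with nonzero eigenvalue, i.e., $\bm e_i \perp \mathcal{R}(X^T X)$.

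The last step identifies this orthogonality with the vanishing of a column. Using $\mathcal{R}(X^T X) = \mathcal{R}(X^T) = (\ker X)^\perp$, the condition $\bm e_i \perp \mathcal{R}(X^T X)$ is equivalent to $\bm e_i \in \ker X$, i.e., $X \bm e_i = \bm x_i = 0$. Chaining these equivalences yields $D_{ii} = 0 \Leftrightarrow \bm x_i = 0$, as claimed.

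I expect the only delicate point to be the bookkeeping around the pseudo-inverse: one must verify that $\Sigma^\dagger$ inverts each nonzero $\sigma_k$ and leaves the zero eigenvalues at zero, so that the null eigenvalues contribute nothing to $D_{ii}$ and the non-negativity argument pins the vanishing of $D_{ii}$ to exactly $\mathcal{R}(X^T X)$. Once this is settled, together with the standard identity $\ker(X^T X) = \ker X$ (used implicitly through the range decomposition above), the equivalence follows immediately.
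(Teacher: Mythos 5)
Your proof is correct and takes essentially the same route as the paper's: both diagonalize $X^T X$, use the scaling property of the pseudo-inverse to write $D = \|X\|_* \, (X^T X)^{\frac{\dag}{2}}$ (up to notation), and exploit the non-negativity of the resulting diagonal sums to characterize exactly when $D_{ii}$ vanishes. The only difference is bookkeeping: you close both directions at once via $\mathcal{R}(X^T X) = \mathcal{R}(X^T) = (\ker X)^{\perp}$, whereas the paper proves $\bm x_i = 0 \Rightarrow D_{ii} = 0$ through the identity $(X^T X)_{ii} = \|\bm x_i\|_2^2$ and dismisses the converse as ``similar''; your write-up is, if anything, the more complete of the two.
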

\begin{proof}
    On the one hand, if $\bm x_i = 0$, then $(X^T X)_{ii} = 0$.
    Let $X^T X = V^T \Lambda V$. Note that 
    \begin{equation}
        (X^T X)_{ii} = \bm v_i^T \Lambda \bm v_i = 0.
    \end{equation}
    Clearly, $(X^T X)^{\frac{1}{2}} = V^T \Lambda^{\frac{1}{2}} V$. 
    Hence, we have 
    \begin{equation}
        ((X^T X)^{\frac{1}{2}})_{ii} = \bm v_i^T \Lambda^{\frac{1}{2}} \bm v_i = 0 .
    \end{equation}
    Similarly, 
    \begin{equation}
        ((X^T X)^{\frac{\dag}{2}})_{ii} = \bm v_i^T \Lambda^{\frac{\dag}{2}} \bm v_i .
    \end{equation}
    On the other hand, the proof of $D_{ii} = 0 \Rightarrow \bm x_i = 0$ is 
    similar.
\end{proof}

\begin{myCorollary}
    As $D$ is computed by Algorithm \ref{alg_surrogate}, if $D_{ii} = 0$, then 
    $\|\bm \gamma_i\|_0 = \|\bm \gamma^i\|_0 = 0$.
\end{myCorollary}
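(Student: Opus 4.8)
The plan is to push the single scalar identity $D_{ii}=0$ outward in two stages: first from the entry $D_{ii}$ to the entire $i$-th row and column of $D$, and then from $D$ to the Lagrange multiplier $\bm\gamma$ through the closed-form expression in Eq.~(\ref{solution_X_noiseless_appendix}). The starting point is Lemma~\ref{lemma_Sigma0}, which already tells us that $D_{ii}=0$ forces $\bm x_i=\bm 0$; equivalently $(X^TX)_{ii}=\|\bm x_i\|_2^2=0$. Combined with the defining formula $D=\big((X^TX)^{1/2}/{\rm tr}((X^TX)^{1/2})\big)^\dag$ from Eq.~(\ref{solution_D}), this is the only structural input I will need.

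First I would upgrade the vanishing of one diagonal entry to the vanishing of an entire cross. Since $(X^TX)^{1/2}$ is positive semi-definite and its $(i,i)$ entry is $0$ (as shown inside the proof of Lemma~\ref{lemma_Sigma0}), the $2\times 2$ principal minor indexed by $\{i,j\}$ has determinant $-\big((X^TX)^{1/2}\big)_{ij}^2\ge 0$, so every off-diagonal entry in the $i$-th row and column of $(X^TX)^{1/2}$ is zero. Taking the Moore-Penrose pseudo-inverse preserves this pattern, because $\bm e_i\in\ker (X^TX)^{1/2}$ implies $\bm e_i\in\ker D$ and, by symmetry of $D$, also $\bm e_i\perp\mathcal R(D)$; hence both $\bm d_i=\bm 0$ and $\bm d^i=\bm 0$. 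This symmetric statement about $D$ is what makes the two $\ell_0$ claims for $\bm\gamma$ come out in parallel.

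Next I would feed $\bm d_i=\bm d^i=\bm 0$ into the multiplier. The perturbed matrix $\hat D=D+\delta I$ then decouples the index $i$: its $i$-th row and column are $\delta\,\bm e_i^T$ and $\delta\,\bm e_i$, so $\hat D^{-1}$, $F_i=P_i\hat D^{-1}P_i$, and the induced pseudo-inverse $(F_i)^{\bm p^i+}$ all inherit a block that isolates coordinate $i$. Substituting into $\hat{\bm\gamma}_i=-2\,\bm m^i (F_i)^{\bm p^i+}$ from Eq.~(\ref{solution_X_noiseless_appendix}), the component of $\bm\gamma$ aligned with the degenerate direction collapses to a multiple of $M_{ji}$ over observed pairs $(j,i)$; but $\bm x_i=\bm 0$ together with $X\odot P=M$ gives $M_{ji}=X_{ji}=0$ there. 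Hence that component is identically zero, which yields $\|\bm\gamma_i\|_0=0$, and the companion equality $\|\bm\gamma^i\|_0=0$ follows from the row version of the same decoupling, now using $\bm d^i=\bm 0$ in place of $\bm d_i=\bm 0$.

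The main obstacle I anticipate is the pseudo-inverse bookkeeping in the last step: I must verify that neither passing from $\hat D$ to $\hat D^{-1}$, nor forming the masked pseudo-inverse $(F_i)^{\bm p^i+}$, reintroduces coupling between coordinate $i$ and the rest, i.e. that the zero-row/column-$i$ pattern genuinely survives the $\delta$-perturbation and the $P_i$-masking. Once the block-decoupling is established (it reduces to inverting a matrix that is block-diagonal after permuting $i$ to the last position), everything else is substitution; the remaining care is only to match the paper's row/column indexing of $\bm\gamma_i$ and $\bm\gamma^i$ to the two symmetric halves of the argument.
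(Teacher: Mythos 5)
Your first equality, $\|\bm\gamma_i\|_0 = 0$, is proved correctly, and this is genuine added value: the paper states this corollary with no proof at all, presenting it as immediate from Lemma~\ref{lemma_Sigma0}. Your chain is the right one. $D_{ii}=0$ gives $\bm x_i = 0$ by Lemma~\ref{lemma_Sigma0}; a positive semi-definite matrix with a vanishing diagonal entry has a vanishing $i$-th row and column, so $\bm d_i = \bm d^i = 0$; hence $\hat D = D+\delta I$ is block-diagonal with the scalar block $\delta$ in position $i$, and this decoupling survives inversion, the masking $F_j = P_j \hat D^{-1} P_j$, and the local inverse $(F_j)^{\bm p_j +}$; finally, for every row $j$, the $i$-th entry of $\bm\gamma^j = -2\bm m^j (F_j)^{\bm p_j+}$ is zero when $(j,i)\notin\Omega$ (by the support pattern of the masked inverse) and equals $-2\delta M_{ji} = -2\delta X_{ji} = 0$ when $(j,i)\in\Omega$, using $X\odot P = M$ and $\bm x_i = 0$. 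Together with $\bm d_i = 0$, this column statement is exactly what the general-case proof of Theorem~\ref{theo_solution_X} actually consumes when it deletes unobserved columns and asserts $\|\nabla_X \mathcal{L}(X,\Gamma)\| = \|\nabla_{X_-}\mathcal{L}(X_-,\Gamma_-)\|$.

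The second equality is where your proposal has a real gap, and it cannot be closed as you describe: there is no ``row version of the same decoupling.'' The symmetry you invoke lives in $D$ (indeed $\bm d^i = (\bm d_i)^T = 0$), but it does not transfer to $\Gamma$, whose rows are assembled from \emph{rows} of $M$ via Eq.~(\ref{solution_X_noiseless_appendix}), while the hypothesis $D_{ii}=0$ annihilates only \emph{column} $i$ of $M$, never row $i$. Concretely, take $m=n=2$, $\Omega=\{(1,1),(2,1)\}$, and $M = \bigl(\begin{smallmatrix} a & 0\\ b & 0\end{smallmatrix}\bigr)$ with $ab\neq 0$. Then $X=M$ yields $D = \mathrm{diag}(1,0)$, so $D_{22}=0$, while the multiplier formula (Eq.~(\ref{solution_X_noiseless_appendix}), which coincides with Eq.~(\ref{solution_X_noiseless}) here) gives
\begin{equation}
    \Gamma = -2(1+\delta)
    \begin{pmatrix}
        a & 0\\
        b & 0
    \end{pmatrix},
\end{equation}
whose second column is zero but whose second row $\bm\gamma^2 = -2(1+\delta)\,(b,\,0)$ is not: $\Gamma_{21}$ is the multiplier of the active constraint $X_{21}=M_{21}$ and has no reason to vanish; the row index $2$ merely coincides numerically with the degenerate column index. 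So, read with the paper's declared convention (superscript $=$ row), $\|\bm\gamma^i\|_0 = 0$ is false. The statement is only tenable under the paper's own subscript/superscript conflation for these vectors --- compare $\bm v_i$ being defined from the row data $\bm m^i$ in Eqs.~(\ref{solution_X_imprecise}) and (\ref{solution_X_noiseless}) --- so that $\bm\gamma_i$ and $\bm\gamma^i$ both name the multipliers attached to the observed entries of column $i$, in which case your first argument already covers it. You should either adopt that reading explicitly and reduce the second equality to the first, or flag it as false as written; deriving it from $\bm d^i=0$, as you propose, would fail.
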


\begin{myLemma} \label{lemma_submatrix_sqrt}
    For any $i$, define $\bm \pi_i \in \mathbb{R}^{n + 1}$.
    Given an arbitrary matrix $Q \in \mathbb{S}_{+}^n$, let $A$ be an 
    $(n+1) \times (n+1)$ matrix where $[A]_{\bm \pi_i, \bm \pi_i} = Q$ and 
    $[A]_{\bm{\bar \pi}_i, \bm{\bar \pi}_i} = 0$. Then we will have $A \geq 0$,
    $[A^{\frac{1}{2}}]_{\bm \pi_i, \bm \pi_i} = Q^{\frac{1}{2}}$ and 
    $[A^{\frac{1}{2}}]_{\bm{\bar \pi}_i, \bm{\bar \pi}_i} = 0$.
\end{myLemma}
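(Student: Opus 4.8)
The plan is to recognize that $A$ is nothing but $Q$ with a zero row and a zero column inserted at position $i$, so it is permutation-similar to the bordered block-diagonal matrix carrying $Q$ and a single trailing zero. Concretely, I would introduce a permutation matrix $\Pi$ sending index $i$ to the last coordinate, so that
\begin{equation}
    \Pi A \Pi^T =
    \begin{bmatrix}
        Q & \bm 0 \\
        \bm 0^T & 0
    \end{bmatrix} =: B .
\end{equation}
Since $Q \in \mathbb S_+^n$, the matrix $B$ is positive semi-definite: writing $\bm z = (\bm y, t)$ gives $\bm z^T B \bm z = \bm y^T Q \bm y \geq 0$. Because conjugation by a permutation is a congruence, $A = \Pi^T B \Pi$ satisfies $\bm w^T A \bm w = (\Pi \bm w)^T B (\Pi \bm w) \geq 0$, so $A \geq 0$. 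This settles the first claim.

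For the square root I would first compute the root of the block-diagonal matrix directly. As $Q^{1/2} \in \mathbb S_+^n$ and
\begin{equation}
    \begin{bmatrix}
        Q^{1/2} & \bm 0 \\
        \bm 0^T & 0
    \end{bmatrix}^2
    =
    \begin{bmatrix}
        Q & \bm 0 \\
        \bm 0^T & 0
    \end{bmatrix}
    = B ,
\end{equation}
the uniqueness of the positive semi-definite square root identifies the left block matrix as $B^{1/2}$. I would then transfer this back through $\Pi$: setting $C = \Pi^T B^{1/2} \Pi$, one checks $C \geq 0$ (a congruence again) and $C^2 = \Pi^T B \Pi = A$, so uniqueness of the PSD square root forces $A^{1/2} = C = \Pi^T B^{1/2} \Pi$. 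Reading off the entries, $A^{1/2}$ is obtained from $Q^{1/2}$ by reinserting a zero row and zero column at position $i$, which is precisely $[A^{1/2}]_{\bm \pi_i, \bm \pi_i} = Q^{1/2}$ together with $[A^{1/2}]_{\bm{\bar \pi}_i, \bm{\bar \pi}_i} = (A^{1/2})_{ii} = 0$.

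The only delicate point is the claim that taking the PSD square root commutes with conjugation by $\Pi$; everything else is bookkeeping about where the inserted zero row and column sit. I would justify this commutation solely through uniqueness of the positive semi-definite square root, as above, since that converts the identity $C^2 = A$ with $C \geq 0$ directly into $C = A^{1/2}$, avoiding any explicit formula. An equivalent route, if one prefers to dispense with the permutation, is to start from an eigendecomposition $Q = V \Lambda V^T$, border $V$ with the $i$-th standard basis vector and $\Lambda$ with one extra zero eigenvalue to diagonalize $A$, and then read $A^{1/2}$ off from $\Lambda^{1/2}$; this makes the location of the trivial eigendirection explicit and leads to the same conclusion.
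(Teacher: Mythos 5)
Your proof is correct. It is worth noting that the paper does not actually spell out an argument for this lemma: it only remarks that ``the proof is similar with Lemma \ref{lemma_submatrix_inverse}'', i.e., it gestures at an entrywise bookkeeping verification of the same flavor as the identity $PSPS^{\bm p +} = P$ and leaves the details to the reader. Your route is more structural and, in fact, more complete. Conjugating by a permutation reduces $A$ to the block-diagonal matrix $\bigl[\begin{smallmatrix} Q & 0 \\ 0 & 0 \end{smallmatrix}\bigr]$, whose positive semi-definite square root you identify by squaring the candidate $\bigl[\begin{smallmatrix} Q^{1/2} & 0 \\ 0 & 0 \end{smallmatrix}\bigr]$, and you transfer the result back using the two facts that orthogonal (permutation) conjugation preserves positive semi-definiteness and that the PSD square root is unique. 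This is precisely the point an entrywise check tends to gloss over: verifying that the embedded $Q^{1/2}$ squares to $A$ is mere bookkeeping, but concluding that it \emph{is} $A^{1/2}$ requires both its positive semi-definiteness and the uniqueness of the PSD root, and your write-up makes both ingredients explicit. One small reading issue, which you resolve in the intended way: the hypothesis $[A]_{\bm{\bar \pi}_i, \bm{\bar \pi}_i} = 0$ literally pins down only the $(i,i)$ entry of $A$, so the statement is silent about the off-diagonal entries of the $i$-th row and column; your interpretation that the entire $i$-th row and column vanish matches how the lemma is used later (in the proof of Lemma \ref{lemma_null_column}, where $[X_0]_{i,\cdot} = 0$), and is the only reading under which the conclusion can hold.
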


The proof is similar with Lemma \ref{lemma_submatrix_inverse}.

\begin{myLemma} \label{lemma_null_column}
    Let $X_* = \arg \min _{X \odot P = M} \|X\|_*$.
    If $\forall i, (i, j) \notin \Omega$, then $\|(\bm x_j)_*\|_0 = 0$.
\end{myLemma}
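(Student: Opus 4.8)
The plan is to argue by a feasibility-preserving perturbation. Since the hypothesis says the entire $j$-th column is unobserved, I am free to overwrite that column without violating any constraint; I will then show that replacing it by the zero vector can only decrease the nuclear norm, so the minimizer $X_*$ must already carry a zero $j$-th column. Concretely, I would first translate the hypothesis $\forall i,\ (i,j)\notin\Omega$ into $P_{ij}=0$ for every $i$, and hence $M_{ij}=0$ for every $i$ by the paper's convention that $M_{ij}=0$ off $\Omega$.

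Next I would define $\tilde X$ to agree with $X_*$ in every column except the $j$-th, whose entries I set to $0$. Because I only altered entries where $P_{ij}=0$ and $M_{ij}=0$, the product $\tilde X\odot P$ still coincides with $X_*\odot P=M$, so $\tilde X$ is feasible. Everything then reduces to comparing $\|\tilde X\|_*$ with $\|X_*\|_*$.

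For the norm comparison I would first establish the non-strict bound $\|\tilde X\|_*\le\|X_*\|_*$ from the dual characterization $\|Y\|_*=\max_{\|Z\|_{op}\le1}\langle Y,Z\rangle$: taking an optimal $Z$ for $\tilde X$ and zeroing its $j$-th column to form $\hat Z=Z(I-\bm e_j\bm e_j^T)$, one has $\|\hat Z\|_{op}\le\|Z\|_{op}\le1$, while $\langle\tilde X,\hat Z\rangle=\langle\tilde X,Z\rangle=\|\tilde X\|_*$ and $\langle X_*,\hat Z\rangle=\langle\tilde X,\hat Z\rangle$ (the two matrices differ only in column $j$, where $\hat Z$ vanishes), so $\|X_*\|_*\ge\langle X_*,\hat Z\rangle=\|\tilde X\|_*$. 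To upgrade to \emph{strict} inequality when $(\bm x_j)_*\ne 0$, I would invoke singular-value interlacing for column deletion, giving $\sigma_i(\tilde X)\le\sigma_i(X_*)$ for every $i$, together with the Frobenius identity $\|X_*\|_F^2=\|\tilde X\|_F^2+\|(\bm x_j)_*\|_2^2$, which forces $\sum_i\sigma_i(\tilde X)^2<\sum_i\sigma_i(X_*)^2$; since the singular values are ordered and dominated termwise, at least one index satisfies $\sigma_i(\tilde X)<\sigma_i(X_*)$, whence $\|\tilde X\|_*<\|X_*\|_*$. This contradicts the minimality of $X_*$, so $(\bm x_j)_*=0$, i.e.\ $\|(\bm x_j)_*\|_0=0$.

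The main obstacle is precisely this strictness step. The dual argument alone yields only the non-strict inequality, which (without an assumption that the minimizer is unique) would merely produce \emph{another} optimal solution having a zero $j$-th column, rather than pin down $X_*$ itself as the statement demands. Reconciling this via interlacing plus the Frobenius identity is the crux, and I would be careful to verify the direction of the interlacing inequality and that padding with a zero column contributes no additional nonzero singular value before concluding.
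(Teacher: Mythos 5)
Your proof is correct, and it takes a genuinely different route from the paper's. The paper argues through an auxiliary subproblem: it lets $R$ be the minimizer of the nuclear-norm problem with the unobserved column deleted, pads $R$ back with a zero column via its zero-padding lemma (Lemma~\ref{lemma_submatrix_sqrt}), and derives a contradiction between the optimality of $X_*$ and that of $R$; as written, that argument implicitly leans on uniqueness of the minimizers (its step ``if $(X_{-i})_* \neq R$ then $\|R\|_* < \|(X_{-i})_*\|_*$'' needs a unique argmin, and the final passage from $(X_{-i})_* = R$ to $(\bm x_i)_* = 0$ needs either uniqueness again or the unstated fact that appending a nonzero column \emph{strictly} increases the nuclear norm). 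You instead perturb $X_*$ directly: zero out its $j$-th column, check feasibility, get the weak inequality by the dual characterization of $\|\cdot\|_*$, and then close the equality case by singular-value interlacing under column deletion combined with the Frobenius identity $\|X_*\|_F^2 = \|\tilde X\|_F^2 + \|(\bm x_j)_*\|_2^2$. This strictness step is exactly the point the paper glosses over, and supplying it makes your argument self-contained and valid without assuming the minimizer is unique; the dual-certificate step is actually redundant (termwise interlacing already yields the weak bound), but that is harmless. What the paper's route buys is brevity given its already-proved padding lemma and consistency with its chain of sub-lemmas; what yours buys is rigor on the equality case and independence from any uniqueness hypothesis.
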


\begin{proof}
    Let 
    \begin{equation}
        R = \arg \min _{X_{-i} \odot P_{-i} = M_{-i}} \|X\|_* .
    \end{equation}
    If $(X_{-i})_* \neq R$, then we have $\|R\|_* <\|(X_{-i})_*\|_*$.
    Thereby we can construct a matrix $X_0$ that satisfies
    \begin{equation}
        [X_0]_{-i} = R, [X_0]_{i, \cdot} = 0.
    \end{equation}
    According to Lemma \ref{lemma_submatrix_sqrt}, we have 
    \begin{equation}
        \|X_0\|_* = {\rm tr}[(X_0^T X_0)^{\frac{1}{2}}] = {\rm tr}[(R^T R)^{\frac{1}{2}}] = \|R\|_* > \|(X_{-i})_*\|_* ,
    \end{equation}
    which results in a conflict. 

    Hence, $(X_{-i})_* = R$. In other words, $(\bm x_i)_* = 0$.
\end{proof}

\begin{proof} [Proof of Theorem \ref{theo_solution_X}]
    As is shown in Algorithm \ref{alg_surrogate} and Lemma \ref{lemma_Sigma0}, if $\forall i, (i, j) \notin \Omega$, then 
    $\bm x_i$, computed by
    \begin{equation}
        \bm x^i = \hat{\bm m}^i (\hat F_i)^{\bm p_i +} (H_i \hat D H_i)^{\bm p_i+},
    \end{equation}
    will always be 0. 
    According to Lemma \ref{lemma_null_column}, the neglect of the $i$-th 
    column is sound due to there being no observed entry. 

    To complete the proof of Theorem \ref{theo_solution_X}, we can convert the situation, where 
    $\exists i, D_{ii} = 0$, into the simple case which has been discussed 
    in the last subsection. Specifically speaking, provided that 
    $\forall k, (k, i) \notin \Omega$, the original problem can be transformed 
    into 
    \begin{equation}
        \begin{split}
        & \min \limits_{X_{-i}, [D]_{\bm \pi_i, \bm \pi_i}}  {\rm tr}(X_{-i} [D]_{\bm \pi_i, \bm \pi_i} X_{-i}^T), \\
         & s.t. ~ X_{-i} \odot P_{-i} = M_{-i}, [D]_{\bm \pi_i, \bm \pi_i} \geq 0,  {\rm tr}([D]_{\bm \pi_i, \bm \pi_i}^\dag) = 1.
        \end{split}
    \end{equation}
    The above transformation can be performed multiple times until all diagonal
    entries of $D$ are non-zero. Let $X_-$, $D_-$, and $\Gamma_-$ 
    (Lagrangian multipliers) be the corresponding matrices when all unobserved columns are removed.
    Clearly, we have 
    \begin{equation}
        \|\nabla_X \mathcal{L}(X, \Gamma)\| = \|\nabla_{X_-} \mathcal{L}(X_-, \Gamma_-) \|.
    \end{equation}
    Combining with the conclusion of Theorem \ref{theo_solution_X_special}, the theorem is thus proved.
\end{proof}

\subsection{Proof of Corollary \ref{theo_convergence}}
The proof of Corollary \ref{theo_convergence} relies on Theorem 
\ref{theo_equivalence_nuclear} while the latter does not rely on 
the former. Therefore, we just employ the conclusion of Theorem 
\ref{theo_equivalence_nuclear}.
\begin{proof}
    The proof is similar to the one for Theorem \ref{theo_equivalence_nuclear}.
    If each step decreases the objective value, the algorithm has to approach 
    a local minimum since the loss is lower-bounded. 
    
    Clearly, if $\delta \rightarrow 0$, then the solution that Algorithm 
    \ref{alg_surrogate} tends toward will be a valid 
    solution for problem (\ref{problem_convex}). 
    For this convex problem, 
    the solution is indeed its optimum. As is shown in the proof 
    of Theorem \ref{theo_equivalence_nuclear}, it is also the global optimum for the proposed surrogate.

    Hence, the theorem is proved. 
\end{proof}

\subsection{Proof of Theorem \ref{theo_equivalence_nuclear}}
 
\begin{proof}
    According to Proposition \ref{proposition}, problem (\ref{obj_no_graph}) in the main paper is a non-convex optimization problem. 
    From Theorem \ref{theo_equivalence}, we find that if the model converges, the optimal solutions, $X_*$ and $D_*$, should satisfy
    \begin{equation}
        \mathcal{R}((X_*)^T) \subseteq \mathcal{R}((D_*)^\dag) ,
    \end{equation}
    where $\mathcal{R}(\cdot)$ represents the space spanned by columns. 
    Let $G = D^\dag$.
    Interestingly, the above condition indicates that problem (\ref{obj_no_graph}) has the same optimum with a \textit{sub-problem},
    \begin{equation} \label{problem_convex}
        \begin{split}
            \min \limits_{X, G} & ~ {\rm tr}(X G^\dag X^T),\\
            s.t. & ~ X \odot P = M, {\rm tr}(G) = 1, G \in \mathbb{S}_+^{n}, \mathcal{R}(X^T) \subseteq \mathcal{R} (G). 
        \end{split}
    \end{equation}
    It should be emphasized that the feasible domain of the above problem is smaller than problem (\ref{obj_no_graph}) such that we call it a \textit{sub-problem}. 
    
    According to Page 76 and 651 of \cite{convex-optimization}, 
    the sub-problem is convex. 
    As we analyse above, $X_*$ and $D_*$ are also valid solution for the sub-problem 
    since $\mathcal{R}((X_*)^T) \subseteq \mathcal{R}((D_*)^\dag)$. 
    
    Via reductio, there is no solution which leads to a smaller value of the sub-problem. 
    Combining with the convexity of the sub-problem, $(X_*, D_*)$ is the optimum of it. 
    Suppose that there exists $(X_*, D_0)$ that satisfies the constraints of problem (\ref{obj_no_graph}) 
    but does not obey $\mathcal{R}((X_*)^T) \subseteq \mathcal{R}(D_0^\dag)$. 
    Then, we can set 
    \begin{equation}
        \hat D_0 = (\frac{((X_*)^T X_*)^\frac{1}{2}}{{\rm tr}(((X_*)^T X_*)^{\frac{1}{2}})})^{\dag} ,
    \end{equation}
    such that ${\rm tr}(X_* \hat D_0 (X_*)^T) \leq {\rm tr}(X_* D_0 (X_*)^T)$. 
    Since ${\rm tr}(X_* D_0 (X_*)^T) \leq {\rm tr}(X_* \hat D_0 (X_*)^T)$, 
    ${\rm tr}(X_* D_0 (X_*)^T) = {\rm tr}(X_* \hat D_0 (X_*)^T)$. 
    In other words, $(X_*, \hat D_0)$ is also an optimal solution of problem (\ref{obj_no_graph}). 
    
    Furthermore, the optimal solution of the sub-problem should be the optimal solution of problem (\ref{obj_no_graph}). 
    
    Now, we need to show the following equation,
    \begin{equation}
        \|X_*\|_*^2 = \min \limits_{X \odot P = M} \|X\|_*^2  .
    \end{equation}
    Suppose that $\tilde X = \arg \min \limits_{X \odot P = M} \|X\|_*^2$.
    Since $X_* \odot P = M$, 
    $\|X_*\|_*^2 \geq  \|\tilde X\|_*^2$.
    Similarly, if we set 
    \begin{equation}
        \tilde D = (\frac{(\tilde X^T \tilde X)^\frac{1}{2}}{{\rm tr}((\tilde X^T \tilde X)^{\frac{1}{2}})})^{\dag} ,
    \end{equation}
    then $(\tilde X, \tilde D)$ is a solution of the sub-problem. 
    According to Theorem \ref{theo_solution_D}, we have $\|X_*\|_*^2 \leq \|\tilde X\|_*^2$. 
    Therefore, we have proven $\|\tilde X\|_*^2 = \|X_*\|_*^2$, 
    which means $X_* = \tilde X$.
    
    Hence, the theorem is proved. 
\end{proof}

\subsection{Proof of Theorem \ref{theo_graph}}
\begin{proof}

    Let $l_{ij} = \|\bm x_i - \bm x_j\|_2^2$, and the problem to solve is 
    \begin{equation}
        \begin{split}
            & \min \limits_{S \textbf{1} = \textbf{1}, S \geq 0} {\rm tr}(X L X^T) + \|\bm \mu^T S\|_F^2 \\
            \Leftrightarrow & \min \limits_{S \textbf{1} = \textbf{1}, S \geq 0} \sum \limits_{i}^n \sum \limits_{j}^n l_{ij} S_{ij} + \mu_i^2 \sum \limits_{j}^n S_{ij}^2 .
        \end{split}
    \end{equation}

And the problem is equivalent to solve the following $n$ subproblems individually, 
\begin{equation}
    \label{obj_alpha}
    \min \limits_{\bm s^i \textbf{1} = 1, \bm s^i \geq 0} \sum \limits_{j}^n l_{ij} S_{ij} + \mu_i^2 \sum \limits_{j} S_{ij}^2 .
\end{equation}
More abstractly, every subproblem is equivalent to
\begin{equation}
    \min \limits_{\bm \alpha^T \textbf{1}_n = 1, \bm \alpha \geq 0} \|\bm \alpha + \frac{\bm f}{2 \lambda}\|_2^2 .
\end{equation}
Similarly, the Lagrangian of the above equation is
\begin{equation}
    \mathcal{L}_\alpha  = \|\bm \alpha + \frac{\bm f}{2 \lambda}\|_2^2 + \xi (1 - \sum \limits_{i=1}^n \alpha_i) - \sum \limits_{i=1}^n \beta_i \alpha_i ,
\end{equation}
where $\xi$ and $\beta_i$ is Lagrangian variables. The KKT conditions are given as
\begin{equation}
    \left\{
    \begin{array}{l}
        \frac{\partial \mathcal{L}_\alpha}{\partial \alpha_i} = \alpha_i + \frac{f_i}{2 \lambda} - \xi - \beta_i = 0 \\
        \beta_i \alpha_i = 0 \\
        \sum \limits_{i=1}^n \alpha_i = 1, \beta_i \geq 0, \alpha_i \geq 0 .
    \end{array}
    \right.
\end{equation}
Then we consider the following cases
\begin{equation}
    \left\{
    \begin{array}{l}
        \alpha_i = 0 \Rightarrow \xi - \frac{f_i}{2 \lambda} = - \beta_i \leq 0 \\
        \alpha_i \geq 0 \Rightarrow \alpha_i = \xi - \frac{f_i}{2 \lambda} ,
    \end{array}
    \right.
\end{equation}
which means
\begin{equation}
    \alpha_i = (\xi - \frac{f_i}{2 \lambda})_+ .
\end{equation}
Without loss of generality, assume that $f_1 \leq f_2 \leq \cdots \leq f_n$. If $\xi - \frac{f_{k+1}}{2 \lambda} \leq 0 < \xi - \frac{f_{k}}{2 \lambda}$, then $\alpha$ is $k$-sparse. Due to $\sum \limits_{i=1}^n \alpha_i = 1$, we have
\begin{equation}
    \sum \limits_{i=1}^n \alpha_i = k \xi - \sum \limits_{i=1}^k \frac{f_i}{2 \lambda} = 1 ,
\end{equation}
which means
\begin{equation}
    \xi = \frac{1}{2 k \lambda} \sum \limits_{i=1}^k f_i + \frac{1}{k} .
\end{equation}
Combine with our assumption and we have
\begin{equation}
    \begin{array}{l}
     \frac{1}{2 k \lambda} \sum \limits_{i=1}^k f_i + \frac{1}{k} - \frac{f_{k+1}}{2 \lambda} \leq  0 <  \frac{1}{2 k \lambda} \sum \limits_{i=1}^k f_i + \frac{1}{k} - \frac{f_{k}}{2 \lambda} \\
    \Rightarrow  \frac{f_k}{2 \lambda} <  \frac{1}{2 k \lambda} \sum \limits_{i=1}^k f_i + \frac{1}{k}  \leq \frac{f_{k+1}}{2 \lambda} \\
    \Rightarrow \frac{k f_k}{2} - \frac{1}{2} \sum \limits_{i=1}^k f_i < \lambda \leq \frac{k f_{k+1}}{2} - \frac{1}{2} \sum \limits_{i=1}^k f_i .
    \end{array}
\end{equation}
Hence, if $\lambda$ is set within the above range, then $\bm \alpha$ will be $k$-sparse.  In other words, $\lambda$ is converted into the amount of neighbors $k$. In classification tasks, as the amount of $k$ frequently takes a small proportion, $k$ is set as a large value and named as the number of activation samples. 

If we simply set $\lambda$ as its upper bound, \textit{i.e.,} $\lambda = \frac{k f_{k+1}}{2} - \frac{1}{2} \sum \limits_{i=1}^k f_i$, we have
\begin{equation}
    \label{update_alpha}
    \begin{split}
        \alpha_i & = (\frac{\sum \limits_{i=1}^k f_i + 2\lambda}{2 k \lambda} - \frac{f_i}{2 \lambda})_+  = (\frac{f_{k+1} - f_i}{k f_{k+1} - \sum \limits_{i=1}^k f_i})_+ .
    \end{split}
\end{equation}

Hence, the theorem is proved. 

\end{proof}

\bibliographystyle{IEEEtran.bst}

\bibliography{mc_can}

\begin{thebibliography}{10}
\providecommand{\url}[1]{#1}
\csname url@samestyle\endcsname
\providecommand{\newblock}{\relax}
\providecommand{\bibinfo}[2]{#2}
\providecommand{\BIBentrySTDinterwordspacing}{\spaceskip=0pt\relax}
\providecommand{\BIBentryALTinterwordstretchfactor}{4}
\providecommand{\BIBentryALTinterwordspacing}{\spaceskip=\fontdimen2\font plus
\BIBentryALTinterwordstretchfactor\fontdimen3\font minus
  \fontdimen4\font\relax}
\providecommand{\BIBforeignlanguage}[2]{{%
\expandafter\ifx\csname l@#1\endcsname\relax
\typeout{** WARNING: IEEEtran.bst: No hyphenation pattern has been}%
\typeout{** loaded for the language `#1'. Using the pattern for}%
\typeout{** the default language instead.}%
\else
\language=\csname l@#1\endcsname
\fi
#2}}
\providecommand{\BIBdecl}{\relax}
\BIBdecl

\bibitem{image_restore}
M.~{Aharon}, M.~{Elad}, and A.~{Bruckstein}, ``K-svd: An algorithm for
  designing overcomplete dictionaries for sparse representation,'' \emph{IEEE
  Transactions on Signal Processing}, vol.~54, no.~11, pp. 4311--4322, 2006.

\bibitem{MC-literature-1}
K.-C. Toh and S.~Yun, ``An accelerated proximal gradient algorithm for nuclear
  norm regularized linear least squares problems,'' \emph{Pacific Journal of
  optimization}, vol.~6, no. 615-640, p.~15, 2010.

\bibitem{application_radar}
S.~{Sodagari}, A.~{Khawar}, T.~C. {Clancy}, and R.~{McGwier}, ``A projection
  based approach for radar and telecommunication systems coexistence,'' in
  \emph{2012 IEEE Global Communications Conference (GLOBECOM)}, 2012, pp.
  5010--5014.

\bibitem{nuclear}
E.~J. Cand{\`e}s and B.~Recht, ``Exact matrix completion via convex
  optimization,'' \emph{Foundations of Computational mathematics}, vol.~9,
  no.~6, p. 717, 2009.

\bibitem{nuclear-2}
E.~J. Cand{\`e}s and T.~Tao, ``The power of convex relaxation: Near-optimal
  matrix completion,'' \emph{IEEE Transactions on Information Theory}, vol.~56,
  no.~5, pp. 2053--2080, 2010.

\bibitem{application_inpainting}
D.~Zhang, Y.~Hu, J.~Ye, X.~Li, and X.~He, ``Matrix completion by truncated
  nuclear norm regularization,'' in \emph{2012 IEEE Conference on Computer
  Vision and Pattern Recognition}.\hskip 1em plus 0.5em minus 0.4em\relax IEEE,
  2012, pp. 2192--2199.

\bibitem{ImageRecovery}
M.~Bertalm{\'{\i}}o, G.~Sapiro, V.~Caselles, and C.~Ballester, ``Image
  inpainting,'' in \emph{Proceedings of the 27th Annual Conference on Computer
  Graphics and Interactive Techniques, {SIGGRAPH} 2000, New Orleans, LA, USA,
  July 23-28, 2000}.\hskip 1em plus 0.5em minus 0.4em\relax {ACM}, 2000, pp.
  417--424.

\bibitem{application_recommendation}
A.~Ramlatchan, M.~Yang, Q.~Liu, M.~Li, J.~Wang, and Y.~Li, ``A survey of matrix
  completion methods for recommendation systems,'' \emph{Big Data Mining and
  Analytics}, vol.~1, no.~4, pp. 308--323, 2018.

\bibitem{RPCA}
E.~J. Cand{\`e}s, X.~Li, Y.~Ma, and J.~Wright, ``Robust principal component
  analysis?'' \emph{Journal of the ACM (JACM)}, vol.~58, no.~3, pp. 1--37,
  2011.

\bibitem{TensorRPCA}
C.~Lu, J.~Feng, Y.~Chen, W.~Liu, Z.~Lin, and S.~Yan, ``Tensor robust principal
  component analysis: Exact recovery of corrupted low-rank tensors via convex
  optimization,'' in \emph{Proceedings of the IEEE conference on computer
  vision and pattern recognition}, 2016, pp. 5249--5257.

\bibitem{RPCA-Application}
T.~Bouwmans, S.~Javed, H.~Zhang, Z.~Lin, and R.~Otazo, ``On the applications of
  robust pca in image and video processing,'' \emph{Proceedings of the IEEE},
  vol. 106, no.~8, pp. 1427--1457, 2018.

\bibitem{MTL-1}
T.~K. Pong, P.~Tseng, S.~Ji, and J.~Ye, ``Trace norm regularization:
  Reformulations, algorithms, and multi-task learning,'' \emph{{SIAM} J.
  Optim.}, vol.~20, no.~6, pp. 3465--3489, 2010.

\bibitem{MTL-2}
A.~Argyriou, T.~Evgeniou, and M.~Pontil, ``Multi-task feature learning,'' in
  \emph{Proceedings of the Twentieth Annual Conference on Neural Information
  Processing Systems}, 2006, pp. 41--48.

\bibitem{FMC-MTL}
R.~{Zhang}, H.~{Zhang}, and X.~{Li}, ``Robust multi-task learning with flexible
  manifold constraint,'' \emph{IEEE Transactions on Pattern Analysis and
  Machine Intelligence}, pp. 1--1, 2020.

\bibitem{TNNR}
Y.~Hu, D.~Zhang, J.~Ye, X.~Li, and X.~He, ``Fast and accurate matrix completion
  via truncated nuclear norm regularization,'' \emph{IEEE transactions on
  pattern analysis and machine intelligence}, vol.~35, no.~9, pp. 2117--2130,
  2012.

\bibitem{WNNM}
S.~Gu, Q.~Xie, D.~Meng, W.~Zuo, X.~Feng, and L.~Zhang, ``Weighted nuclear norm
  minimization and its applications to low level vision,'' \emph{International
  journal of computer vision}, vol. 121, no.~2, pp. 183--208, 2017.

\bibitem{Sp-norm-solution-1}
D.~Krishnan and R.~Fergus, ``Fast image deconvolution using hyper-laplacian
  priors,'' \emph{Advances in neural information processing systems}, vol.~22,
  pp. 1033--1041, 2009.

\bibitem{Sp-norm-solution-2}
Z.~Xu, X.~Chang, F.~Xu, and H.~Zhang, ``$ l\_ $\{$1/2$\}$ $ regularization: A
  thresholding representation theory and a fast solver,'' \emph{IEEE
  Transactions on neural networks and learning systems}, vol.~23, no.~7, pp.
  1013--1027, 2012.

\bibitem{Sp-norm-solution-3}
W.~Zuo, D.~Meng, L.~Zhang, X.~Feng, and D.~Zhang, ``A generalized iterated
  shrinkage algorithm for non-convex sparse coding,'' in \emph{Proceedings of
  the IEEE international conference on computer vision}, 2013, pp. 217--224.

\bibitem{log}
F.~Nie, Z.~Hu, and X.~Li, ``Matrix completion based on non-convex low-rank
  approximation,'' \emph{IEEE Transactions on Image Processing}, vol.~28,
  no.~5, pp. 2378--2388, 2018.

\bibitem{extra_surrogate}
J.~Fan and R.~Li, ``Variable selection via nonconcave penalized likelihood and
  its oracle properties,'' \emph{Journal of the American statistical
  Association}, vol.~96, no. 456, pp. 1348--1360, 2001.

\bibitem{F-nuclear-application-1}
N.~Srebro, J.~Rennie, and T.~S. Jaakkola, ``Maximum-margin matrix
  factorization,'' in \emph{Advances in neural information processing systems},
  2005, pp. 1329--1336.

\bibitem{F-nulcear-application-2}
N.~Srebro and R.~R. Salakhutdinov, ``Collaborative filtering in a non-uniform
  world: Learning with the weighted trace norm,'' in \emph{Advances in Neural
  Information Processing Systems}, 2010, pp. 2056--2064.

\bibitem{RegL1}
Y.~Zheng, G.~Liu, S.~Sugimoto, S.~Yan, and M.~Okutomi, ``Practical low-rank
  matrix approximation under robust l 1-norm,'' in \emph{2012 IEEE Conference
  on Computer Vision and Pattern Recognition}.\hskip 1em plus 0.5em minus
  0.4em\relax IEEE, 2012, pp. 1410--1417.

\bibitem{FGSR}
J.~Fan, L.~Ding, Y.~Chen, and M.~Udell, ``Factor group-sparse regularization
  for efficient low-rank matrix recovery,'' in \emph{Advances in Neural
  Information Processing Systems}, 2019, pp. 5105--5115.

\bibitem{bilinear}
F.~Shang, J.~Cheng, Y.~Liu, Z.-Q. Luo, and Z.~Lin, ``Bilinear factor matrix
  norm minimization for robust pca: Algorithms and applications,'' \emph{IEEE
  transactions on pattern analysis and machine intelligence}, vol.~40, no.~9,
  pp. 2066--2080, 2017.

\bibitem{mc_graph}
Z.~Zhao, L.~Zhang, X.~He, and W.~Ng, ``Expert finding for question answering
  via graph regularized matrix completion,'' \emph{IEEE Transactions on
  Knowledge and Data Engineering}, vol.~27, no.~4, pp. 993--1004, 2014.

\bibitem{mc_graph_1}
K.~Ji, J.~Tan, J.~Xu, and Y.~Chi, ``Learning latent features with pairwise
  penalties in low-rank matrix completion,'' \emph{IEEE Transactions on Signal
  Processing}, vol.~68, pp. 4210--4225, 2020.

\bibitem{mc_graph_2}
\BIBentryALTinterwordspacing
N.~Rao, H.~Yu, P.~Ravikumar, and I.~S. Dhillon, ``Collaborative filtering with
  graph information: Consistency and scalable methods,'' in \emph{Advances in
  Neural Information Processing Systems 28: Annual Conference on Neural
  Information Processing Systems 2015, December 7-12, 2015, Montreal, Quebec,
  Canada}, C.~Cortes, N.~D. Lawrence, D.~D. Lee, M.~Sugiyama, and R.~Garnett,
  Eds., 2015, pp. 2107--2115. [Online]. Available:
  \url{https://proceedings.neurips.cc/paper/2015/hash/f4573fc71c731d5c362f0d7860945b88-Abstract.html}
\BIBentrySTDinterwordspacing

\bibitem{LRFD}
G.~Liu and P.~Li, ``Low-rank matrix completion in the presence of high
  coherence,'' \emph{IEEE Transactions on Signal Processing}, vol.~64, no.~21,
  pp. 5623--5633, 2016.

\bibitem{S3LR}
C.-G. Li and R.~Vidal, ``A structured sparse plus structured low-rank framework
  for subspace clustering and completion,'' \emph{IEEE Transactions on Signal
  Processing}, vol.~64, no.~24, pp. 6557--6570, 2016.

\bibitem{ADMM}
S.~Boyd, N.~Parikh, E.~Chu, B.~Peleato, J.~Eckstein \emph{et~al.},
  ``Distributed optimization and statistical learning via the alternating
  direction method of multipliers,'' \emph{Foundations and
  Trends{\textregistered} in Machine learning}, vol.~3, no.~1, pp. 1--122,
  2011.

\bibitem{re-weighted-1}
F.~Nie, H.~Huang, X.~Cai, and C.~H. Ding, ``Efficient and robust feature
  selection via joint l2, 1-norms minimization,'' in \emph{Advances in neural
  information processing systems}, 2010, pp. 1813--1821.

\bibitem{re-weighted-2}
L.~Liu, W.~Huang, and D.-R. Chen, ``Exact minimum rank approximation via
  schatten p-norm minimization,'' \emph{Journal of Computational and Applied
  Mathematics}, vol. 267, pp. 218--227, 2014.

\bibitem{survey}
X.~P. Li, L.~Huang, H.~C. So, and B.~Zhao, ``A survey on matrix completion:
  Perspective of signal processing,'' \emph{arXiv preprint arXiv:1901.10885},
  2019.

\bibitem{ALM}
Z.~Lin, M.~Chen, and Y.~Ma, ``The augmented lagrange multiplier method for
  exact recovery of corrupted low-rank matrices,'' \emph{arXiv preprint
  arXiv:1009.5055}, 2010.

\bibitem{LADMM}
Z.~Lin, R.~Liu, and Z.~Su, ``Linearized alternating direction method with
  adaptive penalty for low-rank representation,'' in \emph{Advances in neural
  information processing systems}, 2011, pp. 612--620.

\bibitem{Schatten-p}
F.~Nie, H.~Huang, and C.~Ding, ``Low-rank matrix recovery via efficient
  schatten p-norm minimization,'' in \emph{Twenty-sixth AAAI conference on
  artificial intelligence}, 2012.

\bibitem{RecoveryBounds-2}
N.~Srebro and A.~Shraibman, ``Rank, trace-norm and max-norm,'' in
  \emph{International Conference on Computational Learning Theory}.\hskip 1em
  plus 0.5em minus 0.4em\relax Springer, 2005, pp. 545--560.

\bibitem{spectral_graph_theory}
F.~R. Chung and F.~C. Graham, \emph{Spectral graph theory}.\hskip 1em plus
  0.5em minus 0.4em\relax American Mathematical Soc., 1997, no.~92.

\bibitem{self-supervised}
C.~Doersch, A.~Gupta, and A.~A. Efros, ``Unsupervised visual representation
  learning by context prediction,'' in \emph{Proceedings of the IEEE
  International Conference on Computer Vision}, 2015, pp. 1422--1430.

\bibitem{SubspaceLearning}
E.~Elhamifar and R.~Vidal, ``Sparse subspace clustering: Algorithm, theory, and
  applications,'' \emph{IEEE Transactions on Pattern Analysis and Machine
  Intelligence}, vol.~35, no.~11, pp. 2765--2781, 2013.

\bibitem{msrc-v2}
J.~M. Winn and N.~Jojic, ``Locus: Learning object classes with unsupervised
  segmentation,'' in \emph{Proc International Conference on Computer Vision},
  2005, pp. 756--763.

\bibitem{movieLens}
F.~M. Harper and J.~A. Konstan, ``The movielens datasets: History and
  context,'' \emph{Acm transactions on interactive intelligent systems (tiis)},
  vol.~5, no.~4, pp. 1--19, 2015.

\bibitem{submatrix}
E.~Ju{\'a}rez-Ruiz, R.~Cort{\'e}s-Maldonado, and F.~P{\'e}rez-Rodr{\'\i}guez,
  ``Relationship between the inverses of a matrix and a submatrix,''
  \emph{Computaci{\'o}n y Sistemas}, vol.~20, no.~2, pp. 251--262, 2016.

\bibitem{convex-optimization}
S.~Boyd, S.~P. Boyd, and L.~Vandenberghe, \emph{Convex optimization}.\hskip 1em
  plus 0.5em minus 0.4em\relax Cambridge university press, 2004.

\end{thebibliography}

\begin{IEEEbiographynophoto}{Xuelong Li} (M'02-SM'07-F'12) 
    is a Full Professor with the School of Artificial Intelligence, OPtics and ElectroNics (iOPEN), Northwestern Polytechnical University, Xi'an, China. 
\end{IEEEbiographynophoto}

\begin{IEEEbiography}[{\includegraphics[width=1in,height=1.25in,clip,keepaspectratio]{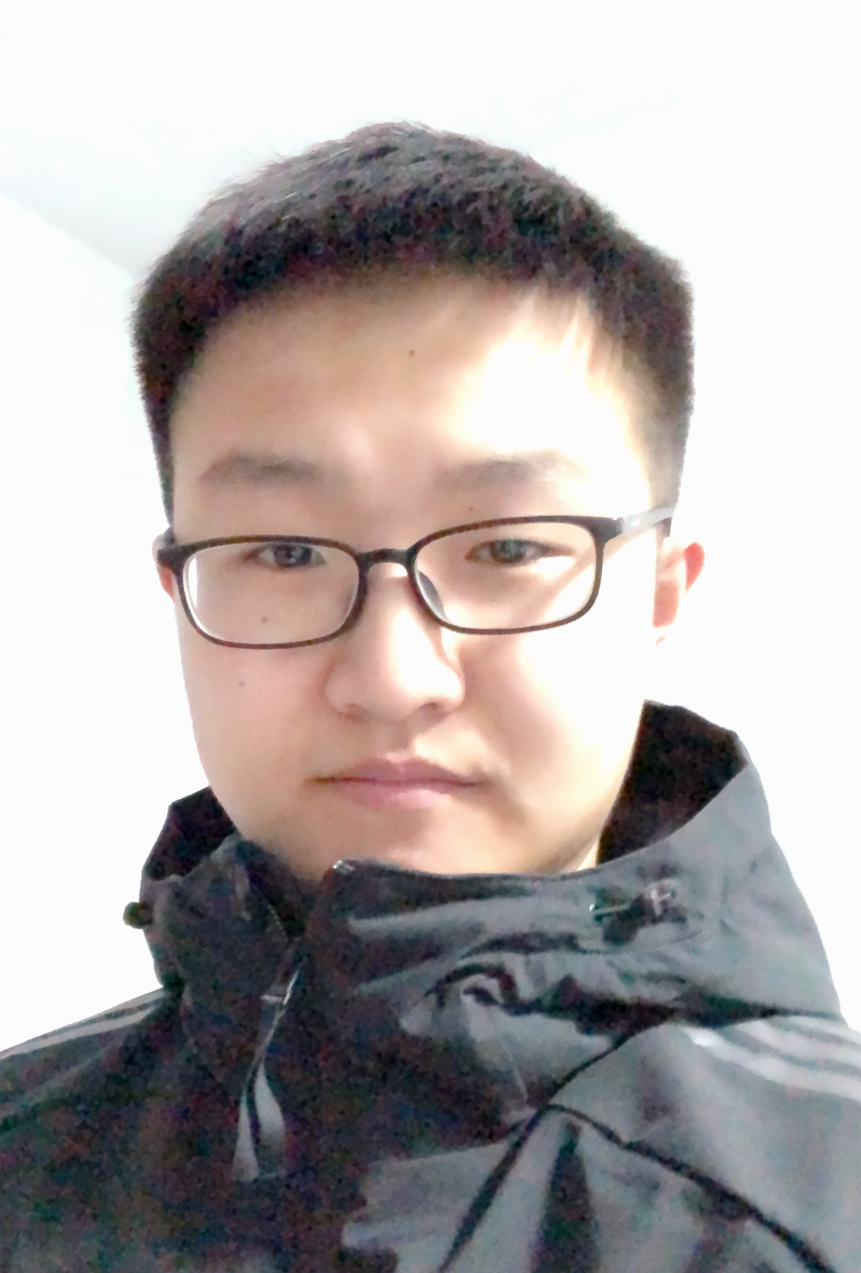}}]{Hongyuan Zhang}
    received the B.E. degree in software engineering from Xidian University, Xi'an, China in 2019. 
    He is currently pursuing the Ph.D. degree from the School of Computer Science and the School of Artificial Intelligence, OPtics and ElectroNics (iOPEN), Northwestern Polytechnical University, Xi'an, China. 
\end{IEEEbiography}
    
\begin{IEEEbiography}[{\includegraphics[width=1in,height=1.25in,clip,keepaspectratio]{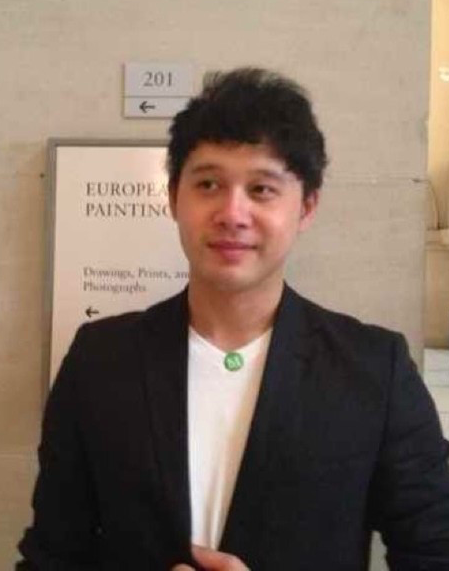}}]{Rui Zhang} (M'19)
    received the Ph.D degree in computer science at Northwestern Polytechnical University, Xi'an, China in 2018. 
    He currently serves as an Associate Professor with the School of Artificial Intelligence, OPtics and ElectroNics (iOPEN), Northwestern Polytechnical University, Xi'an, China.
\end{IEEEbiography}

\end{appendices}

\end{document}